\newcommand*\diff{\mathop{}\!\mathrm{d}}
\newcommand{\bg}{\boldsymbol{g}}
\newcommand{\bx}{\boldsymbol{x}}
\newcommand{\bu}{\boldsymbol{u}}
\newcommand{\by}{\boldsymbol{y}}
\newcommand{\br}{\boldsymbol{r}}
\newcommand{\bI}{\boldsymbol{I}}
\newcommand{\bX}{\boldsymbol{X}}
\newcommand{\bxi}{\boldsymbol{\xi}}
\newcommand{\bz}{\boldsymbol{z}}
\newcommand{\bv}{\boldsymbol{v}}
\newcommand{\bzero}{\boldsymbol{0}}
\newcommand{\sA}{\mathcal{A}}
\newcommand{\sF}{\mathcal{F}}
\newcommand{\sX}{\mathcal{X}}
\newcommand{\sS}{\mathcal{S}}
\newcommand{\sN}{\mathcal{N}}
\newcommand{\sE}{\mathcal{E}}
\newcommand{\sH}{\mathcal{H}}
\newcommand{\sG}{\mathcal{G}}
\newcommand{\argmin}{\mathop{\mathrm{argmin}}}
\newcommand{\argmax}{\mathop{\mathrm{argmax}}}
\newcommand{\field}[1]{\mathbb{#1}}
\newcommand{\fP}{\field{P}}
\newcommand{\R}{\field{R}}
\newcommand{\A}{\field{A}}
\newcommand{\mO}{\field{O}}
\newcommand{\E}{\field{E}}
\newcommand{\Q}{\field{Q}}
\newcommand{\norm}[1]{\left\|{#1}\right\|}
\newcommand{\scO}{\mathcal{O}}
\DeclareMathOperator{\KL}{KL}
\DeclareMathOperator{\atantwo}{atan2}
\newtheorem{theorem}{Theorem}[section]
\newtheorem{definition}[theorem]{Definition}
\newtheorem{lemma}[theorem]{Lemma}
\newtheorem*{remark}{Remarks}
\DeclarePairedDelimiter{\ceil}{\lceil}{\rceil}
\newcommand{\vertiii}[1]{{\left\vert\kern-0.25ex\left\vert\kern-0.25ex\left\vert #1 
		\right\vert\kern-0.25ex\right\vert\kern-0.25ex\right\vert}}
\DeclarePairedDelimiter\floor{\lfloor}{\rfloor}
\newcommand{\abs}[1]{\left\lvert#1\right\rvert}
\newcommand{\RSI}{\mathrm{RSI}}
\newcommand{\QG}{\mathrm{QG}}
\newcommand{\QC}{\mathrm{QC}}
\title{New Lower Bounds for Stochastic Non-Convex Optimization through Divergence Decomposition}
\author{El Mehdi Saad}
\author{Wei-Cheng Lee}
\author{Francesco Orabona}
\date{}
\affil{King Abdullah University of Science and Technology (KAUST)
Thuwal, 23955-6900, Kingdom of Saudi Arabia}
\affil[ ]{mehdi.saad@kaust.edu.sa, weicheng.lee@kaust.edu.sa, francesco@orabona.com}
\begin{document}
	
	\maketitle
	
	\begin{abstract}%
		We study fundamental limits of first-order stochastic optimization in a range of non-convex settings, including \(L\)-smooth functions satisfying Quasar-Convexity (QC), Quadratic Growth (QG), and Restricted Secant Inequalities (RSI). While the convergence properties of standard algorithms are well-understood in deterministic regimes, significantly fewer results address the stochastic case, where only unbiased and noisy gradients are available. We establish new lower bounds on the number of noisy gradient queries to minimize these classes of functions, also showing that they are tight (up to a logarithmic factor) in all the relevant quantities characterizing each class.
		Our approach reformulates the optimization task as a function identification problem, 
		leveraging \emph{divergence decomposition} arguments to construct a challenging subclass that leads to sharp lower bounds. 
		Furthermore, we present a specialized algorithm in the one-dimensional setting that achieves faster rates, suggesting that certain dimensional thresholds are intrinsic to the complexity of non-convex stochastic optimization.
	\end{abstract}
	
	\begin{keywords}%
		Non-convex optimization, stochastic first-order optimization, oracle complexity, minimax lower bounds
	\end{keywords}
	
	\section{Overview and Main Contributions}\label{sec:intro}	
	In recent years, the optimization community has increasingly shifted its attention toward non-convex optimization problems. One strategy for addressing the intrinsic complexity of such problems is to focus on broad function classes that exhibit particular structural properties, such as Quasar-Convexity (QC), Restricted Secant Inequalities (RSI), the Polyak-Łojasiewicz (PL) condition, and Error Bound (EB) properties. Although the theoretical guarantees of first-order methods for these function classes are relatively well-understood under deterministic assumptions (i.e., when exact gradients are available), considerably less attention has been devoted to the stochastic setting, where only noisy gradient estimates are provided. In this paper, we study the task of minimizing a differentiable function \( f : \R^d \to \R \)
	\begin{equation}\label{eq:pb}
		\min_{\bx \in \sX}\  f(\bx),
	\end{equation}
	where \(\sX\) is either \(\R^d\) or a compact, convex subset of \(\R^d\), depending on the function class under consideration. The optimal value \( f^* \coloneqq \min_{\bx \in \sX} f(\bx) \) is assumed to be finite and the set of global minimizers, denoted by $\sX^*$, to be convex. The objective is to identify a point \(\bx^*\) in the set of minimizers, \(\bx^* \in \arg \min_{\bx \in \sX} f(\bx)\). Our focus is on algorithms that proceed via iterative queries to an oracle supplying information about \(f\) in the form of stochastic unbiased gradients. Given a function class \(\sF\) and a fixed budget of (potentially noisy) gradient evaluations, we aim to establish lower bounds on the achievable optimization error, quantified by the suboptimality gap $f(\hat{\bx}) - f^*$, where \(\hat{\bx}\) is the solution returned by the algorithm.
	
	We consider the classes of $L$-smooth functions\footnote{We say a function is $L$-smooth if the gradient is $L$-Lipschitz, i.e., $\norm{\nabla f(x)-\nabla f(y)}\leq L\norm{x-y}$.} satisfying one or a combination of the following properties:\footnote{In this paper $\norm{\cdot}$ and $\langle \cdot, \cdot \rangle$ denote the Euclidean norm and scalar product respectively. $B_d(\bx, r)$ denotes the ball centered at $\bx$ with radius $r$.} Quasar-Convexity ($\QC$), Quadratic Growth ($\QG$), and Restricted-Secant-Inequality ($\RSI$) with $\tau \in (0,1]$ and $\mu >0$:
	\begin{align*}
		\left(\tau-\QC\right)&: \forall \bx \in \R^d \quad	f(\bx) - f^* \le \frac{1}{\tau} \langle \nabla f(\bx) , \bx - \bx_p \rangle\\
		\left(\mu-\QG\right)&: \forall \bx \in \R^d \quad	f(\bx) - f^* \ge \frac{\mu}{2} \norm{\bx - \bx_p}^2\\
		\left(\mu-\RSI\right)&: \forall \bx \in \R^d \quad	\langle \nabla f(\bx), \bx-\bx_p \rangle \ge \mu \norm{\bx - \bx_p}^2,
	\end{align*}
	where $\bx_p$ is the projection of $\bx$ onto the set of global minimizers of $f$. 
	
	Given a function class $\sF$, an algorithm $\sA$ that solves this problem interacts iteratively with a first-order oracle, sequentially querying noisy gradients at points in $\sX$. In this work, our main objective is to establish lower bounds on the information-based complexity, specifically focusing on how many noisy gradient evaluations are needed to achieve a given accuracy for any function in the considered class. Moreover, we aim at obtaining lower bounds that are tight in all the relevant parameters of each function class (up to logarithmic terms).
	
	Our approach is motivated by techniques in the statistical literature for deriving minimax lower bounds on the risks of estimation procedures~\citep{tsybakov2003introduction}. We reformulate the optimization task as one of function identification by constructing a finite set of functions. 
	The challenge in identifying the correct function arises from the relative entropy—more precisely, the Kullback-Leibler divergence—between the feedback distributions corresponding to any pair of distinct functions. Ideally, this divergence should be minimized. However, to obtain tighter performance guarantees, the functions must be sufficiently separated so that misidentifying the underlying objective leads to a significant error. The function construction is therefore carefully chosen to balance these competing requirements.

	For the class of convex functions, lower bounds are established using quadratic and piece-wise linear functions that ensure a relative entropy between the distributions of oracle feedbacks at a given point that is uniform (independent of the queried point)~\citep{nemirovskij1983problem, agarwal2009information, raginsky2011information}. Such constructions suffice to obtain sharp lower bounds for convex functions. 
	
	In contrast, the function classes we consider here offer greater flexibility in constructing a ``difficult'' subclass of functions. Specifically, the non-convexity allows us to create functions whose curvature varies across different regions, leading to a relative entropy between the oracle feedback distributions that depend on where the query is made. We leverage this by designing functions that exhibit large curvature locally around the minimizer and smaller curvature away from it. Such a construction guarantees that misidentifying the function incurs a substantial error due to the large local curvature, while the overall relative entropy between feedback distributions remains limited because the algorithm must query near the minimizer (unknown a priori) in order to observe a higher relative entropy compared to other regions.
	This intuition is captured by the following result, known as \emph{divergence decomposition} in the bandits/active learning literature~\citep{lattimore2020bandit}. To the best of our knowledge, this technique has not been previously leveraged in the analysis of stochastic first-order optimization, and its application here provides a novel way to derive lower bounds.
	
	\paragraph{Divergence Decomposition.}
	Let $\mathbb{P}_0^T$ and $\mathbb{P}_1^T$ denote the probability distribution of the $T$ feedbacks observed by the learner under two different objective functions $f_0$ and $f_1$, and let $\E_0[\cdot]$ be the expectation with respect to $\mathbb{P}_0$. Let $\mathbb{P}_{i,\bx}$, for $i \in \{0,1\}$, be the distribution of the oracle feedback when querying $\bx \in \sX$ with the objective function $f_i$, assumed constant across rounds. Define $R \subset \sX$ and let $N_R$ be the number of rounds in which the algorithm queries a point $\bx \in R$. Then, we have\footnote{Formal definitions and a proof for this bound are presented in Section~\ref{sec:can_model}.}
	\[
	\text{KL}\left(\mathbb{P}_0^T, \mathbb{P}_1^T\right)
	\le 
	\E_0[N_{R}] \,\sup_{\bx \in R}\text{KL}\left(\mathbb{P}_{0,\bx}, \mathbb{P}_{1,\bx}\right)
	+ \E_0[T-N_{R}]\,
	\sup_{\bx \notin R} \text{KL}\left(\mathbb{P}_{0,\bx}, \mathbb{P}_{1,\bx}\right)~.
	\]
	
	We choose $R$ to be a ball centered at the minimizer of the objective function, where the relative entropy is large according to our construction. Because the environment selects the objective function uniformly at random from the constructed subclass, the average contribution of relative entropy in the region near the minimizer can be kept small relative to its contribution outside this region. In other words, there is a minimal cost for slightly larger relative entropy across all rounds, while the penalty from misidentifying the objective function remains significant. However, to ensure this argument holds, the subclass must be sufficiently large, and the regions $R$ around the functions' minimizers must be disjoint. Achieving this requires that the ambient dimension $d$ exceed a certain threshold determined by the problem parameters.

	\paragraph{Main Results.}
	We tailor the above analysis to different classes of non-convex functions, yielding the following new results. Throughout, $\sigma^2$ denotes the bound on the variance of the stochastic gradients provided by the oracle. The bounds apply to any randomized algorithm whose query at round $t$ can depend on all past observations and the algorithm’s independent internal randomness. The reported solution $\hat{\bx}$ may be any measurable function of the complete history and need not coincide with the final iterate.
	\begin{theorem}{(Informal)}
		Let $\sX = B_d(\bzero, D)$. Suppose that $d = \Omega\left(\log(2/\tau)\right) $ and $T$ exceeds a threshold depending on $\sigma, L,$ and $D$. For any optimization algorithm having access to unbiased stochastic gradient with variance bounded by $\sigma^2$, there exists an $L$-smooth, $\tau$-QC function $f$ such that
		\[
		\E[f(\hat{\bx})]-f^* = \Omega\left( \frac{D\sigma}{\tau\sqrt{\ln(2/\tau) T}} \right)~.
		\]
	\end{theorem}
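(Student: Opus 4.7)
The plan is to reduce the optimization problem to a multi-hypothesis identification problem and then apply the divergence composition inequality stated in the introduction together with Fano's inequality. Let $\mathbb{P}_j^T$ denote the law of the full transcript of $T$ noisy gradient queries when the underlying objective is $f_j$, under Gaussian oracle noise of variance $\sigma^2$, so that the per-query KL divergence equals $\|\nabla f_j(\bx) - \nabla f_k(\bx)\|^2/(2\sigma^2)$.

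First, I would fix a $\delta$-packing $\{\bv_j\}_{j=1}^M$ inside $\sX = B_d(\bzero, D)$ with $\delta \asymp D$ and $M \asymp 2/\tau$; such a packing exists precisely when $d \gtrsim \ln(2/\tau)$, which matches the dimension assumption of the theorem. For each $\bv_j$ I would construct an $L$-smooth, $\tau$-QC function $f_j$ featuring a steep quadratic basin around $\bv_j$ inside a small ball $R_j = B_d(\bv_j, r)$ and a much shallower radial profile outside. A natural candidate is the template $f_j(\bx) = L a^2 h(\|\bx - \bv_j\|/a)$, where $h(s) = s^2/2$ for $s \leq 1$ and $h(s) = s^\tau/\tau + 1/2 - 1/\tau$ for $s \geq 1$; this profile is $L$-smooth and saturates the $\tau$-QC inequality in the far field. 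The essential properties to extract are that the gradient discrepancy $\|\nabla f_j - \nabla f_k\|$ is of order $L r$ inside the local regions while being negligible outside $\bigcup_j R_j$, and that the suboptimality at a wrong minimizer is lower bounded by $\epsilon_0 \asymp L a^{2-\tau}\delta^\tau/\tau$ dictated by the radial profile at the packing scale.

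Second, with $R = \bigcup_j R_j$, which is a disjoint union by the packing property, I would apply the divergence composition inequality. Since $\sum_j \E_0[N_{R_j}] \leq T$, an averaging argument produces some index $j^\star$ with $\E_0[N_{R_{j^\star}}] \leq T/M$; substituting the per-query KL bounds from the construction then yields
\[
\KL\!\left(\mathbb{P}_0^T, \mathbb{P}_{j^\star}^T\right) \;\lesssim\; \frac{T}{M}\cdot \frac{L^2 r^2}{\sigma^2} + T \cdot \frac{(\text{outside gap})^2}{\sigma^2},
\]
where the outside term is negligible by the shallow-profile design. Fano's inequality then requires $(T/M) L^2 r^2/\sigma^2 \gtrsim \ln M \asymp \ln(2/\tau)$ in order for the algorithm to reach suboptimality below $\epsilon_0/2$ with constant probability. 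Balancing this Fano threshold against the separation $\epsilon_0$ by optimizing the construction parameters $a$ and $r$ in the admissible range (which is non-empty precisely when $T$ exceeds the stated threshold and the packing fits inside $\sX$), and substituting $\delta \asymp D$, $M \asymp 1/\tau$, and $\ln M \asymp \ln(2/\tau)$, yields the announced rate $\Omega\!\left(D\sigma/(\tau\sqrt{T \ln(2/\tau)})\right)$.

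The principal obstacle is the first step. Quasar-convexity forces the gradient to point toward the global minimum everywhere, which precludes the standard trick of defining the hard instances as additive local perturbations of a common reference function; consequently, the gradients of $f_j$ and $f_k$ necessarily differ at every point, and one must design a radial profile that keeps this global discrepancy much smaller than the local one. Verifying the $\tau$-QC inequality at the junction between the steep quadratic basin and the concave far field, controlling the Hessian across this junction to secure $L$-smoothness, and extracting the sharp quantitative dependence of the gradient discrepancies on $r$, $a$, and $\delta$ to pin down the constants in the final bound constitute the bulk of the technical work.
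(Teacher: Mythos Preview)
Your framework---reduce to identification, pack $\Theta(1/\tau^2)$ well-separated centers, build radial functions with a steep basin and shallow far field, then apply divergence composition---is exactly the paper's approach. Two concrete points, however, do not go through as written.

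\textbf{Choice of reference.} You take one of the hypotheses as the reference law $\mathbb{P}_0$ and then pick $j^\star$ with $\E_0[N_{R_{j^\star}}]\le T/M$. The trouble is that, under $\mathbb{P}_0$, the algorithm is free to spend essentially all $T$ rounds near $\bv_0$, which lies \emph{outside} $R_{j^\star}$. At those points $\nabla f_0$ is of order $La$ while $\nabla f_{j^\star}$ points toward $\bv_{j^\star}$, so $\|\nabla f_0-\nabla f_{j^\star}\|$ is of the same order as the ``inside'' discrepancy, not negligible. Your outside term therefore does not vanish and in fact dominates. The paper avoids this by comparing every $f_i$ to the \emph{zero} reference function $G\equiv 0$ and using Pinsker averaged over $i$: then the outside discrepancy is simply $\|\nabla f_i\|$, which the construction makes small everywhere outside $R_i$.

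\textbf{Radial profile.} Your power-law template $h(s)=s^\tau/\tau$ gives a far-field gradient of magnitude $La(r/a)^{\tau-1}$, which at the packing scale $r\asymp D$ is $La^{2-\tau}D^{\tau-1}$. This is not ``negligible'' relative to the inside scale, and moreover the pairwise discrepancy between two such gradients pointing toward centers $\asymp D$ apart is of this same order (the angle is $O(1)$). The paper instead uses a piecewise profile: quadratic of width $\Delta$, then linear of slope $L\Delta$ out to radius $\asymp D$, then a short concave segment that drops the gradient norm to exactly $L(1-a)\Delta\le L\tau\Delta$ in the far field. Combined with the zero reference, the outside per-step KL is then $O(d_0 L^2\tau^2\Delta^2/\sigma^2)$, matching the inside contribution $O((d_0/M)L^2\Delta^2/\sigma^2)$ once $M\asymp 1/\tau^2$. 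The suboptimality at distance $\asymp D$ is $\asymp LD\Delta$ (from the long linear region), and optimizing $\Delta$ yields the stated rate.

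In short, you correctly flagged the principal obstacle, but the power-law profile together with an in-class reference does not resolve it; the missing ingredients are a flat (zero) reference and a piecewise far field whose gradient norm is explicitly $O(\tau)$ times the basin gradient.
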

	This lower bound is optimal up to a $\sqrt{\log(2/\tau)}$ factor, since Stochastic Gradient Descent (SGD) guarantees $\E[f(\hat{\bx})]-f^* = \scO\left(\frac{\sigma^2}{\tau \sqrt{T}}\right)$. Within the QC class, faster rates are possible when the objective function also satisfies the QG condition.
	\begin{theorem}{(Informal)}
		Let $\sX = \R^d$. Suppose that $d = \Omega\left(\log(2/\tau)\right)$. For any optimization algorithm having access to unbiased stochastic gradient with variance bounded by $\sigma^2$, there exists an $L$-smooth, $\tau$-QC, and $\mu$-QG function $f$ such that
		\[
		\E[f(\hat{\bx})]-f^* = \Omega\left(  \frac{\sigma^2}{\mu\tau^2\log(2/\tau)T}\right)~.
		\]
	\end{theorem}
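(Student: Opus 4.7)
The plan is to recast the optimization problem as an $N$-way hypothesis test and combine the divergence composition inequality from the Introduction with Fano's inequality, following the same template as in Theorem~1 but leveraging the extra structure provided by $\mu$-QG. I would construct $N$ alternative objectives $f_1,\dots,f_N$ that are $L$-smooth, $\tau$-QC and $\mu$-QG, whose unique minimizers $\bx_i^*\in\R^d$ form a $\delta$-packing with pairwise disjoint basins $R_i = B_d(\bx_i^*,r_0)$; with $N$ chosen polynomial in $1/\tau$, this packing fits in the ambient dimension $d=\Omega(\log(2/\tau))$, which is exactly the hypothesis of the theorem. Each $f_i$ would be a shifted radial profile designed so that its gradient in an outer shell saturates the minimum value $\tfrac{\tau\mu}{2}\|\bx-\bx_i^*\|$ imposed by the joint QC+QG constraint (so that observed gradients are $\tau$-suppressed there), while the inner basin $R_i$ carries the sharp $\mu$-quadratic growth required for the QG signal and a quadratic tail at large radii restores $\mu$-QG globally.

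Given this construction, the per-query KL splits into two regimes: $M_\text{out}\lesssim \tau^2\mu^2\delta^2/\sigma^2$ whenever the query lies in the shell of both alternatives (their $\tau$-scaled radial gradients then differ only by $\tau\mu(\bx_j^* - \bx_i^*)$), and $M_\text{in}\lesssim \mu^2\delta^2/\sigma^2$ inside a basin. Because the balls $R_j$ are disjoint, under hypothesis $i$ we have $\sum_{j\neq i}\E_i[N_{R_j}]\le T$, so some ``decoy'' $j^\star=j^\star(i)$ satisfies $\E_i[N_{R_{j^\star}}]\le T/(N-1)$. Applying the divergence composition inequality with $R = R_{j^\star}$ yields
\[
	\text{KL}\!\bigl(\mathbb{P}_i^T,\mathbb{P}_{j^\star}^T\bigr)\;\lesssim\;\tfrac{T}{N}\,M_\text{in}+T\,M_\text{out}.
\]
Balancing the two terms through the choice of $N$, $r_0$ and $\delta$, and applying Fano's inequality with $\log N \asymp \log(2/\tau)$, forces $\delta^2 \gtrsim \sigma^2/(T\mu^2\tau^2\log(2/\tau))$ for any algorithm whose identification error is bounded away from $1$. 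Since $\mu$-QG implies that any misidentification at separation $\ge \delta/2$ incurs suboptimality $\ge \mu\delta^2/8$, this directly yields the claimed $\Omega(\sigma^2/(\mu\tau^2\log(2/\tau)T))$ lower bound.

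The hardest step is the construction: realizing $\mu$-QG, $\tau$-QC and $L$-smoothness simultaneously while maintaining an outer shell over which the gradient is $\tau$-suppressed is delicate, because QG pins the function value at each radius and hence indirectly forces the gradient to track $\mu r$ except in regions where the inner basin has accumulated enough function value to be ``spent'' across the shell; the $L$-smoothness budget has to be used in a steep inner-to-outer transition, and the QC inequality has to be verified across that transition. A secondary subtlety, which is what ultimately produces the $\log(2/\tau)$ denominator rather than a cruder bound, is to ensure that the supremum of the per-query KL over the complement of $R_{j^\star}$ is truly of order $\tau^2\mu^2\delta^2/\sigma^2$ and is not corrupted by queries landing in the other basins $R_k$ with $k\ne i,j^\star$; this is resolved either by enlarging $R$ in the divergence composition to the union of all basins, or by averaging the resulting bound over the choice of $j^\star$.
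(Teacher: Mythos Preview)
Your overall template---packing of hard instances, divergence composition, reduction to identification---matches the paper's, and your instinct that the outer-shell gradient must be $\tau$-suppressed is exactly right. There is, however, a genuine gap in the information-theoretic step.

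You bound $\KL(\mathbb{P}_i^T, \mathbb{P}_{j^\star}^T)$ via divergence composition with $R = R_{j^\star}$, claiming $M_{\text{out}} \lesssim \tau^2\mu^2\delta^2/\sigma^2$ on the complement of $R_{j^\star}$. But that complement contains $R_i$, and in $R_i$ the gradient of $f_i$ is \emph{not} $\tau$-suppressed: any construction that preserves $\mu$-QG across the shell needs a steep inner basin, so $\|\nabla f_i\|$ is of order $L r_0$ there while $\nabla f_{j^\star}$ is $\tau$-small. Hence the per-query KL in $R_i$ is of order $L^2 r_0^2/\sigma^2$, not $\tau^2\mu^2\delta^2/\sigma^2$. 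Since the expectation in the divergence composition is taken under $\mathbb{P}_i$, the algorithm may well spend $\Theta(T)$ queries in $R_i$---indeed a good algorithm will---and this destroys the $\tau^2$ gain. Your ``secondary subtlety'' paragraph misdiagnoses the issue as coming from $R_k$ with $k \neq i, j^\star$; those basins are actually harmless, since both $\nabla f_i$ and $\nabla f_{j^\star}$ lie in their respective $\tau$-shells there. The real obstruction is $R_i$ itself, and neither of your proposed fixes resolves it: enlarging $R$ to $\bigcup_k R_k$ makes $\E_i[N_R]$ as large as $T$, and averaging over $j^\star$ still leaves the sup over $\bx\notin R_{j^\star}$ contaminated by $R_i$ for every $j^\star$.

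The paper avoids this by computing $\KL(\Q, \mathbb{P}_i)$ against a fixed \emph{reference} law $\Q$ induced by a function $G$ that is flat near all basin locations and matches the common quadratic tail far out. Under $\Q$ the queries are blind to the basin positions, so disjointness of the $R_i$ gives $\sum_i \E_\Q[N_{R_i}] \le T$ and hence $\tfrac{1}{m}\sum_i \KL(\Q,\mathbb{P}_i) \lesssim (T/m)M_{\text{in}} + T\, M_{\text{out}}$ with $M_{\text{out}}$ genuinely of order $\tau^2$. The paper then closes with Pinsker rather than Fano, but Fano against the same reference would work equally well. A smaller point: the paper's shell gradient has constant norm $\approx \mu\tau\Delta$ rather than growing linearly in $r$, and the transition width is set by choosing a parameter $a$ as the root of an explicit quadratic so that $\tau$-QC holds with equality at the transition boundary; your linear profile would need comparable care, since without a transition zone the gradient is discontinuous at $r_0$ and QC is not automatic there.
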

	In Section~\ref{sec:comparisons}, we demonstrate that any function exhibiting both QC and QG properties is strongly quasar-convex, implying that the previously stated result also extends to this class of functions. In Section~\ref{sec:ub}, we also show an upper bound for SGD differing only by a $\log(2/\tau)$ factor. Next, we introduce a corresponding lower bound for the class of RSI functions. Additionally, based on the comparisons in Section~\ref{sec:comparisons}, this lower bound is also valid for other non-convex function classes.
	\begin{theorem}{(Informal)}\label{thm:rsi_l}
		Let $\sX = \R^d$ and $\kappa = L/\mu$. Suppose that $d = \Omega\left(\log(2\kappa)\right)$. For any optimization algorithm having access to unbiased stochastic gradient with variance bounded by $\sigma^2$, there exists an $L$-smooth, $\mu$-RSI function $f$ such that
		\[
		\E[f(\hat{\bx})]-f^* = \Omega\left( \frac{L\sigma^2}{\mu^2\log(2\kappa)T}\right)~.
		\]
	\end{theorem}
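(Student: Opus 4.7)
The plan is to instantiate the divergence-composition framework outlined in Section~\ref{sec:intro} on a family of $L$-smooth, $\mu$-RSI functions whose minimizers form a packing in $\R^d$. I reduce the optimization task to hypothesis identification over this family and bound the resulting mutual information via a region-based KL decomposition.

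\emph{Construction.} For scales $\delta\le\rho$ to be tuned, I pick a maximal $2\delta$-separated packing $\{\bar{\bv}_i\}_{i=1}^N\subset B_d(\bzero,\rho)$; a standard volume argument yields $N\ge(\rho/\delta)^d$. For each $i$ I set $f_i(\bx)=H(\|\bx-\bar{\bv}_i\|)$, where $H$ is a mollified radial profile satisfying $H'(r)=Lr$ for $r\le\delta$, $H'(r)=\mu r$ for $r\ge C\delta$, and $\mu r\le H'(r)\le Lr$ throughout. A direct check shows that $f_i$ is $L$-smooth (both Hessian eigenvalues $H''(r)$ and $H'(r)/r$ are bounded by $L$) and $\mu$-RSI (since $\langle\nabla f_i(\bx),\bx-\bar{\bv}_i\rangle=H'(r)\,r\ge\mu r^2$), with minimum at $\bar{\bv}_i$ and large local curvature $L$ confined to the ball $R_i=B(\bar{\bv}_i,\delta)$.

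\emph{Reduction and Fano.} Whenever $\hat{\bx}\notin R_i$ one has $f_i(\hat{\bx})-f_i^*\ge H(\delta)\gtrsim L\delta^2$. With the plug-in test $\hat{\imath}=\argmin_k\|\hat{\bx}-\bar{\bv}_k\|$ and disjoint $R_k$'s, $\hat{\imath}\neq i$ forces $\|\hat{\bx}-\bar{\bv}_i\|\ge\delta$, so $\E_i[f_i(\hat{\bx})-f_i^*]\gtrsim L\delta^2\cdot\bP_i(\hat{\imath}\neq i)$. Placing a uniform prior and invoking Fano's inequality yields $\bP(\hat{\imath}\neq i)\ge 1-(I+\log 2)/\log N$ with $I\le\tfrac{1}{N^2}\sum_{i,j}\KL(\bP_i^T,\bP_j^T)$, reducing the problem to upper-bounding the averaged pairwise KL. Applying the composition lemma to each pair with $R=R_i\cup R_j$: inside $R$ exactly one of the two gradients has magnitude $\sim L\delta$ (the other only $\sim\mu\rho$), giving $\KL_R\lesssim L^2\delta^2/\sigma^2$; outside $R$ both gradients reduce to the low-curvature form $\mu(\bx-\bar{\bv}_\cdot)$, and the pointwise KL is $\lesssim\mu^2\rho^2/\sigma^2$. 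Using disjointness of the $R_k$'s, $\sum_j\E_i[N_{R_j}]\le T$; averaging over $(i,j)$ then gives
\[
I\lesssim\Bigl(\frac{1}{N}\sum_i\E_i[N_{R_i}]+\frac{T}{N}\Bigr)\cdot\frac{L^2\delta^2}{\sigma^2}+T\cdot\frac{\mu^2\rho^2}{\sigma^2}.
\]

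\emph{Parameter tuning and main obstacle.} The hardest term is $\tfrac{1}{N}\sum_i\E_i[N_{R_i}]$: a clever algorithm that localizes to the correct basin can make it as large as $T$, rendering the naive Fano bound vacuous. To absorb it I plan to introduce an auxiliary reference law $\bP_0$ coming from the null quadratic $f_0(\bx)=\tfrac{\mu}{2}\|\bx\|^2$ (placed so its basin is disjoint from every $R_i$) and switch to a one-sided Le Cam argument with the pair $(\bP_0,\bP_i)$ on the event $\{\hat{\bx}\in R_i\}$: under $\bP_0$ the balls $R_i$ are exchangeable, so $\sum_i\E_0[N_{R_i}]\le T$ by disjointness and the offending term averages to $T/N$. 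Finally, balancing the two remaining contributions $TL^2\delta^2/(N\sigma^2)$ and $T\mu^2\rho^2/\sigma^2$ against the budget $\log N$ while enforcing the packing constraint $N\le(\rho/\delta)^d$ forces the dimension threshold $d=\Omega(\log(2\kappa))$ in order to support a packing of size $N\asymp\kappa^2$. Propagating these choices with the smallest admissible $d\asymp\log(2\kappa)$ gives $\delta^2\asymp\sigma^2/(\mu^2 T\log(2\kappa))$ and hence $L\delta^2=\Omega(L\sigma^2/(\mu^2 T\log(2\kappa)))$, as claimed; the $\log(2\kappa)$ in the denominator of the final bound is precisely the loss incurred by the packing-vs-dimension budget at the threshold $d\asymp\log(2\kappa)$.
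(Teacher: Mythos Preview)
Your proposal arrives at essentially the paper's proof, but via an unnecessary detour. The paper never uses Fano with pairwise KL; it goes directly to the reference-measure plus Pinsker argument you eventually adopt. Concretely, the paper's reference $G$ has $\nabla g(\bx)=\mu\bx$ outside a small ball (close to your pure quadratic $f_0$), and the divergence composition reads $\KL(\Q,\fP_i)\le \tfrac{d_0}{2\sigma^2}\bigl[\E_\Q[N_i]\,(L\Delta)^2+(T-\E_\Q[N_i])\,(c\mu\Delta)^2\bigr]$ with the expectation taken \emph{under the reference law} $\Q$. This is exactly the fix you describe, and disjointness of the balls gives $\sum_i\E_\Q[N_i]\le T$; your mention of ``exchangeability'' is a red herring---disjointness plus the fact that the expectation is under $\Q$ rather than $\fP_i$ is all that matters. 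The packing is taken in $B(0,5\Delta)$ with separation $4\Delta$, so $m\ge(5/4)^{d_0}$, and $d_0=\lceil 2\log_{5/4}(2\kappa)\rceil$ forces $m\ge 2\kappa^2$; this is your $N\asymp\kappa^2$, $d\asymp\log\kappa$ tuning.

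Two small points where the paper is sharper than your sketch. First, the radial profile is given explicitly (piecewise linear $H'$: slope $L$ up, slope $-L$ down, slope $\mu$ up), which makes the RSI check $H'(r)\ge\mu r$ immediate; a generic mollification between $H'(\delta)=L\delta$ and $H'(C\delta)=\mu C\delta$ does not automatically stay above the line $\mu r$, so you would need to pin this down. Second, the $\log(2\kappa)$ in the denominator comes from the per-coordinate noise scaling: the paper restricts the construction to a $d_0$-dimensional subspace and takes the oracle noise to be $\sN(0,\tfrac{\sigma^2}{d_0}I_{d_0})$, so the Gaussian KL picks up a factor $d_0\asymp\log\kappa$. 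You allude to this but do not make it explicit in your KL display.
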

	A matching upper bound, up to a $\log(\kappa)$ factor, is shown in Section~\ref{sec:ub} via SGD. Note that the $\mu$-RSI class contains $\mu$-strongly convex functions, revealing a factor-$\kappa$ gap between the best possible rates for these two classes.
	
	Finally, we present an argument indicating that the necessity for $d$ to exceed a certain problem-dependent threshold is an intrinsic characteristic of the optimization problem itself rather than merely a byproduct of our analysis. Specifically, in Section~\ref{sec:ub}, we propose a novel algorithm tailored for the one-dimensional setting and demonstrate the following better upper bound. While previous lower-bound constructions for locating stationary points in non-convex smooth functions also required high dimensionality \citep{arjevani2023lower}, their arguments demand a dimension that grows polynomially with the problem parameters and the time horizon $T$. By contrast, our bound is satisfied once the dimension merely exceeds a logarithmic function of the problem parameters, with no dependence on $T$.
	\begin{theorem}{(Informal)}\label{thm:rsi_u}
		Let $f:\mathbb{R} \to \mathbb{R}$ be an $L$-smooth, $\mu$-RSI function. For $T$ sufficiently large (depending on the problem parameters), there is an algorithm whose output satisfies\footnote{$\tilde{\scO}(\cdot)$ hides poly-logarithmic factors in the problem parameters.} with probability at least $1-\delta$:
		\[
		f(\hat{\bx}_T) - f^*  =\tilde{\scO}\left(\frac{\sigma^2\log(1/\delta)}{\mu T}\right)~. 
		\]
		If $f$ is $\mu-\QG$ and $\tau-\QC$, then with probability at least $1-\delta$:
		\[
		f(\hat{\bx}_T) - f^*  =\tilde{\scO}\left(\frac{\sigma^2\log(1/\delta)}{\mu\tau T}\right)~.
		\]
	\end{theorem}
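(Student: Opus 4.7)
The plan is to design a two-phase algorithm exploiting the one-dimensional structure: on the real line the sign of $\nabla f(x)$ alone identifies the half-line containing $x^*$ under RSI, a property that fails in higher dimensions where $\nabla f(x)$ need not be parallel to $x-x^*$.

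\textbf{Phase 1 (Noisy bisection).} Starting from an interval known to contain $x^*$ (whose diameter enters only through polylogarithmic factors), at each stage $k$ query the midpoint $m_k$ a number $N_k$ of times and form an averaged gradient $\hat g_k$ with concentration radius $\epsilon_k \sim \sigma\sqrt{\log(1/\delta)/N_k}$. If $|\hat g_k| > 2\epsilon_k$, the sign of $\nabla f(m_k)$ is reliably determined and we halve the interval; otherwise $|\nabla f(m_k)| \le 3\epsilon_k$, which by RSI forces $|m_k - x^*| \le 3\epsilon_k/\mu$, allowing early termination of Phase 1. Tuning the $N_k$'s yields a warm-start $x^{(1)}$ with $|x^{(1)} - x^*| \le r_1$ using only $o(T)$ samples, with $r_1$ polynomially small in $T$ (for $T$ large, as assumed).

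\textbf{Phase 2 (Averaged SGD).} Starting from $x^{(1)}$, run $x_{t+1} = x_t - \eta \hat g_t$ with constant step $\eta \sim 1/L$ and output the tail-average $\hat x_T$. Writing $z_t := x_t - x^*$ and $a(x) := \nabla f(x)/(x-x^*) \in [\mu, L]$, the iteration reads $z_{t+1} = (1-\eta a(x_t)) z_t - \eta \xi_t$, a time-inhomogeneous AR(1). By Phase-1 localization, all iterates remain in a neighborhood of $x^*$ where $a(x_t) \approx a := f''(x^*)$, so the process is well-approximated by a stationary AR(1) with variance $\sim \sigma^2/(aL)$ and autocorrelation time $\sim 1/(\eta a)$. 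Averaging yields $\E[\hat z_T^2] = \tilde{\scO}(\sigma^2/(a^2 T))$, and combining with $f(\hat x_T) - f^* \approx \tfrac{a}{2}\hat z_T^2$ (the tight quadratic bound at $x^*$, valid locally) gives $\E[f(\hat x_T) - f^*] = \tilde{\scO}(\sigma^2/(aT)) \le \tilde{\scO}(\sigma^2/(\mu T))$, as desired.

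The main obstacle is that Lipschitz-gradient smoothness alone does not guarantee a clean quadratic approximation to $f$ near $x^*$ (no Lipschitz Hessian is assumed), and the AR(1) dynamics are time-inhomogeneous. The key observation is that taking $T$ large drives the Phase-1 radius $r_1$ to zero polynomially, so the error from the quadratic approximation is absorbed into the $\tilde{\scO}$. A high-probability martingale argument, union-bounded over the $T$ steps of Phase 2, keeps all iterates in the Phase-1 neighborhood with probability at least $1-\delta$; this is also where the $\log(1/\delta)$ factor enters.

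For the QC + QG variant, combining $f(x) - f^* \le \tfrac{1}{\tau} \langle \nabla f(x), x - x^* \rangle$ with $f(x) - f^* \ge \tfrac{\mu}{2}(x-x^*)^2$ yields $\langle \nabla f(x), x - x^* \rangle \ge \tfrac{\tau\mu}{2}(x-x^*)^2$, an RSI condition with effective constant $\tau\mu/2$. Rerunning the same two-phase analysis with $\mu$ replaced by $\tau\mu/2$ gives the claimed rate $\tilde{\scO}(\sigma^2/(\tau\mu T))$.
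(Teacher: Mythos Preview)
Your reduction of the $\tau$-QC $\cap$ $\mu$-QG case to RSI with constant $\tau\mu/2$ is correct and matches the paper's argument. The gap is in the RSI analysis itself.

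Phase~1 alone cannot deliver the improved rate. When bisection terminates early at $m_k$ with $|f'(m_k)|\le 3\epsilon_k$, RSI gives only $|m_k-x^*|\le 3\epsilon_k/\mu$, and then $L$-smoothness gives $f(m_k)-f^*\le \tfrac{L}{2}(3\epsilon_k/\mu)^2$. Even spending the full budget $T$ at a single point so that $\epsilon_k^2\sim\sigma^2\log(1/\delta)/T$, this is $\sim\kappa\,\sigma^2\log(1/\delta)/(\mu T)$, i.e.\ the ordinary SGD rate. So Phase~2 must do the real work.

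Phase~2 is where the argument breaks. You write the iteration as $z_{t+1}=(1-\eta a(x_t))z_t-\eta\xi_t$ with $a(x)=f'(x)/(x-x^*)\in[\mu,L]$, then claim that after Phase~1 localization ``$a(x_t)\approx a:=f''(x^*)$''. But $f$ is only assumed $C^1$ with Lipschitz gradient: $f''(x^*)$ need not exist, and even when it does there is no rate at which $a(x)\to f''(x^*)$, so shrinking $r_1$ polynomially does not force $a(x_t)$ close to any fixed constant. Concretely, take $f'(x)=\mu x$ for $x<0$ and $f'(x)=Lx$ for $x>0$: this is $L$-smooth and $\mu$-RSI, yet $a(x)$ jumps between $\mu$ and $L$ in every neighborhood of $0$. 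Without the approximation, the only available function bound is $f(\hat x_T)-f^*\le \tfrac{L}{2}\hat z_T^2$, while the AR(1) variance estimate, driven by the worst contraction factor $(1-\eta\mu)$, gives $\E[\hat z_T^2]\sim\sigma^2/(\mu^2 T)$. Combining these yields $\kappa\,\sigma^2/(\mu T)$ again---no improvement. The ``tight quadratic bound at $x^*$'' you invoke to replace $L$ by $a$ is exactly what is not available under the hypotheses.

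The paper takes a completely different route that avoids any quadratic approximation. At each bisection stage it lays down a uniform grid of $\kappa+1$ points over the current sub-interval, queries each, and forms \emph{aggregated} gradient estimates $g_i$ averaging over windows of $\Theta(\kappa)$ consecutive grid points. The key identity is the fundamental theorem of calculus: $f(x_b)-f(x_a)=\int_{x_a}^{x_b}f'(t)\,dt$ is approximated by the Riemann sum $\tfrac{x_b-x_a}{m}\sum_i f'(x_i)$ with error $O(L(x_b-x_a)^2/m)$; choosing $m\sim\kappa$ makes this error $O(\mu\Delta^2)$. Thus a bound of order $\mu\Delta$ on the aggregate directly bounds a function-value difference by $O(\mu\Delta^2)$, with no factor of $\kappa$ ever entering. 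The $\kappa$ grid points are paid for only in the condition $T\gtrsim\kappa\log T$, not in the final rate. This ``integral'' use of gradients is the missing idea: pointwise gradient information combined with RSI inevitably costs a factor $\kappa$ when translated to function values, whereas aggregated gradients control function differences directly.
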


	\paragraph{Paper outline.} The remainder of this paper is organized as follows. In Section~\ref{sec:comparisons}, we review related work and position our contributions within the existing literature and provide a comparative analysis to highlight their relevance to our framework. In Section~\ref{sec:lb}, we present a proof sketch establishing a lower bound on the performance of any algorithm in this setting, demonstrating inherent limitations. Section~\ref{sec:ub} complements this result by deriving matching upper bounds, up to logarithmic terms. Finally, we conclude with a discussion on potential extensions and implications of our results.

	\section{Related Work and Classes Comparisons}\label{sec:comparisons}
	Various classes of non-convex functions have received attention in the optimization literature. Notable examples include the Polyak-Łojasiewicz (PL) condition~\citep{polyak1963gradient}, Error Bounds (EB)~\citep{luo1993error}, Quadratic Growth (QG)~\citep{anitescu2000degenerate}, Essential Strong Convexity (ESC)~\citep{liu2014asynchronous}, Restricted Secant Inequality (RSI)~\citep{polyak1963gradient}, Restricted Strong Convexity (RSC)~\citep{zhang2013gradient}, and Quasi-Strong Convexity~\citep{necoara2019linear}. All these classes can be seen as gradual weakening of the strong convexity (SC) assumption, until including non-convex functions~\citep{karimi2016linear}.
	
	Other classes of non-convex functions have been introduced as relaxations of convexity, as the quasar-convex functions that we consider~\citep{hardt2018gradient, hinder2020near}.\footnote{Also referred to as weakly quasi-convex.}
	This class is characterized by a parameter $\tau \in (0,1]$ (see the definition in Section~\ref{sec:intro}), where the special case $\tau = 1$ corresponds to the class of star-convex functions \citep{nesterov2006cubic}. 	In the deterministic setting, \citet{guminov2023accelerated, nesterov2021primal} established upper bounds of the order $\mathcal{O}(L\norm{\bx_0 - \bx^*}^2/(\tau^2 T))$ under the assumption of $L$-smoothness and quasar-convexity. More recently, \citet{hinder2020near} showed that this bound is tight, up to logarithmic factors, by deriving a matching lower bound.    
	A subclass of quasar-convex functions, known as Strongly Quasar-Convex (SQC) functions, was also analyzed in \citet{hinder2020near} in the deterministic setting. 
	Notably, the class of strongly quasar-convex functions includes quasar-convex functions that satisfy the quadratic growth condition, as we show later in this section.
	
	Most of the previous work on both upper and lower bounds has focused on the deterministic setting, i.e., when the gradients are exact, or in the finite-sum setting.
	
	

	
	In the stochastic optimization setting,   guarantees for SGD were developed for $L$-smooth $\mu$-PL objective functions by\footnote{The guarantee in \citet{karimi2016linear} needs the additional assumption of bounded stochastic gradients.} \cite{li2021second} and \citet{khaled2020better}, independently and at the same time. They showed that using a step size $\alpha_t = \mathcal{O}(1/(\mu t))$, the guarantee $\mathbb{E}[f(\bx_t)] - f^* = \mathcal{O}(L\sigma^2/(\mu^2 T))$ can be achieved. This is in contrast to the class of $\mu$-strongly convex functions, where the known optimal guarantees are $\mathcal{O}(\sigma^2/(\mu T))$~\citep{agarwal2009information, raginsky2011information}. 
	
	The presence of the gap between the rates of PL and strongly convex functions begs the question if these non-convex optimization rates are optimal or not.
	Yet, to the best of our knowledge, we are not aware of any result for the stochastic case that covers the classes of functions we list above. Our work fills the knowledge gap by providing the first lower bounds for some of the above class of functions and at the same time proving their (almost) optimality.
	
	A separate line of research, represented by \citet{arjevani2023lower}, derives lower bounds for smooth stochastic non-convex problems with the goal of reaching an $\epsilon$-stationary point. In contrast, our work focuses on minimizing the optimization error itself by seeking a global minimizers within the function classes we study.

	
	\paragraph{Classes Comparison.}
	To better understand our results and the relationship with known upper bounds, it is important to take into account the relationships between these different classes. Indeed, it is known that these classes are included one into the other,   
	as proved in Theorem 5 of \citet{karimi2016linear}.
	Hence, we recall the comparison of several non-convex function classes,
	including functions that satisfy Star Strong Convexity (*SC)---we give the precise definitions of the classes in Section~\ref{sec:compare_a_1}:
	\begin{equation}\label{eq:inclusions}
		(\text{SC}) 
		\rightarrow 
		(\text{*SC}) 
		\rightarrow 
		(\text{RSI}) 
		\rightarrow 
		(\text{EB}) \equiv (\text{PL}) 
		\rightarrow 
		(\text{QG}),
	\end{equation}
	where $\rightarrow$ indicates an implication and $\equiv$ indicates an equivalence with potentially difference constants. Note that a more detailed comparison, incorporating class parameters, can be found in \cite{guille2021study}. Moreover, we complement these comparisons with additional results presented in Lemma~\ref{lem:compare} below (proved in Section~\ref{sec:compare_a_2}).
	\begin{lemma}\label{lem:compare}
		Let $\mu >0$ and $\tau \in (0,1]$. We have:
		\begin{itemize}
			\item $\mu\text{-RSI} \subsetneq \mu\text{-EB}$.
			\item $\tau\text{-QC} \cap \mu\text{-QG} \subsetneq (\frac{\tau}{2}, \frac{\mu}{2})\text{-SQC}$.
			\item $\exists f:\R^2\rightarrow\R$, $f\in\mu\text{-EB}$ such that $f\notin a\text{-RSI}$ for any $a>0$.
			\item $\exists f:\R\rightarrow\R$, $f\in \mu\text{-RSI}$ such that $f\notin a\text{-*SC}$ for any $a>0$.
		\end{itemize}
	\end{lemma}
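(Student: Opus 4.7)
The lemma bundles four distinct claims, which I would address in order. For the inclusion $\mu\text{-RSI}\subseteq\mu\text{-EB}$, the plan is a one-line application of Cauchy--Schwarz: starting from $\langle \nabla f(\bx),\bx-\bx_p\rangle \ge \mu\norm{\bx-\bx_p}^2$ and bounding the left-hand side by $\norm{\nabla f(\bx)}\cdot\norm{\bx-\bx_p}$ yields $\norm{\nabla f(\bx)}\ge \mu\norm{\bx-\bx_p}$, which is exactly the $\mu$-EB inequality. Strictness is witnessed by the $\R^2$ example constructed below. For $\tau\text{-QC}\cap\mu\text{-QG}\subseteq (\tau/2,\mu/2)\text{-SQC}$, I would start from the QC inequality $\langle \nabla f(\bx),\bx-\bx_p\rangle \ge \tau(f(\bx)-f^*)$, split the right-hand side as $\tfrac{\tau}{2}(f(\bx)-f^*)+\tfrac{\tau}{2}(f(\bx)-f^*)$, and substitute the QG lower bound $f(\bx)-f^*\ge \tfrac{\mu}{2}\norm{\bx-\bx_p}^2$ into the second summand. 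This gives
\begin{equation*}
\langle \nabla f(\bx),\bx-\bx_p\rangle \ge \tfrac{\tau}{2}(f(\bx)-f^*)+\tfrac{\tau\mu}{4}\norm{\bx-\bx_p}^2,
\end{equation*}
which, after normalizing by $\tau/2$, matches the defining inequality of $(\tau/2,\mu/2)$-SQC. Strictness of the inclusion follows because SQC with parameters $(\tau/2,\mu/2)$ only forces QC with the weaker parameter $\tau/2$; any function that saturates the latter and fails $\tau$-QC furnishes the witness.

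For the third bullet, my plan is to exhibit $f:\R^2\to\R$ whose gradient acquires a substantial component orthogonal to $\bx-\bx_p$, so that $\norm{\nabla f(\bx)}$ remains of order $\norm{\bx-\bx_p}$ (yielding EB) while $\langle \nabla f(\bx),\bx-\bx_p\rangle$ can be driven arbitrarily small compared to $\norm{\bx-\bx_p}^2$ (breaking RSI). A natural candidate is a quadratic form whose principal axes rotate with the radius, namely $f(\bx)=\tfrac12 \bx^\top R(\psi(r))^\top D\, R(\psi(r))\bx$ with $D=\mathrm{diag}(1,\lambda^2)$, $R(\psi)$ a planar rotation, and $\psi(r)$ a smooth slowly varying ``spiral'' phase such as $\psi(r)=c\log r$. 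The radially dependent rotation produces a tangential term in $\nabla f$ that is of order $\norm{\bx}$ (so EB is preserved uniformly with a constant controlled by $\lambda$), while along the sequence of points where the rotating frame misaligns with the radial direction the radial component of $\nabla f$ vanishes, making the RSI ratio $\langle \nabla f,\bx\rangle/\norm{\bx}^2$ tend to zero. I would verify these by decomposing $\nabla f$ in the orthogonal rotating frame $(\cos\psi,\sin\psi),\,(-\sin\psi,\cos\psi)$ and tracking its radial and tangential components separately. The main obstacle is to confirm EB globally with a single positive $\mu$ while ensuring the RSI ratio is not merely small but can be made arbitrarily small.

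For the final bullet, I would take $f(x)=x^2+\ve\phi(x)$ on $\R$, with $\phi\ge 0$ a smooth bump supported in $[R,R+W]\subset(0,\infty)$, vanishing at the endpoints and peaking at $c=R+W/2$ with $\phi(c)=1$ and slope of order $1/W$. For RSI, $x f'(x)=2x^2+\ve x\phi'(x)\ge 2x^2-\ve(R+W)C/W$ on the support of $\phi$, so RSI holds with some $\mu>0$ as long as the ratio $\ve/(RW)$ stays bounded by a sufficiently small constant. To break $*$SC, I would pick $x_0=R+W$ and $t_0=c/x_0$, for which $f(t_0 x_0)-t_0 f(x_0)=-t_0(1-t_0)x_0^2+\ve\phi(c)$ is strictly positive once $\ve\gtrsim RW$, forcing the $*$SC inequality to fail for every positive constant. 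A viable regime is $R$ large, $W$ small, and $\ve\simeq RW$, which simultaneously satisfies both constraints. The delicate part, and the main technical obstacle here, is jointly tuning $(R,W,\ve)$ so that the bump is tall enough to dominate the concave secant gap $t_0(1-t_0)x_0^2$ yet has small enough derivative not to destroy the RSI inequality on its support; the trade-off is tight but has a non-empty solution set as the scaling computation above shows.
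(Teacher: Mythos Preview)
For the first two bullets your plan coincides with the paper's: Cauchy--Schwarz for $\mu$-RSI $\Rightarrow$ $\mu$-EB, and halving the QC right-hand side then absorbing QG into one half for $\tau$-QC $\cap$ $\mu$-QG $\Rightarrow$ $(\tau/2,\mu/2)$-SQC.

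For the last two bullets your constructions diverge from the paper's. For bullet three the paper writes down the explicit polar function $g(r,\theta)=(\sqrt{2}\,r+r\sin(r-\theta)+1)^2$, checks its unique minimum and differentiability at the origin, locates a point $(r,\theta)=(1,1-5\pi/4)$ where the radial derivative vanishes (killing RSI) while the angular component stays positive (preserving EB), and then grafts a quadratic outside a ball. Your rotating-quadratic-with-logarithmic-phase idea is in the same spirit and can be made to work once you fix $c$ at the critical value so the RSI ratio $\tfrac{1+\lambda^2}{2}+\tfrac{1-\lambda^2}{2}\cos 2\alpha+\tfrac{c(1-\lambda^2)}{2}\sin 2\alpha$ attains minimum exactly $0$; you would still need to verify EB uniformly and differentiability at the origin (the latter is not automatic since $\psi(r)=c\log r$ is singular there).

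For bullet four there is a genuine gap. The quantity you test, $f(t_0x_0)-t_0f(x_0)$, is a zeroth-order secant condition, whereas $a$-$*$SC is the first-order inequality $0\ge f(x)-xf'(x)+\tfrac{a}{2}x^2$. Breaking the secant does force $*$SC to fail for every $a>0$, but you need an extra step you do not mention: at the interior maximizer $t^*$ of $h(t)=f(tx_0)-tf(x_0)$ one has $f(y)-yf'(y)=h(t^*)>0$ for $y=t^*x_0$. Your parameter window is also tighter than suggested: you need simultaneously $\ve>cW/2$ and $\ve<(2-\mu)RW/C$ with $C=W\sup|\phi'|$, which forces $C<4-2\mu$, so a specific bump profile must be committed to. The paper sidesteps all of this with the piecewise function
\[
f(x)=\begin{cases}\tfrac32 x^2,&|x|<1,\\[2pt] -\tfrac32 x^2+6|x|-3,&1\le|x|<\tfrac32,\\[2pt] \tfrac32+\tfrac12 x^2,&|x|\ge\tfrac32,\end{cases}
\]
verifies $1$-RSI directly, and checks at $x=3/2$ that $f(3/2)=21/8>9/4=xf'(x)$, which violates $a$-$*$SC for every $a>0$ in one line.
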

	In particular, Lemma~\ref{lem:compare} shows that the lower bound derived for the class of $\mu$-RSI functions also holds for the class of $\mu$-EB functions. Likewise, the lower bound for the class of $\tau$-QC $\cap$ $\mu$-QG  functions extends to the class of $(\tau, \mu)$-strongly quasar functions. Furthermore, it confirms that the inclusion of RSI functions is strict, as is the inclusion of star strongly convex functions within RSI functions. Equipped with this lemma and the inclusions in \eqref{eq:inclusions}, we can summarize  upper and lower bounds in Table~\ref{tab:complexity}, including our new results.

	\begin{table}[t]
		\centering
		\caption{Complexity bounds for various classes under the \( L \)-smoothness condition. The notation \( \tilde{\Omega} \) hides a \( 1/\log(2/\tau) \) factor for \( \tau \)-QC and \( \tau \)-QC+\(\mu\)-QG, and a \( 1/\log(2\kappa) \) factor for \( \mu \)-RSI. Here, \( \sigma^2 \) represents the variance of the stochastic gradient. For $\tau$-QC in the last column, \( D \) denotes the diameter of the optimization domain. }
		\label{tab:complexity}
		\resizebox{1\textwidth}{!}{
			\begin{tabular}{c c c c c c | c}
				\toprule
				& \multicolumn{5}{c|}{\textbf{Fast Rate}} & \textbf{Slow Rate} \\
				\midrule
				\textbf{Setting} & \textbf{$\mu$-SC}  & \textbf{$\tau$ QC+$\mu$-QG} & \textbf{$\mu$-RSI} & \textbf{$\mu$-EB} & \textbf{$\mu$-PL} & \textbf{$\tau$-QC} \\
				\midrule
				Upper Bound & 
				\makecell{\(\mathcal{O}(\frac{\sigma^2}{\mu T})\) \\ \textcolor{blue}{\citep{hazan2014beyond}}}& \makecell{
					\(\mathcal{O}(\frac{\sigma^2}{\tau^2\mu T})\)\\(Theorem~\textcolor{blue}{\ref{thm:ub1}})} & 
				\makecell{\(\mathcal{O}(\frac{\kappa\sigma^2}{\mu T})\) \\ (Theorem~\textcolor{blue}{\ref{thm:ub1}})}& 
				\makecell{\(\mathcal{O}(\frac{\kappa^3\sigma^2}{\mu T})\) \\  \textcolor{blue}{\citep{li2021second}}}& 
				\makecell{\(\mathcal{O}(\frac{\kappa\sigma^2}{\mu T})\) \\  \textcolor{blue}{\citep{li2021second}}} & 
				\makecell{\(\mathcal{O}(\frac{D\sigma}{\tau \sqrt{T}})\) \\ \textcolor{blue}{\citep{jin2020convergence}}}  \\ 
				
				Lower Bound & 
				\makecell{\(\Omega(\frac{\sigma^2}{\mu T})\) \\ \textcolor{blue}{\citep{agarwal2009information}}} & 
				\makecell{\(\tilde{\Omega}(\frac{\sigma^2}{\tau^2\mu T})\) \\ (Theorem~\textcolor{blue}{\ref{thm:1}})} & 
				\makecell{\(\tilde{\Omega}(\frac{\kappa\sigma^2}{\mu T})\) \\ (Theorem~\textcolor{blue}{\ref{thm:2}})}& 
				\makecell{\(\tilde{\Omega}(\frac{\kappa\sigma^2}{\mu T})\) \\ (Theorem~\textcolor{blue}{\ref{thm:2}})} & 
				\makecell{\(\Omega(\frac{\sigma^2}{\mu T})\) \\ \textcolor{blue}{\citep{agarwal2009information}}} & 
				\makecell{\(\tilde{\Omega}(\frac{D\sigma}{\tau \sqrt{T}})\) \\ (Theorem~\textcolor{blue}{\ref{thm:3}})} \\
				\bottomrule
			\end{tabular}
		}
	\end{table}

	\section{Lower Bounds}\label{sec:lb}
	
	In this section, we present the formal statement of the lower bound results and a detailed sketch of the proof of one of the results. We start by giving a definition of the stochastic oracle considered, then we introduce the quantity of interest, that is the minimax optimization error. We adopt some of the notation used in \cite{agarwal2009information}. 
	
	Let $\sF$ be a class of objective functions. An algorithm $\mathcal{A}$ solving the problem~\eqref{eq:pb} interacts iteratively with a stochastic first-order oracle. At each iteration $t = 1, \dots, T$, the algorithm selects a query point $\bx_t \in \mathcal{X}$ and receives an oracle response $\phi(\bx_t, f)$, which represents a noisy gradient of $f$ at $\bx_t$. The noise is assumed to be zero-mean with bounded variance. Based on the sequence of past responses, $\{\phi(\bx_1, f), \dots, \phi(\bx_t, f)\}$, the algorithm determines the next query point $\bx_{t+1}$. After $T$ rounds the algorithm outputs $\hat{\bx}_T \in \mathcal{X}$. This output is not restricted to the final iterate; it may be any function of the observed history $\{\phi(\bx_1, f), \dots, \phi(\bx_T, f)\}$ to the set $\mathcal{X}$. Our analysis focuses on the information-theoretic complexity of this procedure, aiming to determine the number of oracle queries required to achieve a given accuracy for any function in $\mathcal{F}$.
	
	Let $\A_T$ denote the class of optimization algorithms that perform $T$ oracle queries. For any algorithm $\sA \in \A_T$, we define the optimization error on the function $f:\sX \to \R$ as  
	\[
	\epsilon_T(\mathcal{A}, f, \mathcal{X}, \phi) := f(\hat{\bx}_T) - f^*,
	\]
	where $\hat{\bx}_T$ is the final output of $\sA$ after $T$ queries. When the oracle responses are stochastic, this error becomes a random variable due to the inherent noise in the oracle outputs. In this case, the quantity of interest is the expected optimization error, given by  $\E_{\phi}\left[\epsilon_T(\mathcal{A}, f, \mathcal{X}, \phi)\right]$.
	
	For the given function class $\mathcal{F}$ over $\mathcal{X}$ and the set of all optimization algorithms with $T$ oracle queries, we define the minimax optimization error as  
	\[
	\epsilon^*_T(\sF, \sX; \phi) := \inf_{\mathcal{A} \in \A_T} \sup_{f \in \sF} \ \E_{\phi}\left[\epsilon_T(\sA, f, \sX, \phi)\right].
	\]
	Throughout the paper, when the oracle function is clear from the context, we simplify the notation to $\epsilon^*_T(\mathcal{F}, \mathcal{X})$.
	Below, we give the definition of stochastic first-order oracles considered in this work.
	
	\begin{definition}\label{def:oracle}
		Given a set $\sX \subseteq \R^d$, a class of functions $\sF$, and $\sigma>0$, the class of first-order stochastic oracles consists of random mappings $\phi: \sX \times \sF \to \R^d$ such that $\forall \bx \in \sX$
		\[
		\E[\phi(\bx, f)] = \nabla f(\bx) \quad \text{and} \quad \E[\norm{\phi(\bx, f)-\nabla f(\bx)}^2] \le \sigma^2~. 
		\]
	\end{definition}
	Let $\mathbb{O}_{\sigma}$ denote the family of first-order stochastic oracles introduced in Definition \ref{def:oracle}. In all forthcoming lower-bound results, the learner observes a multivariate Gaussian vector with independent components whose total variance does not exceed $\sigma^{2}$. The corresponding lower bounds for each problem class are stated below.
	\begin{theorem}\label{thm:1}
		Let $\mu, L>0$ and $\tau \in (0,1]$. Suppose that $d \ge 3\log_{5/4}(2/\tau)$ and $L/\mu \ge 202$. Let $\sX = \R^d$ and $\QC\cap \QG$ denote the set of $\tau$-Quasar convex and $L$-smooth functions satisfying the $\mu$-Quadratic growth condition. For some universal constant $c>0$, we have 
		\[
		\sup_{\phi \in \mO_{\sigma}} \ \epsilon^*_T(\QC\cap \QG, \sX; \phi) \ge c \cdot \frac{\sigma^2}{\mu\tau^2\log(2/\tau)T}~.
		\]
	\end{theorem}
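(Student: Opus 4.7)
The plan is to follow the information-theoretic recipe outlined in the introduction: construct a large finite family of $\tau$-QC, $\mu$-QG, $L$-smooth functions with well-separated minimizers, use the divergence composition inequality to control the mutual information between the true instance and the observations, and translate a Fano-style identification lower bound into an optimization lower bound via the quadratic growth property.

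I will take a packing $V\subset \R^d$ of $M = \lfloor (5/4)^{d/3}\rfloor$ points with pairwise distances at least $2\Delta$, obtained by a Gilbert--Varshamov-type argument on a sphere of radius $\Theta(\Delta)$; the dimension assumption $d \ge 3\log_{5/4}(2/\tau)$ then gives $\log M \gtrsim \log(2/\tau)$. Each hypothesis will be the shift $f_\bv(\bx) = g(\bx - \bv)$ of a single base function $g:\R^d\to\R_+$ that is $L$-smooth, $\mu$-QG, and $\tau$-QC, with minimum at the origin. The base $g$ is chosen radial, $g(\bx) = h(\norm{\bx})$, with $h$ matching the quadratic $\tfrac{\mu}{2}r^2$ near $r=0$ and then transitioning to a profile that makes the $\tau$-QC inequality tight at larger radii, so that the gradient magnitude stays as small as the class permits outside a bounded neighborhood of the minimizer. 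The condition $L/\mu \ge 202$ provides the scale separation needed to join the inner quadratic bowl to the slower outer profile while keeping $g$ in the class.

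Setting $R_\bv := B_d(\bv,\Delta)$, which are pairwise disjoint, I will derive from the structure of $g$ two gradient-difference bounds: an ``inner'' bound $\varepsilon_{\text{in}}$ on $R_\bv\cup R_\bw$ coming from $L$-smoothness and the packing diameter, and a substantially smaller ``outer'' bound $\varepsilon_{\text{out}}$ on the complement, reflecting the slowly-varying tail of $g$ and the near-parallelism of the two shifted gradients far from the minimizers. Applying the divergence composition lemma to the pair $(\bv,\bw)$ with $R = R_\bw$, using the disjointness bound $\sum_{\bw\ne \bv}\E_\bv[N_{R_\bw}]\le T$, and averaging over both $\bw$ and $\bv$ yields
\[
\tfrac{1}{M^2}\sum_{\bv,\bw}\KL\!\left(\bP_\bv^T,\bP_\bw^T\right)\ \lesssim\ \tfrac{T\,\varepsilon_{\text{in}}^2}{M\sigma^2}+\tfrac{T\,\varepsilon_{\text{out}}^2}{\sigma^2}.
\]
By the standard convexity bound $I(V;\mathrm{obs})\le\tfrac{1}{M^2}\sum_{\bv,\bw}\KL(\bP_\bv^T,\bP_\bw^T)$ with $V$ uniform on the packing, the mutual information between the true index and the feedback sequence is controlled by the same right-hand side.

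For the conclusion, let $\hat\bv$ be the packing point closest to the algorithm's output $\hat\bx_T$: the event $\{\hat\bv\ne \bv\}$ implies $\norm{\hat\bx_T - \bv}\ge \Delta$, so $\mu$-QG gives $f_\bv(\hat\bx_T)-f_\bv^*\ge \mu\Delta^2/2$. Fano's inequality $\bP_V(\hat\bv\ne V)\ge 1-(I+\log 2)/\log M$ then lower-bounds the worst-case expected optimization error by a constant fraction of $\mu\Delta^2$ whenever $I\le (\log M)/4$. I will tune the packing parameters so that both $T\varepsilon_{\text{out}}^2/\sigma^2$ and $T\varepsilon_{\text{in}}^2/(M\sigma^2)$ stay of order $\log M\asymp \log(2/\tau)$, and then solve for the largest admissible $\Delta$, which plugged into the QG lower bound $\mu\Delta^2/2$ gives the claimed $\Omega(\sigma^2/(\mu\tau^2\log(2/\tau)T))$. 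The main obstacle I expect is the construction of the base function $g$: it must simultaneously meet the three class conditions while keeping $\varepsilon_{\text{out}}$ close to the QC-imposed floor. The $\tau$-QC inequality pins down a nontrivial lower bound on the radial gradient via $\langle\nabla g(\bx),\bx\rangle\ge \tau(g(\bx)-g^*)$, while $\mu$-QG and $L$-smoothness pin the function value and curvature scales, so engineering a tail profile that saturates the QC bound and glues smoothly to an inner quadratic bowl is the technical calibration where the $\tau^2\log(2/\tau)$ gap over the strongly-convex lower bound is born; once this is in place, the divergence composition, Fano, and parameter-tuning steps are essentially mechanical.
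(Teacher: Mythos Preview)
Your high-level strategy is right, but the proposal has two genuine gaps that each independently prevent you from reaching the $\tau^{-2}$ factor.

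First, the pure-shift construction $f_\bv(\bx)=g(\bx-\bv)$ cannot produce the small outer gradient difference you need. The $\mu$-QG condition forces $g(\by)\ge \tfrac{\mu}{2}\norm{\by}^2$, so in the tail the radial profile must grow at least like $\tfrac{\mu}{2}r^2$; its Hessian therefore has eigenvalues of order $\mu$ (both $h''(r)$ and $h'(r)/r$), and for any two packing points $\bv,\bw$ with $\norm{\bv-\bw}\ge 2\Delta$ you get $\norm{\nabla g(\bx-\bv)-\nabla g(\bx-\bw)}=\Theta(\mu\Delta)$ for all large $\bx$, not $\Theta(\tau\mu\Delta)$. ``Near-parallelism'' does not help: even perfectly parallel gradients differ in magnitude by $|h'(\norm{\bx-\bv})-h'(\norm{\bx-\bw})|\asymp \mu\norm{\bv-\bw}$. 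With $\varepsilon_{\text{out}}\asymp\mu\Delta$ your Fano step yields only $\Omega(\sigma^2/(\mu T))$, the strongly-convex rate. The paper's construction avoids this by \emph{not} using pure shifts: all $f_i$ share a common quadratic background $169\mu(\bx-8\Delta\,\bx/\norm{\bx})$ outside a fixed ball centered at the origin, and only a small local ``bump'' near $\bz_i$ distinguishes them; against a reference function carrying the same background, this bump has gradient $\le 169\mu(1-a)\Delta\le 169\mu\tau\Delta$ outside $B(\bz_i,2\Delta)$.

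Second, even granting a good $\varepsilon_{\text{out}}$, the pairwise bound $I\le \tfrac{1}{M^2}\sum_{\bv,\bw}\KL(\bP_\bv,\bP_\bw)$ fails here. With $R=R_\bw$ (or $R=R_\bv\cup R_\bw$), the complement still contains $R_\bv$, where $\norm{\nabla f_\bv-\nabla f_\bw}\asymp \varepsilon_{\text{in}}$; more carefully, every pairwise KL contains the term $\E_\bv[N_{R_\bv}]\,\varepsilon_{\text{in}}^2/\sigma^2$, and after averaging over $\bv$ you are left with $\tfrac{1}{M}\sum_\bv \E_\bv[N_{R_\bv}]$, which can be $\Theta(T)$ for any reasonable algorithm and is \emph{not} controlled by disjointness (each summand is under a different measure). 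The paper sidesteps this by comparing every $\fP_i$ to a single reference measure $\Q$ (the law under a ``null'' function $G$): then all counts $N_{R_i}$ are taken under the same $\Q$, disjointness gives $\sum_i \E_\Q[N_{R_i}]\le T$, and averaging yields $\tfrac{1}{m}\sum_i\KL(\Q,\fP_i)\lesssim (\tau^2+\tfrac{1}{m})\,\mu^2\Delta^2 d_0 T/\sigma^2$. Choosing $m\asymp\tau^{-2}$ (hence $d_0\asymp\log(2/\tau)$) balances the two contributions and, after Pinsker and optimizing $\Delta$, delivers the $\sigma^2/(\mu\tau^2\log(2/\tau)T)$ rate. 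Your Fano route can be salvaged, but only by replacing the convexity bound with the center-of-gravity bound $I\le\tfrac{1}{M}\sum_\bv\KL(\bP_\bv,\Q)$ for a well-chosen reference $\Q$, and by abandoning pure shifts in favor of a shared-background construction.
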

	\begin{theorem}\label{thm:2}
		Let $\mu, L>0$. Suppose that $d \ge 2\log_{5/4}(2\kappa)$. Let $\sX = \R^d$ and $\RSI$ denote the set of $L$-smooth functions satisfying the $\mu$-RSI condition. For some universal constant $c>0$, we have 
		\[
		\sup_{\phi \in \mO_{\sigma}} \ \epsilon^*_T(\RSI, \sX; \phi) \ge c \cdot \frac{L\sigma^2}{\mu^2\log(2\kappa)T}~.
		\]
	\end{theorem}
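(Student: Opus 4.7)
My plan is to reduce the lower bound to a multi-way identification problem by constructing a finite subclass $\{f_1,\ldots,f_M\}$ of $L$-smooth, $\mu$-RSI functions whose minimizers are spread out in $\R^d$. Any algorithm that produces, on every $f_i$, a point $\hat{\bx}$ with small suboptimality must, with high probability, identify which hypothesis is in force. Fano's inequality then transforms a lower bound on the required mutual information into a lower bound on the suboptimality, and the divergence composition inequality stated in the introduction is the key tool for upper-bounding this mutual information.

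For the construction, I take $\bv_1,\ldots,\bv_M$ to be a packing of the unit sphere $S^{d-1}$ with pairwise angular separation bounded below by a constant. The dimension hypothesis $d\ge 2\log_{5/4}(2\kappa)$ is exactly what a standard volumetric sphere-packing estimate needs to produce $M\ge 2\kappa$. Place the minimizer of $f_i$ at $c_i:=R_0\bv_i$ for a scale $R_0$ to be tuned, and define $f_i(\bx)=h(\|\bx-c_i\|)$ where the one-dimensional profile $h$ satisfies $h'(r)=Lr$ for $r\le\rho$ and $h'(r)=\mu r+(L-\mu)\rho$ for $r\ge\rho$ (with a small smoothing around $r=\rho$ to make $h'$ genuinely $L$-Lipschitz). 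Direct computation shows that each $f_i$ is $L$-smooth and $\mu$-RSI, and that its well $R_i:=B_d(c_i,\rho)$ has depth $L\rho^2/2$, so whenever $\hat{\bx}\notin R_i$ one has $f_i(\hat{\bx})-f_i^*\ge L\rho^2/2$. Choosing $R_0$ proportional to $\rho$ (up to the packing-separation constant) guarantees that $R_1,\ldots,R_M$ are pairwise disjoint.

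To apply the divergence composition bound, I take $R=R_i$ and compare $\bP_i^T$ to a rotationally symmetric reference $\bP^{\mathrm{ref}}$ (for instance the marginal over the uniform prior on $i$). By symmetry and disjointness, $\E_{\mathrm{ref}}[N_{R_i}]\le T/M$. Under the Gaussian sub-Gaussian oracle of total variance $\sigma^2$, the per-query KL is proportional to $\|\nabla f_i(\bx)-\nabla f_{\mathrm{ref}}(\bx)\|^2/\sigma^2$, and with the radial profile above one checks that the gradient difference is $\scO(L\rho)$ on $R_i$ and $\scO(\mu R_0)$ elsewhere. Substituting into the divergence composition inequality and averaging over $i$ yields a bound of the form $\frac{1}{M}\sum_i \KL(\bP_i^T,\bP^{\mathrm{ref}})\lesssim T L^2\rho^2/(M\sigma^2)+T\mu^2 R_0^2/\sigma^2$. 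Plugging this into Fano's inequality and requiring the right-hand side to be at most of order $\log M\asymp\log(2\kappa)$, I solve for the largest admissible $\rho$ under the disjointness constraint $\rho\lesssim R_0$; this gives a value of $\rho^2$ making the well depth $L\rho^2/2$ match the claimed lower bound of order $L\sigma^2/(\mu^2\log(2\kappa)\,T)$.

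The main difficulty is the construction itself: verifying $\mu$-RSI for the radial profile globally on $\R^d$ (not only inside the well), ensuring that the smoothing at $r=\rho$ preserves $L$-smoothness, and choosing a sufficiently symmetric reference $\bP^{\mathrm{ref}}$ so that the identity $\E_{\mathrm{ref}}[N_{R_i}]\le T/M$ actually holds. A secondary and more delicate obstacle is that outside the wells the gradients of the $f_i$ and of any fixed reference cannot be made to agree exactly, because $\mu$-RSI forces each $f_i$ to globally attract toward its own $c_i$; therefore the complement term $T\mu^2 R_0^2/\sigma^2$ in the divergence composition bound is nontrivial, and $R_0$ must be chosen only just large enough to keep the wells disjoint so that this term does not dominate the KL budget before the near-minimizer term does. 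The packing estimate behind $M\ge 2\kappa$ requires the specific constant $5/4$ in the dimension hypothesis, which arises from the standard volume comparison between a spherical cap of fixed angular width and the full sphere $S^{d-1}$.
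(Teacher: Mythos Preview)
Your reduction scheme (packing of minimizers, divergence composition, then Fano/Pinsker) is the same as the paper's, but the specific construction you propose has a gap that loses precisely the factor $\kappa$ you are after.

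With your profile $h'(r)=Lr$ for $r\le\rho$ and $h'(r)=\mu r+(L-\mu)\rho$ for $r\ge\rho$, the far-field gradient is
\[
\nabla f_i(\bx)\;=\;\mu(\bx-c_i)\;+\;(L-\mu)\rho\,\frac{\bx-c_i}{\|\bx-c_i\|}\qquad(\|\bx-c_i\|\ge\rho),
\]
and the additive offset $(L-\mu)\rho$ does not disappear. For any fixed reference (or the uniform mixture) the gradient difference \emph{outside} $R_i$ is therefore $\Theta(\mu R_0+(L-\mu)\rho)=\Theta(L\rho)$ (since you take $R_0\asymp\rho$), not $\scO(\mu R_0)$ as you assert. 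The ``outside'' term in the divergence composition becomes $T L^2\rho^2/\sigma^2$, of the same order as the ``inside'' term; solving for $\rho$ then yields a well depth $L\rho^2$ of order at most $\sigma^2/(LT)$ up to logs, so the argument does not even recover the strongly-convex rate, let alone the extra $\kappa$. This is not an accident: your $f_i$ are in fact $\mu$-strongly convex (the Hessian eigenvalues $h''(r)$ and $h'(r)/r$ both lie in $[\mu,L]$), and no convex subclass can witness the $\kappa$ gap.

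The paper's construction fixes exactly this point by making $h'$ \emph{non-monotone}: it rises with slope $L$ up to $L\Delta$ at $r=\Delta$, then \emph{decreases} with slope $-L$ to hit $\mu(1+a)\Delta$ at $r=(1+a)\Delta$, after which $h'(r)=\mu r$ with \emph{zero} additive offset. Continuity forces $L(1-a)=\mu(1+a)$, i.e.\ $a=(\kappa-1)/(\kappa+1)$. One then checks directly that this profile is still $\mu$-RSI and $L$-smooth. Now the far-field gradient is exactly $\mu(\bx-\bz_i)$, so against the reference $\nabla G(\bx)=\mu\bx$ one gets $\|\nabla f_i-\nabla G\|\le\mu\|\bz_i\|=\scO(\mu\Delta)$ outside $B(\bz_i,2\Delta)$ while $\|\nabla f_i-\nabla G\|\le L\Delta$ inside. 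The inside/outside per-query KL contributions are then $L^2\Delta^2$ versus $\mu^2\Delta^2$, and to make the averaged inside term comparable one needs $M\gtrsim\kappa^2$ (not merely $M\ge 2\kappa$); this is what $d\ge 2\log_{5/4}(2\kappa)$ actually delivers, since $(5/4)^d\ge(2\kappa)^2$.
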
	
	\begin{theorem}\label{thm:3}
		Let $D, L>0$ and $\tau \in (0,1]$. Suppose that $d \ge \log_{16/15}(4/\tau^2)$ and $T \ge \frac{30 \sigma^2}{L^2D^2\log_{16/15}(2/\tau)}$. Let $\sX = B_d(\bzero, D)$ and $\QC$ denote the set of $\tau$-Quasar convex and $L$-smooth functions. For some universal constant $c>0$, we have
		\[
		\sup_{\phi \in \mO_{\sigma}} \ \epsilon^*_T(\QC, \sX; \phi) \ge c \cdot \frac{D\sigma}{\tau\sqrt{\ln(2/\tau) T}}~.
		\]
	\end{theorem}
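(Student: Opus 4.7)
The plan is to reduce the last theorem to a hypothesis-identification problem, following the divergence composition strategy outlined in the introduction, and then apply Fano's inequality. First, I would construct a packing of candidate minimizers: pick $k = \lceil \log_{16/15}(4/\tau^2) \rceil \le d$ and, by a Gilbert--Varshamov argument on $\{-1,+1\}^k$, extract $V$ with $|V| \ge (16/15)^k \asymp 2/\tau$ and pairwise Hamming distance at least a constant fraction of $k$. Embedding $V$ into $\R^d$ by padding with zeros and setting $\bx_v^* = \frac{D}{2\sqrt{k}} v$ places the candidate minimizers on the sphere of radius $D/2$ inside $\sX = B_d(\bzero, D)$, with pairwise separation $\|\bx_v^* - \bx_{v'}^*\| = \Omega(D)$.

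Next, for each $v \in V$ I would build $f_v$ by perturbing a common baseline $f_0$ with an $L$-smooth, compactly supported ``well'' localized in $R_v := B_d(\bx_v^*, r)$ that makes $\bx_v^*$ the unique global minimum of $f_v$. The baseline gradient has magnitude of order $s$ (a free parameter to be optimized), and the construction is tuned so that (i) $f_v$ is $L$-smooth and $\tau$-QC, and (ii) $\nabla f_v \equiv \nabla f_{v'}$ on $\sX \setminus (R_v \cup R_{v'})$. Under (ii), the divergence composition inequality applied with $R = R_v \cup R_{v'}$ has a vanishing ``outside'' term and reduces to
\[
\mathrm{KL}(\mathbb{P}_v^T, \mathbb{P}_{v'}^T) \le \bigl(\E_v[N_{R_v}] + \E_v[N_{R_{v'}}]\bigr)\,\sup_{\bx \in R}\mathrm{KL}(\mathbb{P}_{v,\bx}, \mathbb{P}_{v',\bx}),
\]
with the per-query KL controlled by $O(s^2/\sigma^2)$ for a $\sigma$-sub-Gaussian oracle. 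Averaging over a uniform prior on $v$ and using the pairwise disjointness of the $R_v$'s to obtain $\sum_v \E_v[N_{R_v}] \le T$ yields an average pairwise KL of order $T s^2/(|V|\sigma^2)$. A standard Fano argument then forces the identification error to be $\Omega(1)$ whenever $T s^2 \lesssim |V|\log|V|\,\sigma^2$, and since misidentification costs at least $\epsilon = \Omega(sD/\tau)$ in objective value (the $1/\tau$ factor coming from the QC slack relative to convexity), optimizing $s$ at the Fano threshold and plugging in $|V| \asymp 2/\tau$ gives the claimed rate $\Omega(D\sigma/(\tau\sqrt{\log(2/\tau)T}))$. The lower bound on $T$ in the theorem corresponds to this optimal $s$ being small enough for the wells to respect the $L$-smoothness constraint.

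The main obstacle is the explicit construction of $\{f_v\}$ that is simultaneously $L$-smooth, genuinely $\tau$-QC (\emph{not} merely convex, which would collapse the bound to the convex rate $\Omega(D\sigma/\sqrt{T})$ without the $1/\tau$ factor), has a common gradient outside the local wells, and realizes an $\Omega(sD/\tau)$ gap between distinct minimizers. Extracting the $1/\tau$ requires leveraging the defining slack of quasar-convexity relative to convexity: the baseline must have gradients whose component along $\bx - \bx_v^*$ is only a $\tau$-fraction of the full magnitude. Verifying $\tau$-QC uniformly on $\sX$, especially on the transition annulus where the well meets the baseline, will require the most delicate inequality-chasing, and is also where the precise dimension condition $d \ge \log_{16/15}(4/\tau^2)$ (ensuring that enough well-separated minimizers can be fit on the sphere) naturally emerges.
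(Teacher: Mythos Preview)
Your plan has a genuine gap: the mechanism you propose for extracting the $1/\tau$ factor does not work, and the ``gradients agree exactly outside the wells'' construction is incompatible with the slow rate you are trying to prove.

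Concretely, if $f_v=f_0+b_v$ with $b_v$ supported in $R_v$, then inside $R_v$ the gradient difference $\nabla f_v-\nabla f_{v'}=\nabla b_v$, whose magnitude is governed by the well geometry (at most $O(Lr)$ by smoothness, or $O(d_v/r)$ if the well has depth $d_v$ and radius $r$), \emph{not} by your baseline parameter $s$. More importantly, the misidentification cost is just the well depth: if $\hat\bx$ lands in $R_{v'}$ (disjoint from $R_v$) then $f_v(\hat\bx)-f_v^*=f_0(\hat\bx)-f_0(\bx_v^*)+d_v$, and with any symmetric baseline this is $O(d_v)$. There is no $1/\tau$ in the gap; the ``QC slack'' does not inflate the suboptimality of a point once the function is fixed. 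In fact, a direct obstruction shows your construction cannot be $\tau$-QC for all $v$: pick $\bx=\bzero$ (outside all wells since the $\bx_v^*$ sit on the sphere of radius $D/2$). Then $\nabla f_v(\bzero)=\nabla f_0(\bzero)$ for every $v$, and QC demands $\langle\nabla f_0(\bzero),\,-\bx_v^*\rangle\ge\tau(f_v(\bzero)-f_v^*)>0$ for every $v$. Since the $\bx_v^*$ are well spread on a sphere, no single vector $\nabla f_0(\bzero)$ can have positive inner product with all $-\bx_v^*$; some $f_v$ must violate QC at the origin.

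The paper's construction works differently and this difference is essential. The functions do \emph{not} agree outside small wells. Each $f_i$ has a radial gradient of magnitude $L\Delta$ on a large annulus $\Delta\le\|\bx-\bz_i\|<D/4$ (this is what produces the linear-in-$D$ gap $f_i\gtrsim LD\Delta$ outside $B(\bz_i,5D/16)$), and then the gradient magnitude drops to $L(1-a)\Delta\le L\tau\Delta$ in the far field. Comparing to the zero reference, the per-query KL is $O((L\Delta)^2/\sigma^2)$ inside the large ball $B(\bz_i,5D/16)$ and $O((L\tau\Delta)^2/\sigma^2)$ outside. The outside term is \emph{not} zero; it is the dominant contribution after averaging over the $m\asymp 1/\tau^2$ hypotheses (which also brings the inside term down to the same order). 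The $1/\tau$ in the final bound thus enters through the KL, not through the gap: QC is what permits the far-field gradient to be a factor $\tau$ smaller than the near-field gradient while the function value stays of order $LD\Delta$, and this is exactly the ``delicate inequality-chasing'' you anticipate---but it must be done for a construction whose far-field gradients differ by $O(L\tau\Delta)$, not zero.
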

	
	In the following, we give a detailed proof sketch of Theorem~\ref{thm:1}. Its complete proof, as well as the proofs of the other results, can be found in Sections~\ref{sec:proof1},~\ref{sec:proof2}, and \ref{sec:proof3}. The main difference between the proofs lies in the function construction.
	\paragraph{Proof Sketch of Theorem~\ref{thm:1}.} 
	Let \(L, \mu > 0\), \(\tau \in (0,1]\), and \(\sigma > 0\). Suppose \(\kappa := \tfrac{L}{\mu} \ge 202\) and \(d \ge 3 \log_{5/4}\!\bigl(2/\tau\bigr)\). We begin by constructing a finite family of ``difficult'' functions. Next, we introduce an explicit form of the stochastic oracle. Then, we reduce the optimization problem to a function identification one. Finally, we employ the divergence decomposition lemma together with Pinsker's inequality to derive the lower bound.

	We introduce the following additional notation. For any integer \(n \ge 1\), let \(\mathcal{F}_d\) denote the class of real-valued functions on \(\mathbb{R}^d\) that are \(L\)-smooth, $\tau-\QC$, and satisfy the $\mu-\QG$. Let \(A := \{1, \dots, d_0\}\) and \(\bar{A} := [d] \setminus A\). For any \(\bx \in \mathbb{R}^d\), let \(\bx_A \in \mathbb{R}^{d_0}\) be the vector of its first \(d_0\) coordinates, and \(\bx_{\bar{A}} \in \mathbb{R}^{d - d_0}\) be the vector of its remaining coordinates. For two 
	vectors \(\mathbf{a} \in \mathbb{R}^{d_0}\) and \(\mathbf{b} \in \mathbb{R}^{d - d_0}\), we write \((\mathbf{a}, \mathbf{b})\) to denote the vector in \(\mathbb{R}^d\) whose first \(d_0\) components are \(\mathbf{a}\) and last \(d - d_0\) components are \(\mathbf{b}\).
	
	\textbf{Functions Construction.} Define \(d_0 := \lceil 2\log_{5/4}(2/\tau)\rceil\). For \(d \ge 3\log_{5/4}(2/\tau)\), we construct a function \(F: \R^d \to \R\) that depends only on the first \(d_0\) coordinates. Concretely, there is a function \(f: \R^{d_0} \to \R\) such that \(F(\bx) = f(\bx_A)\). Because \(F\) does not depend on the remaining \(d - d_0\) coordinates, its partial derivatives in those directions are zero. Hence, the gradient of \(F\) is \(\nabla F(\bx) = \bigl(\nabla f(\bx_A), \bzero_{\bar{A}}\bigr)\), so it matches \(\nabla f(\bx_A)\) in the first \(d_0\) entries and is zero elsewhere. Before detailing the expression of \(f\), we present the following lemma (proved in Appendix~\ref{sec:tech}), which ensures that the properties of \(f\) carry over to \(F\).
	\begin{lemma}\label{lem:0}
		If \(f \in \mathcal{F}_{d_0}\) and \(f\) has a unique global minimizer, then \(F \in \mathcal{F}_d\).
	\end{lemma}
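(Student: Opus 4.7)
The plan is to verify in turn the three defining properties of $\mathcal{F}_d$ for the lifted function $F$, exploiting the fact that $F$ is constant along the last $d-d_0$ coordinates. This structure makes gradients, minimizer sets, and projections transparent, so each property should reduce in one step to the corresponding property of $f$.

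First, I would set up the basic geometric objects. Since $F(\bx)=f(\bx_A)$, the chain rule immediately gives $\nabla F(\bx)=(\nabla f(\bx_A),\bzero_{\bar A})$ and $F^*=f^*$. The uniqueness assumption on the global minimizer $\bx^*_f$ of $f$ forces the set of minimizers of $F$ to be the affine subspace $\sX^*_F=\{\bx^*_f\}\times\R^{d-d_0}$, which is convex, so that the Euclidean projection of any $\bx\in\R^d$ onto $\sX^*_F$ is simply $\bx_p=(\bx^*_f,\bx_{\bar A})$. In particular, $\bx-\bx_p=(\bx_A-\bx^*_f,\bzero_{\bar A})$, and note that the projection of $\bx_A$ onto the (singleton) minimizer set of $f$ is exactly $\bx^*_f$, which is what couples the conditions for $F$ to those for $f$.

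Next I would dispatch the three properties. For $L$-smoothness, write
\[
\norm{\nabla F(\bx)-\nabla F(\by)} = \norm{\nabla f(\bx_A)-\nabla f(\by_A)} \le L\norm{\bx_A-\by_A} \le L\norm{\bx-\by},
\]
using the $L$-smoothness of $f$ and the fact that projecting onto the first $d_0$ coordinates is $1$-Lipschitz. For $\tau$-QC, compute the inner product using the zero block of $\nabla F$:
\[
\langle \nabla F(\bx),\bx-\bx_p\rangle = \langle \nabla f(\bx_A),\bx_A-\bx^*_f\rangle \ge \tau\bigl(f(\bx_A)-f^*\bigr) = \tau\bigl(F(\bx)-F^*\bigr),
\]
by the $\tau$-QC property of $f$ at $\bx_A$ (with projection $\bx^*_f$). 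For $\mu$-QG,
\[
F(\bx)-F^* = f(\bx_A)-f^* \ge \tfrac{\mu}{2}\norm{\bx_A-\bx^*_f}^2 = \tfrac{\mu}{2}\norm{\bx-\bx_p}^2,
\]
using the $\mu$-QG property of $f$ and the identity $\bx-\bx_p=(\bx_A-\bx^*_f,\bzero_{\bar A})$.

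There is no real obstacle here, since each property factorizes cleanly through the projection onto the first $d_0$ coordinates. The one subtlety worth flagging is the role of the uniqueness hypothesis on the minimizer of $f$: it guarantees that the projection of an arbitrary $\bx$ onto $\sX^*_F$ has first block exactly $\bx^*_f$, which is what lets us invoke the QC and QG inequalities of $f$ with a single, well-defined reference point $\bx^*_f$. Without uniqueness one would instead have to project $\bx_A$ onto a larger set $\sX^*_f$, which would still work but would require slightly more care in the bookkeeping.
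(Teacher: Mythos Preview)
Your proposal is correct and follows essentially the same approach as the paper's proof (Lemma~\ref{lem:dim}): identify the minimizer set of $F$ as $\{\bx^*_f\}\times\R^{d-d_0}$, compute the projection $\bx_p=(\bx^*_f,\bx_{\bar A})$, and then reduce each of the three defining properties ($L$-smoothness, $\tau$-QC, $\mu$-QG) to the corresponding property of $f$ via the identities $\nabla F(\bx)=(\nabla f(\bx_A),\bzero_{\bar A})$ and $\bx-\bx_p=(\bx_A-\bx^*_f,\bzero_{\bar A})$. Your remark on the role of the uniqueness hypothesis is also on point.
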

	Let \(\Delta > 0\) and \(a \in (0, 1)\) be parameters to be specified later. Let \(m \ge 2\) denote the size of the function subclass we are constructing. We pick \(m\) elements \(\bz_1, \dots, \bz_m\) in \(B_{d_0}(\bzero_A, 5\Delta)\), and define a set of functions \(f_{1}, \dots, f_{m}\) so that each \(f_{i}\) belongs to \(\sF_{d_0}\) and admits \(\bz_i\) as its unique global minimizer. Following the insight from Section~\ref{sec:intro}, we design each \(f_{i}\) so that its curvature is large near \(\bz_i\) but smaller elsewhere, subject to the constraint that \(f_{i}\) must lie in \(\sF_{d_0}\). For each \(i \in [m]\), let \(f_{i}\) be a function whose value at \(\bz_i\) is zero and whose gradient for all \(\bx \in \mathbb{R}^{d_0}\) is given by
	\[
	\nabla f_{i}(\bx) =
	\begin{cases}
		\displaystyle 169\mu(\bx - \bz_i), & \text{if } \|\bx - \bz_i\| < \Delta, \\[1em]
		\displaystyle -169\mu\Bigl(\bx - \bz_i - \Delta \tfrac{\bx - \bz_i}{\|\bx - \bz_i\|}\Bigr) + 169\mu\Delta \tfrac{\bx - \bz_i}{\|\bx - \bz_i\|}, & \text{if } \Delta \leq \|\bx - \bz_i\| < (1+a)\Delta, \\[1em]
		\displaystyle 169\mu(1-a)\Delta  \tfrac{\bx - \bz_i}{\|\bx - \bz_i\|} , & \text{if } \bx \in B_{d_0}(\boldsymbol{0}, 8\Delta)\setminus B_{d_0}(\bz_i, (1+a)\Delta), \\[1em]
		\displaystyle  169\mu\left(\bx-8\Delta \tfrac{\bx}{\norm{\bx}}\right)+169\mu(1-a)\Delta  \tfrac{\bx - \bz_i}{\|\bx - \bz_i\|}, & \text{if } \norm{\bx} > 8\Delta~.
	\end{cases}
	\] 
	The corresponding explicit form of \(f_i\) is provided in Section~\ref{sec:proof1}. We now specify the choice of the parameter \(a\). Our goal is to make the norm of the differences among the gradients \(\left( \nabla f_i\right)_{i \in [m]}\) small so that the associated relative entropy between feedback distributions remains small. To achieve this, we choose \(a\) to be as large as possible while assuring that the functions \(f_i\) are \(\tau\)-QC and \(\mu\)-QG. This requirement corresponds to choosing $a$ as the positive root of the function \(r \mapsto 1 - \tfrac{\tau}{2} - \tau r - \bigl(1 - \tfrac{\tau}{2}\bigr)r^2\), which implies \(1 - a \le \tau\). To gain a geometric intuition, we present in Figure~\ref{fig:fz} a plot of \(f_i\) in the one-dimensional case.
	
	\begin{figure}
		\centering
		\includegraphics[width=0.9\linewidth]{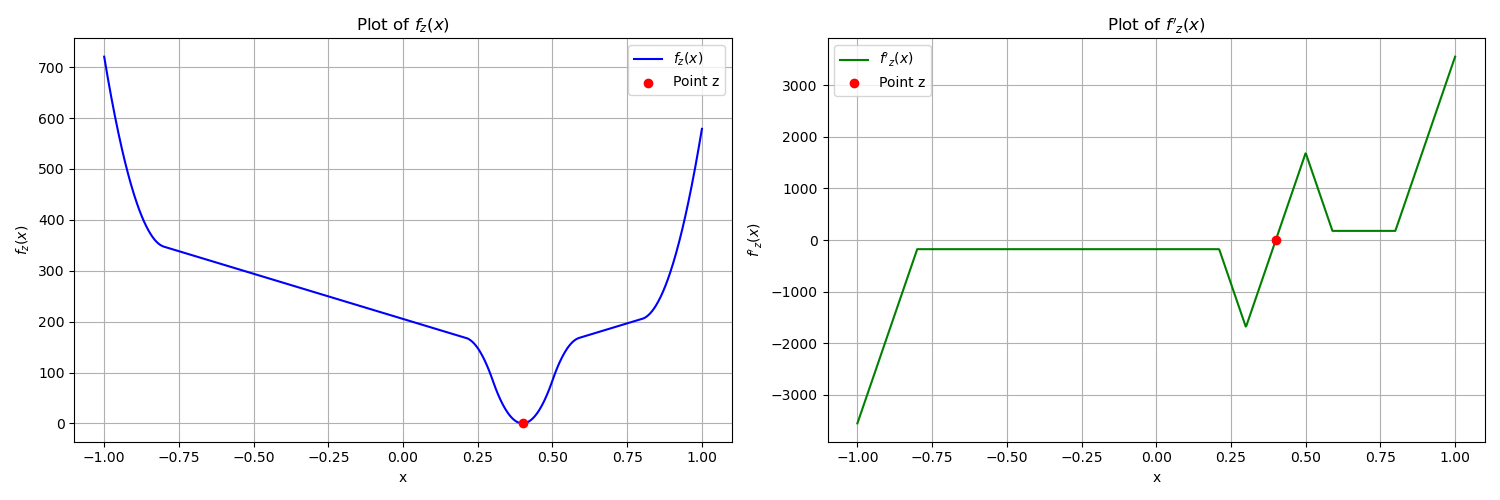}
		\caption{Plots of $f_{z}(x)$ and its derivative $f'_{z}(x)$ in one dimension, with $\mu=100, \Delta = 0.1, \tau = 0.2, z = 0.4$. The red dot marks the global minimizer.}
		\label{fig:fz}
	\end{figure}
	
	In Lemma~\ref{lem:f}, we prove that each \(f_i\) lies in \(\sF_{d_0}\). Consequently, by Lemma~\ref{lem:0}, the functions \(F_i: \mathbb{R}^d \to \mathbb{R}\) defined by \(F_i(\bx) = f_i(\bx_A)\) for \(i \in [m]\) belong to \(\sF_d\). Observe that each \(F_i\) achieves its minimum value of 0, and its set of global minimizers is \(\{(\bz_i, \by): \by \in \R^{d - d_0}\}\), which is convex.
	
	\textbf{Oracle Construction.} We now specify the stochastic oracle \(\phi: \R^d \times \sF_d \to \R^d\) used in the proof. Let \(\bxi\) be a sample from a \(d_0\)-dimensional normal distribution with zero mean and covariance matrix \(\frac{\sigma^2}{d_0}\bI_{d_0}\), i.e., \(\bxi \sim \sN_{d_0}\bigl(\bzero_A, \frac{\sigma^2}{d_0}\bI_{d_0}\bigr)\). Given an input \(\bx \in \R^d\) and a function \(H \in \sF_d\), we define \(\phi(\bx, H) = \nabla H(\bx) + (\bxi, \mathbf{0}_{\bar{A}})\). Thus, \(\phi\) is a stochastic oracle belonging to $\mathbb{O}_{\sigma}$ as specified in Definition~\ref{def:oracle}. Moreover, for each \(i \in [m]\), we have \(\phi(\bx, F_i) = \bigl(\nabla f_i(\bx_A) + \bxi, \bzero_{\bar{A}}\bigr)\).

	\textbf{Divergence Decomposition.} We reduce the optimization problem to one of function identification. To that end, consider a ``reference function" \(G: \R^d \to \R\) that also depends only on its first \(d_0\) coordinates. In other words, there exists a function \(g: \R^{d_0} \to \R\) such that for every \(\bx \in \R^d\), \(G(\bx) = g(\bx_{A})\). We choose \(g\) so that its gradient satisfies, for any \(\bx \in \R^{d_0}\),
	\[
	\nabla g(\bx) =
	\begin{cases}
		\displaystyle \bzero, & \text{if }\|\bx\| < 8\Delta, \\[1em]
		\displaystyle 169\mu \left( \bx-8\Delta \bx/\norm{\bx}\right), & \text{if }\|\bx \| \ge 8\Delta~.
	\end{cases}
	\]
	Let \(\hat{\bx} \in \R^d\) denote the output of the optimization algorithm. For each \(i \in [m]\), let \(\fP_i\) and \(\Q\) denote the probability distributions of the \(T\) oracle feedbacks\footnote{For a rigorous definition of these quantities, see Section~\ref{sec:can_model} where we define the canonical model.} when the objective function is \(F_i\) and \(G\), respectively (we omit the dependence on \(T\) in our notation). Also, let \(\E_i[\cdot]\) and \(\E[\cdot]\) denote the expectations with respect to \(\fP_i\) and \(\Q\), respectively. For each \(i \in [m]\), we define the “good identification event” as \(\sE_i := \{\hat{\bx}_A \in B_{d_0}(\bz_i, 2\Delta)\}\). In the remainder of the proof, we derive a lower bound on the misidentification event \(\neg \sE_i\) and then use this bound to relate misidentification to the optimization error using the bound below. For any $i \in [m]$, we have
	\begin{align}
		\E_i[F_i(\hat{\bx})] &= \E_i[f_i(\hat{\bx})]\nonumber\\
		&\ge \fP_i(\sE_i) \inf_{\bx \in B_{d_0}(\bz_i, 2\Delta)} \ \{f_{i}(\bx) \}+(1-\fP_i(\sE_i))\inf_{\bx \notin B_{d_0}(\bz_i, 2\Delta)} \ \{ f_{i}(\bx)\}\nonumber\\ 
		&\ge (1-\fP_i(\sE_i))\inf_{\bx \notin B_{d_0} \ (\bz_i, 2\Delta)} \{ f_{i}(\bx)\}\nonumber\\
		&\ge (1-\fP_i(\sE_i))\frac{169}{2} \mu \Delta^2~.\label{eq:opt}
	\end{align}
	Next, we specify the choice of \(m\) and \(\bz_i\). We select \(m\) to ensure that the balls \(B_{d_0}(\bz_i, 2\Delta)\) for \(i \in [m]\) remain disjoint. Recall that \(\bz_i\) is defined as a sequence of elements within \(B_{d_0}(\bzero_A, 5\Delta)\). The largest possible \(m\) ensuring disjointness of these balls corresponds to the packing number of \(B_{d_0}(\bzero_A, 5\Delta)\) with radius \(2\Delta\). A lower bound for this number is provided in Lemma~\ref{lem:cover}, that implies that it suffices to take \(m = \left\lceil \frac{1}{2} \left(\frac{5}{4}\right)^{d_0} \right\rceil\). Also, we select \(\bz_i\) as a sequence of elements such that \(B_{d_0}(\bz_i, 2\Delta) \cap B_{d_0}(\bz_j, 2\Delta) = \emptyset\) for \(i \neq j\).
	
	\noindent To derive an upper bound on \(\fP_i(\sE_i)\) (or equivalently, a lower bound on the probability of \(\neg \sE_i\)), we apply Pinsker’s inequality. This gives  
	\[
	\frac{1}{m} \sum_{i=1}^{m} \fP_i(\sE_i) \le \frac{1}{m} \sum_{i=1}^{m} \Q(\sE_i) + \sqrt{\frac{1}{2m} \sum_{i=1}^{m} \KL\left(\Q, \fP_i\right)}~.
	\]
	Since the events \(\sE_i\) for \(i \in [m]\) are disjoint, it follows that  
	\begin{equation}\label{eq:pins}
		\frac{1}{m} \sum_{i=1}^{m} \bigl(1 - \fP_i(\sE_i)\bigr) \ge 1 - \frac{1}{m} - \sqrt{\frac{1}{m} \sum_{i=1}^{m} \KL\left(\Q, \fP_i\right)}~.
	\end{equation}
	
	\noindent At this step, we develop an upper bound on the average \(\frac{1}{m} \sum_{i=1}^{m} \KL\left(\Q, \fP_i\right)\). Let \(N_i\) denote the number of times the algorithm queries a point \(\bx\) such that \(\bx_A \in B_{d_0}(\bz_i, 2\Delta)\), $\bI_{d_0}$ denote the $d_0\times d_0$ identity matrix, and $\sN_{d_0}(\cdot, \cdot)$ the normal distribution in dimension $d_0$. Using Lemma~\ref{lem:chain_rule} in the Appendix, we obtain  
	\begin{align*}
		\text{KL}\left(\Q, \fP_i\right)  &\le \E[N_i] \sup_{\bx \in B_{d_0}(\bz_i, 2\Delta)} \KL\left( \sN_{d_0}\left(\nabla g(\bx), \frac{\sigma^2}{d_0} \bI_{d_0}\right), \sN_{d_0}\left(\nabla f_i(\bx), \frac{\sigma^2}{d_0} \bI_{d_0}\right)\right)\\
		&\qquad+ \E[T-N_i] \sup_{\bx \notin B_{d_0}(\bz_i, 2\Delta)} \KL\left( \sN_{d_0}\left(\nabla g(\bx), \frac{\sigma^2}{d_0} \bI_{d_0}\right), \sN_{d_0}\left(\nabla f_i(\bx), \frac{\sigma^2}{d_0} \bI_{d_0}\right)\right)\\
		&\le \frac{d_0}{2\sigma^2}\mathbb{E}[N_{i}] \sup_{\bx \in B_{d_0}(\bz_i, 2\Delta)} \|\nabla g(\bx)- \nabla f_i(\bx)\|^2\\
		&\qquad + \frac{d_0}{2\sigma^2} \mathbb{E}[T-N_{\bz_i}] \sup_{\bx \notin B_{d_0}(\bz_i, 2\Delta)} \|\nabla g(\bx)- \nabla f_i(\bx)\|^2.	
	\end{align*}
	Applying the definitions of \(g\) and \(f_i\) to bound \(\|\nabla g(\bx) - \nabla f_i(\bx)\|\) in the above expression, we obtain  
	\[
	\KL\left(\Q, \fP_i\right) \le \frac{169^2 d_0 \mu^2 \Delta^2}{2\sigma^2} \E[N_{i}] + \frac{169^2 d_0\tau^2 \mu^2 \Delta^2}{2\sigma^2}T.
	\]  
	Averaging over \(m\) and using the fact that \(\sum_{i=1}^{m} N_i \le T\), which holds due to our choice of the sequence \((\bz_i)\), we derive the bound  
	\begin{equation}\label{eq:kl_avr}
		\frac{1}{m} \sum_{i=1}^{m} \KL\left(\Q, \fP_i\right) \le \frac{169^2 d_0 \mu^2 \Delta^2}{2\sigma^2} \left(\tau^2+\frac{1}{m}\right)T.
	\end{equation}  
	The factor \(\tau^2+\frac{1}{m}\) is of the order \(\tau^2 + \left(\frac{4}{5}\right)^{d_0}\). The term \(\tau^2\) arises from the relative entropy when querying a point \(\bx\) such that \(\|\bx_A - \bz_i\| \ge 2\Delta\), while the term \(\left(\frac{4}{5}\right)^{d_0}\) originates from querying a point near the global minimum, weighted by the probability of selecting a given objective function from the subclass of size \(m\). This underscores the necessity of choosing \(d_0\) on the order of \(\log(2/\tau)\), as this ensures that \(\tau^2\) dominates.  
	
	Combining \eqref{eq:pins} with \eqref{eq:kl_avr} yields a lower bound on the average misidentification error \(\frac{1}{m} \sum_{i=1}^{m} (1-\fP_i(\sE_i))\). This, in turn, leads to the following lower bound when applying \eqref{eq:opt}:  
	\[
	\frac{1}{m} \sum_{i=1}^{m} \E_i[F_i(\hat{\bx})] \ge \frac{169}{2}\mu \Delta^2 \left(1-\frac{1}{m} - \sqrt{\frac{169^2 d_0 \mu^2\Delta^2}{4\sigma^2}\left(\tau^2+\frac{1}{m}\right)T } \right).
	\]  
	Substituting the expressions for \(d_0\) and \(m\) and optimizing this expression with respect to \(\Delta > 0\) leads to the final lower bound.

	\section{Upper Bounds}\label{sec:ub}
	In this section, we establish upper bounds for two classes of \(L\)-smooth functions, those satisfying the \(\mu\)-RSI condition, and those satisfying both the \(\tau\)-QC and \(\mu\)-QG conditions. We show that stochastic gradient descent (SGD) with decreasing step size \(\eta_{t} = \mathcal{O}\bigl(\tfrac{1}{\mu \, t}\bigr)\) and \(\eta_{t} = \mathcal{O}\bigl(\tfrac{1}{\mu \tau \, t}\bigr)\), given access to stochastic gradients of bounded variance \(\sigma^2\), achieves rates of order \(\mathcal{O}\bigl(\tfrac{L\,\sigma^2}{\mu^2 \,T}\bigr)\) for the first class and \(\mathcal{O}\bigl(\tfrac{\sigma^2}{\mu\,\tau^2\,T}\bigr)\) for the second class, respectively. These bounds match the lower bounds in Theorems~\ref{thm:2} and~\ref{thm:1}, up to logarithmic factors in \(L/\mu\) and \(1/\tau\), respectively.
	
	Next, we analyze the problem of minimizing an \(L\)-smooth function satisfying the \(\mu\)-RSI condition in the one-dimensional setting. We propose a novel procedure that, with high probability, converges at a rate of \(\tilde{\mathcal{O}}\bigl(\tfrac{\log(1/\delta)}{\mu\,T}\bigr)\), up to logarithmic factors in \(T\) and other parameters. This result suggests that, in one dimension, the dependence on the condition number \(\kappa\)—which appears in the lower bound when \(d \ge \Omega(\log(2\kappa))\)—can be avoided for \(\mu\)-RSI functions, and that for \(\mu\)-QG and \(\tau\)-QC functions, the dependence on \(1/\tau^2\) can be improved to \(1/\tau\). The proof of Theorem~\ref{thm:ub1} is in Section~\ref{sec:proof_ub1}, and the proof of Theorem~\ref{thm:dim1} is in Section~\ref{sec:proof_dim1}.

	\begin{theorem}\label{thm:ub1}
		Let $f:\R^d \to \R$ be a $L$-smooth function. Run SGD with initial point $\bx_1$ and step sizes $(\eta_t)$ for $T$ iterations, given access to a stochastic gradient oracle with variance bounded $\sigma^2$.
		\begin{itemize}
			\item If $f$ is $\mu-\RSI$ and $\eta_t = \frac{2}{\mu (t+\frac{2L^2}{\mu^2}+1)}$, then
			\[
			\E[f(\hat{\bx}_T)]-f^\star
			\le\frac{\mu^2L^3+L^5}{2\mu^4 T(T+1)}\norm{\bx_1-\bx_1^*}^2+\frac{2L\sigma^2}{\mu^2(T+1)},	
			\]
			where $\bx^*_1$ is the projection of $\bx_1$ on the set of global minimizers. 
			\item If $f$ is $\tau-\QC$ and $\mu-\QG$, and $\eta_t = \frac{4}{\tau \mu \left(t+\frac{16L}{\tau^2\mu}\right)}$, then
			\[
			\E[f(\hat{\bx}_T)]-f^\star \le \frac{145L^2}{\tau^4\mu^2T(T+1)}+ \frac{16\sigma^2}{\tau^2\mu (T+1)}~.
			\]
		\end{itemize}
		In both cases, $\hat{\bx}_T$ is sampled from $\bx_1, \dots, \bx_T$ with weights $w_t = \frac{2t}{T(T+1)}$.
	\end{theorem}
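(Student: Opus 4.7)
The plan is to derive both bounds via a one-step SGD recursion on the squared distance $D_t := \E\|\bx_t - \bx^*_t\|^2$ to the projection of $\bx_t$ onto the set of minimizers, combined with an integrating-factor telescoping argument tailored to the weighted sampling $w_t = 2t/(T(T+1))$. Expanding $\|\bx_{t+1} - \bx^*_t\|^2$ using the update $\bx_{t+1} = \bx_t - \eta_t g_t$, taking conditional expectation, and using $\|\bx_{t+1} - \bx^*_{t+1}\|^2 \leq \|\bx_{t+1} - \bx^*_t\|^2$ together with $\E_t[g_t] = \nabla f(\bx_t)$ and $\E_t\|g_t\|^2 \le \|\nabla f(\bx_t)\|^2 + \sigma^2$, I would obtain the generic recursion
\[
D_{t+1} \leq D_t - 2\eta_t \E\langle \nabla f(\bx_t), \bx_t - \bx^*_t\rangle + \eta_t^2 \bigl(\E\|\nabla f(\bx_t)\|^2 + \sigma^2\bigr).
\]
The two parts of the theorem then differ only in how this is specialized.

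\textbf{RSI case.} I would invoke $\mu$-RSI to lower-bound the inner product by $\mu D_t$, and $L$-smoothness (combined with $\nabla f(\bx^*_t) = \bzero$) to bound $\|\nabla f(\bx_t)\|^2 \leq L^2 D_t$, giving $D_{t+1} \leq (1 - 2\mu\eta_t + L^2\eta_t^2) D_t + \eta_t^2\sigma^2$. The step size $\eta_t = 2/(\mu(t+a))$ with $a = 2L^2/\mu^2 + 1$ is calibrated so that $L^2\eta_t \leq \mu$, collapsing the bracket to $1 - \mu\eta_t = (t+a-2)/(t+a)$. Introducing $A_t = (t+a-1)(t+a-2)$, which satisfies $A_{t+1}(1-\mu\eta_t) = A_t$, and observing $A_{t+1}\eta_t^2 \leq 4/\mu^2$, telescoping yields $D_t \leq a(a-1) D_1/A_t + 4\sigma^2(t-1)/(\mu^2 A_t)$. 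Substituting into $\E[f(\hat{\bx}_T) - f^*] \leq (L/2)\sum_t w_t D_t$ (by $L$-smoothness and $\nabla f(\bx^*_t)=\bzero$) and evaluating $\sum_t t/A_t$ and $\sum_t t(t-1)/A_t$ by partial fractions yields the announced bound.

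\textbf{QC$\cap$QG case.} Here I would apply $\tau$-QC to replace the inner product by $\tau(f(\bx_t)-f^*)$ and use the sharper smoothness estimate $\|\nabla f(\bx_t)\|^2 \leq 2L(f(\bx_t)-f^*)$, yielding $D_{t+1} \leq D_t - (2\tau\eta_t - 2L\eta_t^2)\E[f(\bx_t)-f^*] + \eta_t^2\sigma^2$. For $\eta_t = 4/(\tau\mu(t+b))$ with $b = 16L/(\tau^2\mu)$, one checks $L\eta_t/\tau \leq 1/4$, so $2\tau\eta_t - 2L\eta_t^2 \geq 3\tau\eta_t/2$. The key trick is to split this as $\tau\eta_t + \tau\eta_t/2$ and absorb the first piece via $\mu$-QG: $\tau\eta_t\E[f(\bx_t)-f^*] \geq (\tau\mu\eta_t/2) D_t = (2/(t+b)) D_t$. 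This produces the cleaner recursion
\[
D_{t+1} \leq \bigl(1 - 2/(t+b)\bigr) D_t - (\tau\eta_t/2)\E[f(\bx_t)-f^*] + \eta_t^2\sigma^2,
\]
on which the integrating factor $A_t = (t+b-1)(t+b-2)$ telescopes exactly: using $A_{t+1}(\tau\eta_t/2) = 2(t+b-1)/\mu$ and $A_{t+1}\eta_t^2 \leq 16/(\tau^2\mu^2)$, summation from $t=1$ to $T$ gives $(2/\mu)\sum_t (t+b-1) \E[f(\bx_t)-f^*] \leq A_1 D_1 + 16T\sigma^2/(\tau^2\mu^2)$ after dropping the nonnegative $A_{T+1}D_{T+1}$. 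Finally, $(t+b-1) \geq t$ lets me bound the left sum from below by $(T(T+1)/2)\E[f(\hat{\bx}_T)-f^*]$, and dividing by $T(T+1)/\mu$ yields the stated rate.

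The main obstacle I anticipate is the $\mu$-QG splitting step: without it, Abel-summing the distance terms against the natural multiplier $t(t+b)$ produces a residual $\sum_t(2t+b-1)\E D_t$ whose coefficient grows linearly in $t$ like the target $\sum_t t\,\E[f(\bx_t)-f^*]$, so a direct application of $\mu$-QG leads to either a logarithmic loss or a circular inequality. The splitting effectively boosts the contraction rate from $1 - 1/(t+b)$ to $1 - 2/(t+b)$, which is exactly what allows the quadratic integrating factor to telescope without residual. The step-size constants $a$ and $b$ are chosen precisely so that $L^2\eta_t \leq \mu$ (RSI) and $L\eta_t \leq \tau/4$ (QC$\cap$QG), ensuring the $O(\eta_t^2)$ gradient-norm terms are absorbed into the primary contraction; the final numerical constants in the theorem then follow directly from the estimates $A_{t+1}\eta_t^2 \leq 4\sigma^2/\mu^2$ and $\leq 16\sigma^2/(\tau^2\mu^2)$, respectively.
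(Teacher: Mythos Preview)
Your QC$\cap$QG argument is correct and is organized differently from the paper's. The paper multiplies the QC inequality by $(t+a)/\eta_t$, Abel-sums the distance terms, and then applies QG to the summation residual $\|\bx_{t+1}-\bx^*_{t+1}\|^2\leq (2/\mu)(f(\bx_{t+1})-f^*)$ to fold it back into the left-hand side; the $\|\nabla f\|^2$-term is handled likewise via $\|\nabla f(\bx_t)\|^2\le 2L(f(\bx_t)-f^*)$, and the shift $a=16L/(\tau^2\mu)$ is exactly what absorbs both residuals. Your route instead applies QG \emph{before} telescoping, trading part of the extracted $f(\bx_t)-f^*$ for extra contraction on $D_t$, so that the integrating factor $(t+b-1)(t+b-2)$ telescopes exactly and $f(\bx_t)-f^*$ survives inside the telescope with weight $\sim t$. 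Both reach the same bound; your version is arguably cleaner since it bypasses the Abel residual bookkeeping.

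Your RSI argument has a genuine (if minor) gap in the last step. The pointwise bound $D_t\le a(a-1)D_1/A_t+4\sigma^2(t-1)/(\mu^2 A_t)$ is fine, and the variance piece behaves well since $t(t-1)/A_t\le 1$. But the initial-condition piece does not: when you substitute into $\E[f(\hat\bx_T)-f^*]\le (L/2)\sum_t w_t D_t$, the $D_1$-contribution is
\[
\frac{L\,a(a-1)\,D_1}{T(T+1)}\sum_{t=1}^{T}\frac{t}{(t+a-1)(t+a-2)},
\]
and this sum is $\Theta\bigl(\log(1+T/a)\bigr)$, not $O(1)$ (partial fractions give $\sum_t t/A_t=(a-1)\sum_t 1/(t+a-1)-(a-2)\sum_t 1/(t+a-2)$, whose leading behavior is a single harmonic sum). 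So your bound on the $\|\bx_1-\bx_1^*\|^2$-term carries an extra $\log T$ and does not match the stated constant. The reason your QC$\cap$QG argument avoids this is precisely that you keep $f(\bx_t)-f^*$ \emph{inside} the telescope with weight $\sim t$; for RSI you discarded it (replacing the inner product by $\mu D_t$ and $\|\nabla f\|^2$ by $L^2 D_t$) and only reintroduced function values at the end via $f-f^*\le (L/2)D_t$. The paper fixes this by first writing $(\mu/L)\bigl(f(\bx_t)-f^*\bigr)\le \langle\nabla f(\bx_t),\bx_t-\bx_t^*\rangle-\tfrac{\mu}{2}\|\bx_t-\bx_t^*\|^2$ (which follows from RSI plus $f-f^*\le (L/2)\|\bx_t-\bx_t^*\|^2$), then multiplying by $(t+a)$ and using $\eta_t=2/(\mu(t+a+1))$ so that the distance terms telescope as $\tfrac{\mu}{4}(t+a-1)(t+a)D_t-\tfrac{\mu}{4}(t+a)(t+a+1)D_{t+1}$ and the gradient-norm term is absorbed via $\|\nabla f(\bx_t)\|^2\le 2L(f(\bx_t)-f^*)$. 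This yields $\sum_t t\,\E[f(\bx_t)-f^*]$ bounded by a constant with no logarithm.
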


	Next, we present an algorithm for solving the optimization problem in the class of \( \mu \)-RSI and \( L \)-smooth functions in the one-dimensional case, providing guarantees that hold with high probability given access to a subGaussian oracle. The primary motivation is to demonstrate that the dependency of the optimization error on \( L \) and \( \mu \) can be improved, at least in the one-dimensional setting.
	
	Consider a function \( f:\R \to \R \) that is \( L \)-smooth and \( \mu \)-RSI. Suppose we are given \( D>0 \) such that the interval \( [-D/2, D/2] \) contains at least one minimizer of \( f \). If such prior knowledge is unavailable, one can determine \( D \) using the observation in Remark~\ref{rem:1}. We assume access to \( T \) queries from a \( \sigma \)-subGaussian oracle for the derivatives of \( f \).
	
	Finding a minimizer of \( f \) is equivalent to locating a point \( x \) such that \( f'(x) = 0 \). This suggests formulating the problem as a stochastic root-finding problem, for which a natural approach is to apply a dichotomic search over the interval \( [-D/2, D/2] \) based on estimates of the derivatives. However, due to the stochastic nature of the oracle, the precision of these estimates is subject to deviations of order \( \mathcal{O}(\sqrt{\log(1/\delta)/T}) \).
	
	This raises a key challenge: given a point \( x \) where the derivative estimate is minimal, we cannot rule out the possibility that \( f(x) \) remains large. Indeed, due to the potential non-convexity of \( f \), the derivative \( f' \) may attain large values between \( x \) and a true minimizer \( x^* \), leading to a significant optimization gap \( f(x) - f^* \). This suggests that solving the problem requires not only ensuring that the derivative estimates at the output are small but also verifying that the derivative estimates in the surrounding region are similarly small.
	
	Building on this insight, we develop a procedure performing a dichotomic search over the interval \( [-D/2, D/2] \) using aggregates of the derivative estimates at each iteration point. The connection between the aggregated derivative estimates and the optimization gap is established via the fundamental theorem of calculus. The pseudocode of the algorithm is presented in Section~\ref{sec:proof_dim1}.

	\begin{theorem}\label{thm:dim1}
		Let $f:\R \to \mathbb{R}$ be a $L$-smooth and $\mu$-RSI function.
		Suppose that $T \ge 2(\kappa+1)\log_{4/3}\left( \frac{L\mu^2D^2T}{192\sigma^2}\right)$, and for some minimizer of $f$ we have $\abs{x^*} \le D/2$. Then, given access to $T$ queries from a $\sigma$-subGaussian oracle, the output of Algorithm~\ref{algo:n1} with input $(T, D, \mu, L, \sigma, \delta)$ satisfies with probability at least $1-\delta$
		\[
		f(a)-f(x^*) \le \frac{128 \sigma^2}{\mu T} \log_{4/3}\left(\frac{L\mu D^2T}{192\sigma^2}\right) \log \frac{\kappa \log_{4/3}\left(\frac{L\mu D^2T}{192\sigma^2}\right)}{\delta}~.
		\]
	\end{theorem}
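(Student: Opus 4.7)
The plan is to analyze Algorithm~\ref{algo:n1} in three steps: (i) concentrate the derivative estimates obtained at each iteration via a subGaussian tail bound; (ii) track the geometric shrinkage of the candidate interval containing a minimizer, showing that each iteration cuts its length by a factor $3/4$; and (iii) translate the final aggregated derivative estimates into a suboptimality bound on $f(a)-f(x^*)$ via the fundamental theorem of calculus.

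First, I would fix $K = \log_{4/3}\!\bigl(L\mu D^2 T/(192\sigma^2)\bigr)$ as the number of dichotomic iterations; by the budget assumption, roughly $N = T/K$ oracle calls can be spent per iteration. Each averaged estimate $\hat g_k$ of $f'$ at an iteration-$k$ query point deviates from the true derivative by at most $\eta_k = C\sigma\sqrt{\log(K/\delta)/N}$ with probability $1-\delta/K$ by subGaussian concentration, so a union bound over all iterations gives simultaneous accuracy $\eta = O\bigl(\sigma\sqrt{K \log(K/\delta)/T}\bigr)$ with probability at least $1-\delta$.

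Second, I would exploit the one-dimensional consequence of RSI and $L$-smoothness, namely $\mu|x-x^*|\le |f'(x)|\le L|x-x^*|$, which implies that $f'(x)$ has the same sign as $x-x^*$ whenever $x\ne x^*$. On the current interval $[a_k,b_k]$ of length $I_k$, the algorithm queries $f'$ at the two points $a_k+I_k/4$ and $a_k+3I_k/4$. Since these are separated by $I_k/2$, at least one of them must lie at distance $\ge I_k/4$ from $x^*$, so by RSI at least one true derivative has magnitude $\ge \mu I_k/4$; as long as $\mu I_k \gtrsim \eta$, the sign of that derivative is recovered correctly, and a quarter of the interval can be safely discarded, shrinking $I_k$ by the factor $3/4$ (which is the source of the base $4/3$ in the iteration count). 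The assumption $T\ge 2(\kappa+1)K$ ensures that $N$ is large enough, relative to $\kappa$, to reach the noise-limited regime where $\mu I_K \approx \eta$.

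The main obstacle lies in step (iii): a direct smoothness bound $f(a)-f^* \le \tfrac{L}{2}(a-x^*)^2$ combined with $|a-x^*|\lesssim \eta/\mu$ would yield only the undesirable rate $O\bigl(\kappa\sigma^2 K/(\mu T)\bigr)$. The way around this is the fundamental theorem of calculus, which is why the algorithm aggregates derivative estimates along the entire search path. Writing
\begin{equation*}
f(a) - f(x^*) \;=\; \int_{x^*}^{a} f'(t)\, \diff t,
\end{equation*}
and decomposing this integral into contributions over the successive refinement sub-intervals of lengths $I_1 > I_2 > \cdots > I_K$, each contribution is controlled by the corresponding iteration's aggregated derivative estimate plus the pointwise noise $\eta_k$. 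Because the interval lengths decrease geometrically while the per-interval estimate errors are of comparable order, the resulting telescoping sum is dominated by its last term, of order $\eta_K^2/\mu$; substituting $\eta_K^2 = O(\sigma^2 K \log(K\kappa/\delta)/T)$ produces the stated bound, where the extra $\log(\kappa\log_{4/3}(\cdot)/\delta)$ factor comes from the union bound over the $K$ iterations. The delicate point, which I would verify carefully, is that the FTC-based aggregation indeed suffices to replace the coarse $L(a-x^*)^2$ bound by a noise-level squared bound, thus stripping the proof of its $\kappa$ dependence and recovering the strongly-convex-like rate $\widetilde O(\sigma^2/(\mu T))$.
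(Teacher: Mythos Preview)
Your high-level three-step plan (concentrate, shrink, then FTC) mirrors the paper's organization, but your description of Algorithm~\ref{algo:n1} and of how step~(iii) removes the $\kappa$ factor is not what the paper does, and your version would not yield the stated bound.

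First, the algorithm does \emph{not} query two points $a_k+I_k/4$ and $a_k+3I_k/4$ per iteration. It lays down a uniform grid of $\kappa+1$ points $x_1,\dots,x_{\kappa+1}$ in an interval of length $\gamma_k\le\Delta$ around $a_k$, spends $\lfloor T/(n(\kappa+1))\rfloor$ queries at each, and then forms aggregated estimates $g_i$ that average derivatives over blocks $\mathcal P_i$ of roughly $\kappa/4$ consecutive grid points. Each $g_i$ therefore uses $\Theta(T/n)$ samples, so its concentration error is $O(\sigma\sqrt{n\log(n\kappa/\delta)/T})=\tfrac{\mu}{4}\Delta$, independently of $\kappa$; this is where the $\log(\kappa\cdot)$ in the union bound arises, not from ``$K$ iterations'' alone.

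Second, and this is the real gap, your step~(iii) proposes telescoping the integral $\int_{x^*}^{a}f'$ over the successive refinement sub-intervals. But $a$ is already within $\Delta$ of $x^*$ at the halting iteration, so there is nothing to telescope across iterations; the whole integral lives in one short interval. The issue is that smoothness only gives $|f'(t)|\le L|t-x^*|$, so a direct bound yields $\int_{x^*}^{a}|f'|\le \tfrac{L}{2}\Delta^2=\tfrac{\kappa}{2}\mu\Delta^2$, exactly the bad rate you wanted to avoid. The paper removes this $\kappa$ by a \emph{Riemann-sum} argument at the halting iteration (Lemma~\ref{lem:tech}): with $n=\kappa+1$ grid nodes on an interval of length $\le\Delta$,
\[
\Bigl|\int f' - \tfrac{\Delta}{\kappa}\textstyle\sum_i f'(x_i)\Bigr|\le \frac{L\Delta^2}{2\kappa}=\frac{\mu\Delta^2}{2},
\]
so the approximation error already scales with $\mu$, not $L$. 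The halting condition guarantees the relevant block averages $\lambda_p$ satisfy $|\lambda_p|\le \tfrac{3}{4}\mu\Delta$, hence the Riemann sum itself is $O(\mu\Delta^2)$, and one concludes $f(a_k)-f^*\le 2\mu\Delta^2$. The $\kappa$-point grid is thus essential: it is precisely what converts the $L$-dependent quadrature error into a $\mu$-dependent one. A two-point-per-iteration scheme, as in your proposal, cannot access this mechanism and would be stuck at $O(\kappa\sigma^2/(\mu T))$.
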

	\begin{remark}\label{rem:1}
		\begin{itemize}
			\item The requirement that \( D/2 \) serves as an upper bound on a minimizer of \( f \) is not restrictive. One way to ensure such a bound is by allocating a fraction of the total budget \( T \) to querying derivative samples at \( 0 \) and concentrating the empirical estimate. Then, leveraging the RSI property of \( f \), we use \( \abs{x^*} \leq \frac{1}{\mu} \abs{f'(0)} \) for some minimizer \( x^* \).
			\item A direct corollary of Theorem~\ref{thm:dim1} follows from the fact that in one dimension, a function that is \( \mu \)-QC and \( \tau \)-QG is also \( (\mu \tau/2) \)-RSI (see Lemma~\ref{lem:rsi-qc}).
		\end{itemize}
	\end{remark}
	
	The dependency on the problem parameters \( L \) and \( \mu \) is improved compared to the upper bound obtained using SGD. The bound in Theorem~\ref{thm:dim1} can be used to derive an upper bound on the expected optimization gap, for instance, by setting \( \delta = 1/T^2 \) and leveraging the fact that the function \( f \) is bounded on the considered interval. Developing an algorithm for the case of a bounded variance oracle remains an open direction for future work.

	\section{Conclusion}
	
	In this work, we establish new lower bounds for first-order stochastic optimization for several non-convex function classes. Specifically, we derive lower bounds on the number of noisy gradient queries necessary to minimize $L$-smooth functions satisfying quasar-convexity plus quadratic growth or restricted secant inequalities. Our approach leverages a \emph{divergence decomposition} technique, allowing us to construct hard instances that lead to nearly tight lower bounds on optimization complexity.
	
	Despite these advances, several open directions remain for future work. Our lower bounds include logarithmic factors in problem parameters (e.g., $\log(2/\tau)$) and removing these terms remains an open challenge. Finally,
	closing the gap in the Polyak-Łojasiewicz and error bound function classes would provide a more complete understanding of stochastic first-order optimization in non-convex settings.

	\bibliographystyle{plainnat}
	\bibliography{bib_db}
	
	\appendix
	
	\newpage
	\section{Results on Classes Comparisons}
	Let $f:\R^d\to \R$ be such that $f^* := \inf_{\R^d} f > -\infty$.
	\subsection{Definitions of Classes}\label{sec:compare_a_1}     
	
		\begin{definition}
    Suppose that $S$ is the set of minimizers of $f$.
    \begin{itemize}
         \item We say that $f$ is $\mu$-strongly convex if for all $\bx,\by\in\R^d$
            \[
    		f(\by)\ge f(\bx) +\langle \nabla f(\bx), \by-\bx \rangle+\frac{\mu}{2}\norm{\by-\bx}^2~.
    	\]      
        \item We say that $f$ is $\mu$-star strongly-convex if for all $\bx$
            \[
    		f(\bx_p)\ge f(\bx) +\langle \nabla f(\bx), \bx_p-\bx \rangle+\frac{\mu}{2}\norm{\bx_p-\bx}^2,
    	\]
        where $\bx_p$ is $\text{Proj}_S(\bx)$.
        \item We say that $f$ satisfies $\mu$-Error-Bound (EB) if for all $\bx$
            \[
    		\norm{\nabla f(\bx)}\geq \mu\norm{\bx-\bx_p},
    	\]
        where $\bx_p$ is $\text{Proj}_S(\bx)$.
        \item We say that $f$ satisfies $\mu$-Polyak-Łojasiewicz (PL) inequality if for all $\bx$
            \[
    		\frac{1}{2}\norm{\nabla f(\bx)}^2\ge \mu(f(\bx)-f(\bx_p))~.
    	\]

        \item We say that $f$ satisfies $(\tau,\mu)$-Strongly-Quasar-Convexity if for all $\bx\in\R^d$
        \[
		f(\bx)-f^* \le \frac{1}{\tau}\langle \nabla f(\bx), \bx-\bx_p \rangle-\frac{\mu}{2}\norm{\bx_p-\bx}^2,
		\]   
        where $\bx_p$ is $\text{Proj}_S(\bx)$.        
    \end{itemize}
    \end{definition}

    \subsection{Proof of Lemma~\ref{lem:compare}}\label{sec:compare_a_2}

\begin{figure}
    \centering
    \includegraphics[width=0.49\linewidth]{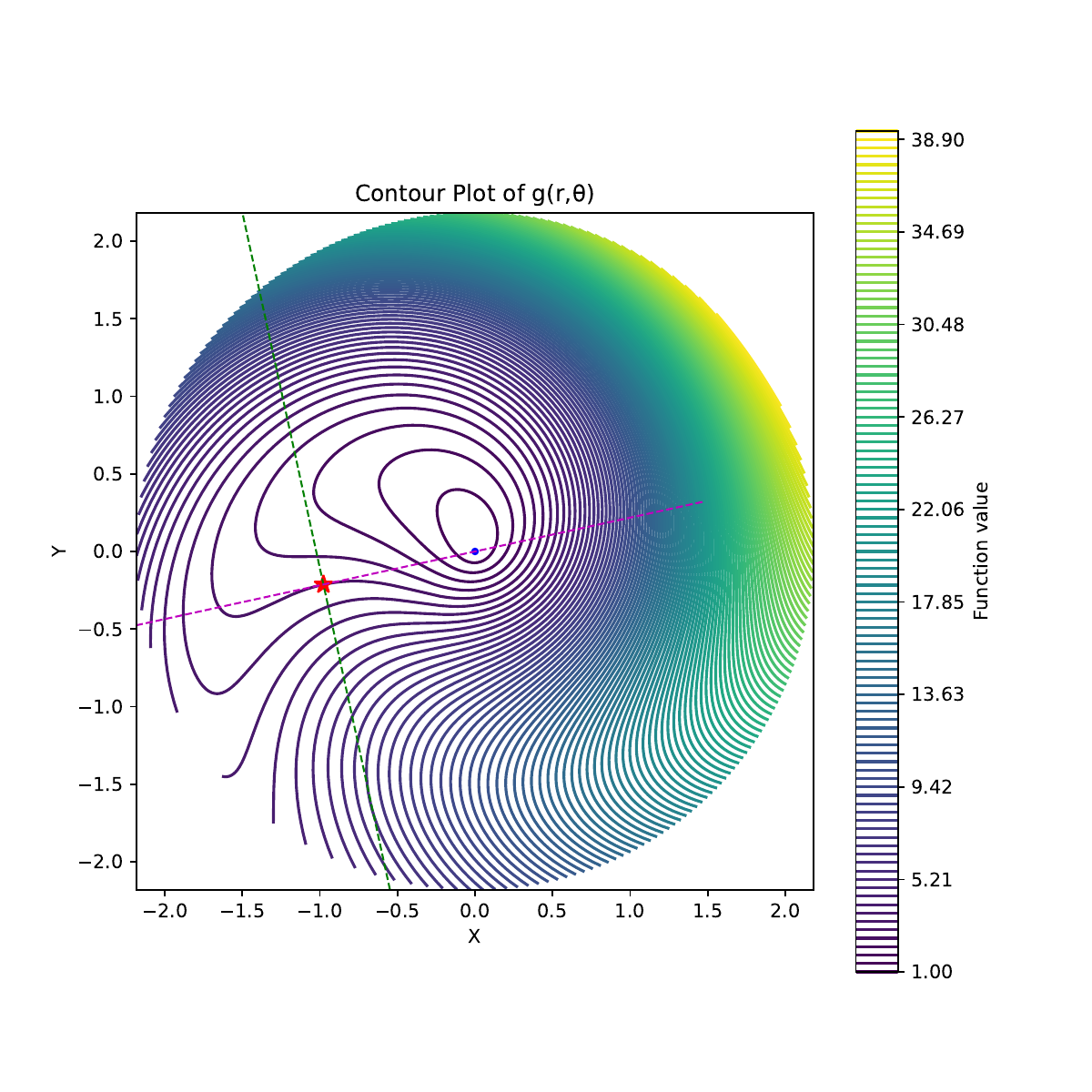}
    \includegraphics[width=0.49\linewidth]{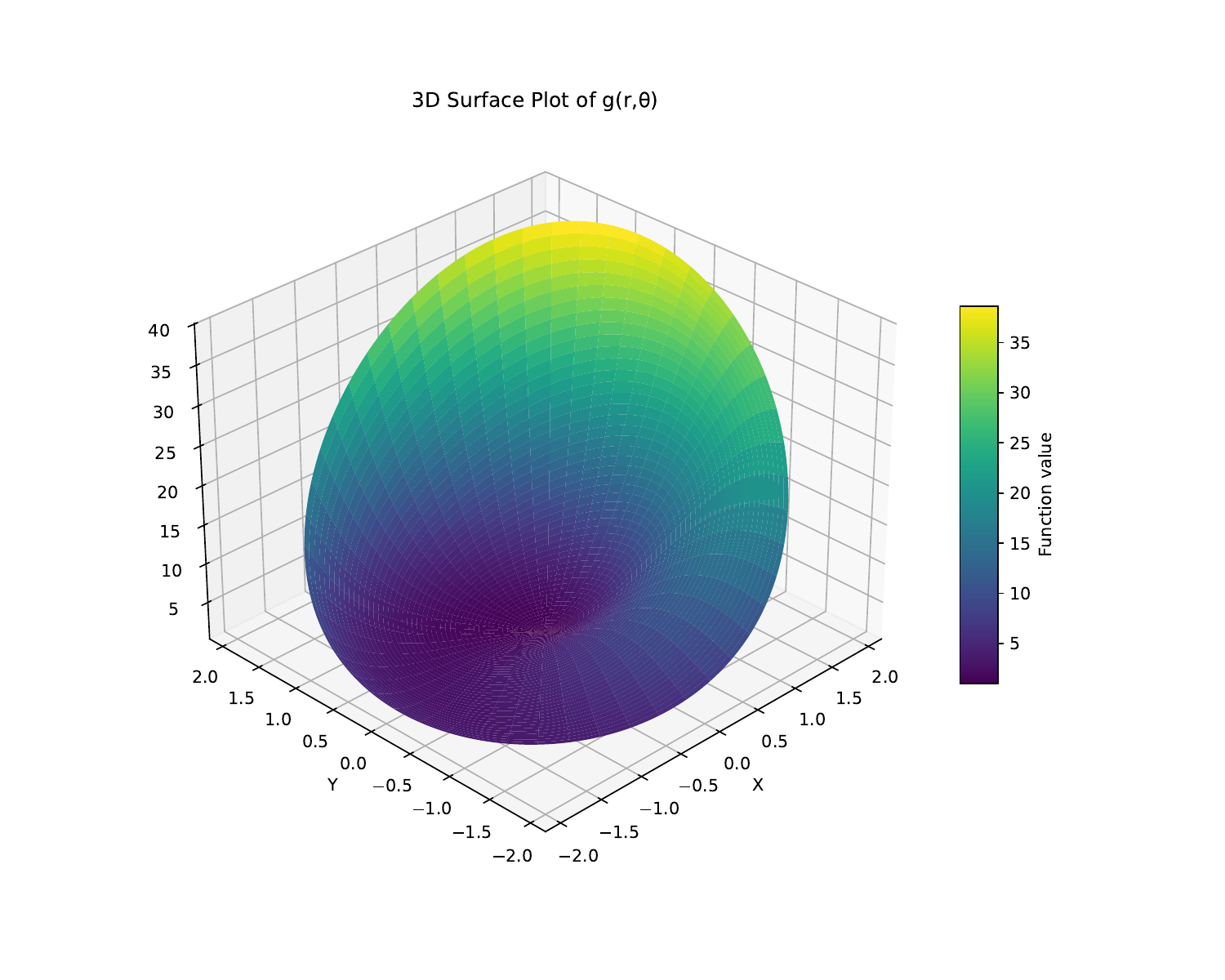}
    \caption{Contour lines (left) and 3d plot (right) of the function in \eqref{eq:eb_not_rsi}. The gradient in the point in red in the left plot is perpendicular to $x-x_p$.}
    \label{fig:eb_not_rsi}
\end{figure}

\begin{proof}
 \begin{itemize}
\item The first statement directly follows from Cauchy–Schwarz inequality:
\[\norm{\nabla f(\bx)}\norm{\bx-\bx_p}\geq\langle \nabla f(\bx),\bx-\bx_p\rangle\geq\mu \norm{\bx-\bx_p}^2~.\]
			\item To prove the strict inclusion and statement three, we 
			consider the following function in $\R^2$:
			\begin{equation}
				\label{eq:eb_not_rsi}
				f(x,y)=\left(\sqrt{2}\sqrt{x^2+y^2}+x\sin\left(\sqrt{x^2+y^2}\right)-y\cos\left(\sqrt{x^2+y^2}\right)+1\right)^2,
			\end{equation}
			which is plotted in figure \ref{fig:eb_not_rsi}.
			It can be expressed in the polar coordinates $(x,y)=(r\cos\theta,r\sin\theta)$ as:
			\[g(r,\theta) = \left(\sqrt{2}r + r\sin(r-\theta)+1\right)^2,\]
			where 
			\[r = \sqrt{x^2+y^2}\quad \text{and}\quad \theta = \atantwo(y,x)=\begin{cases}  \arctan(\frac{y}{x})& \text{if } x>0,\\
				\arctan(\frac{y}{x})+\pi & \text{if } x<0,~y\geq 0, \\ 
				\arctan(\frac{y}{x})-\pi & \text{if } x<0,~y< 0, \\ 
				+\frac{\pi}{2} & \text{if } x=0,~y\geq 0, \\ 
				-\frac{\pi}{2} & \text{if } x=0,~y<0, \\ 
				\text{undefined} & \text{if } x=0,~y=0~.\\     
			\end{cases}\]
			The proof of statement three consists of four steps:
			\begin{itemize}
				\item  $f$ has a unique minimum at the origin.
				\item  Verify that $f$ is differentiable at the origin.
				\item  Show that there exists $\tau>0$ such that $f$ is $\tau$-EB but can not be $\mu$-RSI for any $\mu>0$ in $B(0,1)$.
				\item  Extend the function outside $B(0,1)$. 
			\end{itemize}
			
			For the first part, since $|\sin(r-\theta)|\leq 1$, $(\sqrt{2}r + r\sin(r-\theta)+1)^2\geq ((\sqrt{2}-1)r+1)^2\geq f(0,0)$. Thus, $f$ has a unique minimum at the origin.

			To show that $f$ is differentiable everywhere, we first calculate the partial derivatives with respect to $r$ and $\theta$:
			\begin{align*}
				g_r &= 2\left(\sqrt{2}r+r\sin(r-\theta)+1\right)\left(\sin(r-\theta)+r\cos(r-\theta)+\sqrt{2}\right),\\
				g_{\theta} &= -2r\cos(r-\theta)\left(\sqrt{2}r+r\sin(r-\theta)+1\right)~.
			\end{align*}
			Notice that the relationship between $(f_x,f_y)$ and $(g_r,g_\theta)$ is given by
			\[
			\begin{pmatrix}
				f_x\\
				f_y
			\end{pmatrix} 
			= \begin{pmatrix}
				\cos\theta & -\frac{\sin\theta}{r}\\
				\sin\theta & \frac{\cos\theta}{r}
			\end{pmatrix}\begin{pmatrix}
				g_r\\
				g_{\theta}
			\end{pmatrix}~.
			\]
			Therefore,
			\[
			xfx+yfy = x\cos\theta g_r+y\sin\theta g_r - \frac{\sin\theta}{r}xg_\theta+\frac{\cos\theta}{r}yg_\theta = rg_r~.
			\]
			Hence, we have
			\[
			(f_x)^2+(f_y)^2
			=\left(\cos\theta g_r-\frac{\sin\theta}{r}g_\theta\right)^2+\left(\sin\theta g_r+\frac{\cos\theta}{r}g_\theta\right)^2
			=(g_r)^2+\frac{1}{r^2}(g_\theta)^2~.
			\]
			It suffices to show that $f$ is differentiable at the origin. To test the differentiability at the origin, we calculate the following limit:
			\begin{align*}
				\lim_{(x,y)\rightarrow (0,0)}&\frac{f(x,y)-f(0,0)-\langle(f_x(x,y),f_y(x,y)),(x,y)\rangle}{\sqrt{x^2+y^2}}\\&=\lim_{(x,y)\rightarrow (0,0)}\frac{(\sqrt{2}r+r\sin(r-\theta)+1)^2-1-rg_r}{r}\\
			\end{align*}
			Fix $\theta$ and consider the limit as $r\rightarrow 0^+$:
			\begin{align*}
				\lim_{r\rightarrow 0^+}&\frac{(\sqrt{2}r+r\sin(r-\theta))^2+2(\sqrt{2}r+r\sin(r-\theta))+1-1-rg_r(r,\theta)}{r}\\
				&=\lim_{r\rightarrow 0^+}\frac{(\sqrt{2}r+r\sin(r-\theta))^2+2(\sqrt{2}r+r\sin(r-\theta))-rg_r(r,\theta)}{r}\\
				&=\lim_{r\rightarrow 0^+}\frac{r(\sqrt{2}+\sin(r-\theta))^2+2(\sqrt{2}+\sin(r-\theta))-g_r(r,\theta)}{1}\\          
				&= 0(\sqrt{2}+\sin(-\theta))^2+2(\sqrt{2}+\sin(-\theta))-2(\sqrt{2}+\sin(-\theta))=0,
			\end{align*}
			which shows that $f$ is differentiable at the origin since it is independent of $\theta$.\\
			We now show that there exists $(x,y)\in B(0,1)\backslash(0,0)$ such that $(f_x(x,y),f_y(x,y))(x,y)^\mathrm{T}=0$.\\
			The RSI ratio is given by:
			\begin{align*}
				\frac{(f_x(x,y),f_y(x,y))(x,y)^\mathrm{T}}{r^2}
				&=\frac{g_r(r,\theta)}{r}\\
				&=\frac{2\left(\sqrt{2}r+r\sin(r-\theta)+1\right)\left(\sin(r-\theta)+r\cos(r-\theta)+\sqrt{2}\right)}{r},
			\end{align*}
			which is zero when $r=1$ and $\theta = (1-\frac{5\pi}{4})$.\\
			On the other hand, since $(f_x)^2+(f_y)^2 = (g_r)^2+\frac{1}{r^2}(g_\theta)^2$, the EB ratio $\frac{\sqrt{(g_r)^2+\frac{1}{r^2}(g_\theta)^2}}{r}$ is lower bounded above zero. To prove the claim, consider $r\leq 1$, the EB-ratio is larger than $(g_r)^2+(g_\theta)^2$. We only need to worry that the EB-ratio is zero when $r=1$ and $\theta = (1-\frac{5\pi}{4})$ or $\cos(r-\theta)=0$. However, $g_\theta(1,1-\frac{5\pi}{4})=-2\frac{-\sqrt{2}}{2}(\sqrt{2}-\frac{\sqrt{2}}{2}+1)>0$ and when $\cos(r-\theta)=0$, $g_r\geq 2(\sqrt{2}-1)>0$. This ensures the function is $\tau^\prime$-EB for some $\tau^\prime>0$.
			
			To find a function that is $\tau$-EB but not $\mu$-RSI for any $\mu>0$ in entire $\R^2$, we consider the function
			\[
			w(r,\theta)=g(r,\theta)+h(r,\theta),
			\]
			where $h(r,\theta)=a(r-1)^2$ for $r>1$ and $0$ otherwise. Since for $r>1$, $|g_r(r,\theta)|\leq 2(\sqrt{2}+2)(2+\sqrt{2})r$ and there exist $1>\delta>0$ such that $\norm{\nabla g}>c>0$ for $r\in[1,1+\delta]$.  Picking $a_1=\frac{4(\sqrt{2}+2)^2}{\delta}$ ensures for $1+\delta<r<2$, $\norm{\nabla w}\geq 2a_1(r-1)-2(\sqrt{2}+2)^2 2\geq 2(\sqrt{2}+2)^2 r$. And for $r\geq 2$, picking $a_2 = 4(\sqrt{2}+2)^2$ ensures  $\norm{\nabla w}\geq 2a_2(r-1)-2(\sqrt{2}+2)^2r\geq 2(\sqrt{2}+2)^2 r$. 
			Choosing $a=\frac{4(\sqrt{2}+2)^2}{\delta}$ ensures the function $w$ is at least $\tau=\min(c,\tau^\prime,  2(\sqrt{2}+2)^2)$-EB.  
			
			\item For statement two, suppose $f$ is $\tau$-weakly convex and $\mu$-QG, that is 
			\[f^\star-f(x)\geq \frac{1}{\tau}\langle \nabla f(x),x^\star-x\rangle~,\]
			\[f(x)-f^\star\geq \frac{\mu}{2}\norm{x-x^\star}^2~.\]
			Then 
			\begin{align*}
				\frac{1}{2}(f^\star-f) &= f^\star-f(x)+\frac{1}{2}(f(x)-f^\star)\\
				&\geq \frac{1}{\tau}\langle \nabla f(x),x^\star-x\rangle +\frac{\mu}{4}\norm{x-x^\star}^2~,
			\end{align*}
			which shows $f$ is $(\frac{\tau}{2},\frac{\mu}{2})$-strongly quasar-convex. 
			\item We consider an instance of the function used in the lower bound proof of Theorem~\ref{thm:2} in dimension $1$. Its expression is given by
			\[
			f(x) =
			\begin{cases}
				\frac{3}{2}x^2, & \text{if } \abs{x} < 1, \\[1em]
				-\frac{3}{2}x^2+6\abs{x}-3, & \text{if } 1 \leq \abs{x} < \frac{3}{2}, \\[1em]
				\frac{3}{2}+\frac{1}{2}x^2, & \text{otherwise }~.
			\end{cases}
			\]
			$f$ is $1$-RSI, has a minimum value $0$ attained at the origin. At the point $x=3/2$ we have $f(3/2) = \frac{21}{8}$ and $f^\prime(3/2)3/2 = 9/4$, which shows that it cannot be star strongly convex at all.
		\end{itemize}
	\end{proof}

	\section{Complete Proof of Theorem~\ref{thm:1}}\label{sec:proof1}
	Let \(L, \mu > 0\), \(\tau \in (0,1]\), and \(\sigma > 0\). Suppose \(\kappa := \tfrac{L}{\mu} \ge 202\) and \(d \ge 3 \log_{5/4}\!\bigl(2/\tau\bigr)\), let $d_0 := \lceil 2\log_{5/4}(2/\tau)\rceil$.
	
	\paragraph{Additional Notation.} For any integer \(n \ge 1\), let \(\mathcal{F}_n\) denote the class of real-valued functions on \(\mathbb{R}^n\) that are \(L\)-smooth, $\tau-\QC$, and satisfy the $\mu-\QG$. Let \(A := \{1, \dots, d_0\}\) and \(\bar{A} := [d] \setminus A\). For any \(\bx \in \mathbb{R}^d\), let \(\bx_A \in \mathbb{R}^{d_0}\) be the vector of its first \(d_0\) coordinates, and \(\bx_{\bar{A}} \in \mathbb{R}^{d - d_0}\) be the vector of its remaining coordinates. For two 
	vectors \(\mathbf{a} \in \mathbb{R}^{d_0}\) and \(\mathbf{b} \in \mathbb{R}^{d - d_0}\), we write \((\mathbf{a}, \mathbf{b})\) to denote the vector in \(\mathbb{R}^d\) whose first \(d_0\) components are \(\mathbf{a}\) and last \(d - d_0\) components are \(\mathbf{b}\).
	
	\subsection{Functions Construction.} For \(d \ge 3\log_{5/4}(2/\tau)\), we construct a function \(F: \R^d \to \R\) that depends only on the first \(d_0= \lceil 2\log_{5/4}(2/\tau)\rceil\) coordinates. Concretely, there is a function \(f: \R^{d_0} \to \R\) such that \(F(\bx) = f(\bx_A)\). Lemma~\ref{lem:dim} shows that if $f$ is in $\sF_{d_0}$, then $F$ is in $\sF_d$.
	
	\noindent Let \(\Delta > 0\) be a parameter to be specified later. Let $a$ be defined by
	\[
	a := \frac{-\tau +\sqrt{2\tau^2-4\tau+4}}{2-\tau}~.
	\]
	Since $\tau \in (0, 1]$, we have $a \in [\sqrt{2}-1, 1]$ and $1-a\le \tau$.
	
	Let \(m \ge 2\) denote the size of the function subclass we are constructing. We pick \(m\) elements \(\bz_1, \dots, \bz_m\) in \(B_{d_0}(\bzero_A, 5\Delta)\), and define a set of functions \(f_{1}, \dots, f_{m}\) so that each \(f_{i}\) belongs to \(\sF_{d_0}\). For each \(i \in [m]\), let \(f_{i}\) be a function whose value at \(\bz_i\) is zero and whose gradient for all \(\bx \in \R^{d_0}\) is given by
	\[
	\nabla f_{i}(\bx) =
	\begin{cases}
		\displaystyle 169\mu(\bx - \bz_i), & \text{if } \|\bx - \bz_i\| < \Delta, \\[1em]
		\displaystyle -169\mu\Bigl(\bx - \bz_i - \Delta \frac{\bx - \bz_i}{\|\bx - \bz_i\|}\Bigr) + 169\mu\Delta \frac{\bx - \bz_i}{\|\bx - \bz_i\|}, & \text{if } \Delta \leq \|\bx - \bz_i\| < (1+a)\Delta, \\[1em]
		\displaystyle 169\mu(1-a)\Delta  \frac{\bx - \bz_i}{\|\bx - \bz_i\|} , & \text{if } \bx \in B_{d_0}(\boldsymbol{0}, 8\Delta)\setminus B_{d_0}(\bz_i, (1+a)\Delta), \\[1em]
		\displaystyle  169\mu\left(\bx-8\Delta \frac{\bx}{\norm{\bx}}\right)+169\mu(1-a)\Delta  \frac{\bx - \bz_i}{\|\bx - \bz_i\|}, & \text{if } \norm{\bx} > 8\Delta~.
	\end{cases}
	\] 
	Let $r:= \norm{\bx-\bz_i}$ and define
	\[
	f_{i}(\bx) \;=\;
	\begin{cases}
		\displaystyle
		\frac{169\,\mu}{2}\,r^2,
		& \text{if } r < \Delta,
		\\[1.2em]
		\displaystyle
		-\,\frac{169\,\mu}{2}\,r^2
		\;+\;
		338\,\mu\,\Delta\,r
		\;-\;
		169\,\mu\,\Delta^2,
		& \text{if } \Delta \;\le\; r < (1+a)\,\Delta,
		\\[1.2em]
		\displaystyle
		169\,\mu\,(1-a)\,\Delta\,r
		\;+\;
		\frac{169\,\mu}{2}\,\Delta^2\,\bigl(a^2 + 2\,a - 1\bigr),
		& \text{if } \bx \in B_{d_0}(\boldsymbol{0}, 8\Delta)\setminus B_{d_0}(\bz, (1+a)\Delta),
		\\[1.2em]
		\displaystyle
		169\mu\,\Bigl(\tfrac12\,\|\bx\|^2 - 8\,\Delta\,\|\bx\| +32\Delta^2\Bigr)\\
		\;+\;
		169\,\mu\,(1-a)\,\Delta\,r
		\;+\;
		\frac{169\,\mu}{2}\,\Delta^2\,\bigl(a^2 + 2\,a - 1\bigr),
		& \text{if } \|\bx\| > 8\,\Delta.
	\end{cases}
	\]
	
	Lemma below shows that the constructed functions $f_{\bz_i}$ are in $\sF_{d_0}$.
	\begin{lemma}\label{lem:f}
		The functions $(f_{i})_{i\in [m]}$ are $\tau$-WQC, $\mu$-QG and $L$-smooth.
	\end{lemma}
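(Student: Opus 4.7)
The plan is to verify the three claimed properties---$L$-smoothness, $\mu$-QG, and $\tau$-QC---separately, by checking each condition on each of the four regions defining $f_i$ and confirming continuity of $\nabla f_i$ across the three region boundaries at $r=\Delta$, $r=(1+a)\Delta$, and $\|\bx\|=8\Delta$, with $r := \|\bx-\bz_i\|$. Gradient continuity reduces to substitution: for instance at $r=\Delta$ the formula $-169\mu(\bx-\bz_i)+338\mu\Delta(\bx-\bz_i)/\|\bx-\bz_i\|$ simplifies to $169\mu(\bx-\bz_i)$, matching the inner-ball gradient, with analogous matches at the other two boundaries. For $L$-smoothness, I would compute the Jacobian of $\nabla f_i$ in each region and bound its operator norm. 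In regions $1$--$3$ this is straightforward: each Jacobian is a combination of a scalar multiple of $\bI$ and the projector $\bI-\hat{\bu}\hat{\bu}^T$ with $\hat{\bu}=(\bx-\bz_i)/r$, whose eigenvalues are bounded by $169\mu$. Region 4 is the main source of technical difficulty: the gradient contains two radial pieces centered at $\bzero$ and at $\bz_i$ whose directions are generally misaligned, and the resulting Hessian decomposes as $169\mu\bI$ minus a term supported on directions orthogonal to $\bx$ plus a term supported on directions orthogonal to $\bx-\bz_i$. The first two combine into a positive semidefinite operator with eigenvalues in $[0,169\mu]$, and using $\|\bx\|\ge 8\Delta$ together with $\|\bz_i\|\le 5\Delta$ to get $\|\bx-\bz_i\|\ge 3\Delta$, the third contributes at most $169\mu(1-a)/3$ in operator norm. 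The total bound $169\mu\bigl(1+(1-a)/3\bigr)\le 202\mu$ for $a\in[\sqrt{2}-1,1]$ explains the hypothesis $L/\mu\ge 202$.

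For $\mu$-QG, since $f_i(\bz_i)=0$, if I can show $f_i(\bx)\ge(\mu/2)r^2$ pointwise then both the uniqueness of $\bz_i$ as the global minimizer and QG follow. Region 1 is immediate from $169\ge 1$; region 2 reduces to a scalar quadratic inequality in $r$ on $[\Delta,(1+a)\Delta]$, verifiable at the endpoints by the convexity of the resulting quadratic. In region 3, $f_i$ is affine in $r$ with slope $169\mu(1-a)\Delta$, so I check the inequality at the left boundary $r=(1+a)\Delta$ and at the maximum $r\le 13\Delta$ implied by $\|\bx\|\le 8\Delta$ and $\|\bz_i\|\le 5\Delta$. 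Region 4 adds the nonnegative contribution $(169\mu/2)(\|\bx\|-8\Delta)^2$; combined with the triangle bound $\|\bx\|\ge r-5\Delta$, this closes the inequality for all $r$, with the factor $169$ providing ample slack.

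For $\tau$-QC, using $f_i(\bz_i)=0$, it suffices to show $f_i(\bx)\le(1/\tau)\langle\nabla f_i(\bx),\bx-\bz_i\rangle$. Regions 1 and 2 reduce to quadratic inequalities in $r$. The critical computation is at the boundary $r=(1+a)\Delta$: after simplification, the QC inequality becomes $(1-\tau/2)a^2+\tau a-(1-\tau/2)\le 0$, which is exactly the defining equation for $a$ (holding as equality), so our choice of $a$ is calibrated precisely so that QC is tight at this boundary. For $r>(1+a)\Delta$ in region 3, the right-hand side grows linearly in $r$ with slope $169\mu(1-a)\Delta/\tau$, strictly larger than the left-hand side slope $169\mu(1-a)\Delta$, so the inequality propagates. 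Region 4, which I expect to be the main remaining obstacle, reduces to verifying that the added convex term $(169\mu/2)(\|\bx\|-8\Delta)^2$ on the left is dominated by the added gradient contribution $(169\mu/\tau)(1-8\Delta/\|\bx\|)\langle\bx,\bx-\bz_i\rangle$ on the right; using $\langle\bx,\bx-\bz_i\rangle\ge\|\bx\|(\|\bx\|-5\Delta)$ via Cauchy--Schwarz and $\|\bz_i\|\le 5\Delta$, this reduces to the scalar inequality $(s-8\Delta)/2\le(s-5\Delta)/\tau$ in $s=\|\bx\|\ge 8\Delta$, which holds for every $\tau\in(0,1]$ since $8\ge 5$. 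The region-3 QC, applicable throughout region 4 because $r\ge 3\Delta\ge(1+a)\Delta$, handles the remaining linear-in-$r$ terms.
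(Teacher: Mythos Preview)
Your proposal is correct and follows the paper's region-by-region strategy; the smoothness argument (Hessian eigenvalue bound giving the factor $202$) and the region-1 through region-3 verifications for QC and QG are essentially the paper's. The two proofs diverge only in how they handle region 4 ($\|\bx\|>8\Delta$).

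For $\tau$-QC in region 4, your decomposition is cleaner than the paper's. You separate the region-3 part of $f_i$ from the added quadratic $(169\mu/2)(\|\bx\|-8\Delta)^2$ and reduce the latter via Cauchy--Schwarz to the scalar inequality $(s-8\Delta)/2\le(s-5\Delta)/\tau$ in $s=\|\bx\|\ge 8\Delta$, which is immediate for all $\tau\in(0,1]$. The paper instead writes $\langle\nabla f_i,\bx-\bz_i\rangle-\tau f_i$ as a sum of two terms aligned with the $\bx$-centered and $(\bx-\bz_i)$-centered pieces of the gradient, dispatches the first by explicitly computing the roots of a quadratic in $\|\bx\|$, and handles the second by invoking a polynomial inequality in $(a,\tau)$ that is only justified graphically. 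Your route avoids both of those computations.

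For $\mu$-QG in region 4, the paper takes a different path: it parameterizes along a ray $h(t)=f_i(\bz_i+t\bu)$, shows an excess of $55\mu\Delta^2$ in the QG inequality at the region-3/region-4 boundary $t=t_0$, spends this slack on the interval $[t_0,t_0+3\Delta]$, and then proves $h'(t)>\mu t$ for $t\ge t_0+3\Delta$. Your direct approach via the triangle bound $(169\mu/2)(\|\bx\|-8\Delta)^2\ge(169\mu/2)(r-13\Delta)^2$ for $r>13\Delta$ is valid, but the phrase ``the factor $169$ providing ample slack'' is where a real check is needed: you must verify that the quadratic
\[
84\,r^2-169(12+a)\Delta\,r+\tfrac{169}{2}(168+a^2+2a)\Delta^2
\]
is positive, equivalently that $a^2+3720a-3888<0$ for $a\in[\sqrt{2}-1,1]$, where the maximum is $-167$ at $a=1$. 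Once this is filled in, your argument is shorter than the paper's ray decomposition. (For $r\le 13\Delta$ in region 4 your argument is already complete, since the region-3 bound plus the nonnegative added term suffices.)
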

	\begin{proof}
		Let $i \in [m]$, in the following, we will use that $\tau \in (0,1]$, therefore following the expression of $a$ we have
		\begin{equation*}
			1 - \frac{\tau}{2} - \tau a - \Bigl(1 - \frac{\tau}{2}\Bigr)a^2=0 \qquad \text{ and } \quad  a \in [\sqrt{2}-1,1]~.
		\end{equation*}
		
		\noindent \textbf{Verifying $L$-smoothness:}
		First, we will show that $\nabla f_{i}$ is $L$-Lipschitz on each region in its expression, then we will conclude using Lemma~\ref{lem:lip}. We have
		\begin{itemize}
			\item Case $\bx \in B_{d_0}(\bz_i, \Delta)$: since $L \ge 169\mu$, the expression of $\nabla f_{i}$ shows that it is $L$-Lipschitz.
			\item Case $\bx \in B_{d_0}(\bz_i, (1+a)\Delta)\setminus B_{d_0}(\bz_i, \Delta)$: we have $1+a \le 2$, therefore using Lemma~\ref{lem:lip2} $\nabla f_{i}$ is $L$-Lispchitz.
			\item Case $\bx \in B_{d_0}(\boldsymbol{0},8\Delta)\setminus B_{d_0}(\bz_i, (1+a)\Delta)$: recall that for $\bx \notin B_{d_0}(\bz_i, (1+a)\Delta)$ we have that $\bx \to \Delta \frac{\bx-\bz_i}{\norm{\bx-\bz_i}}$ is the projection of $\bx$ onto $B_{d_0}(\bz_i, \Delta)$. Therefore, $\nabla f_{i}$ is $L$-Lipschitz.
			\item Case $\bx \notin B_{d_0}(\boldsymbol{0}, 8\Delta)$: We first calculate the Hessian:
			\[
			\bI\left(1-\frac{8\Delta}{\norm{\bx}}\right)+\frac{8\Delta}{\norm{\bx}^3}\bx{\bx}^{\mathrm{T}}+(1-a)\Delta\left(\frac{\bI}{\norm{\bx-\bz_i}}-\frac{(\bx-\bz_i)(\bx-\bz_i)^{\mathrm{T}}}{\norm{\bx-\bz_i}^3}\right)~.
			\]
			The matrix $H_1=I(1-\frac{8\Delta}{\norm{\bx}})+\frac{8\Delta}{\norm{\bx}^3}\bx{\bx}^{\mathrm{T}}$ has eigenvalues $1$ and $0$.\\
			The matrix $H_2=(1-a)\Delta(\frac{I}{\norm{\bx-\bz_i}}-\frac{(\bx-\bz_i)(\bx-\bz_i)^{\mathrm{T}}}{\norm{\bx-\bz_i}^3})$ has eigenvalues $\frac{(1-a)\Delta}{\norm{\bx-\bz_i}}$ and $0$.
			Since $169\mu(1+\frac{(1-a)\Delta}{\norm{\bx-\bz_i}})\leq 202\mu\leq L$, it is $L$-Lipschitz.

		\end{itemize}
		The conclusion follows from the fact that $\nabla f_{i}$ is continuous and Lemma~\ref{lem:lip}.
		
		\noindent \textbf{Verifying $\tau$-WQC:}
		Observe that the minimizer of $f_{i}$ is $\bz_i$ and the minimum is $0$. We will show that $f_{i}$ satisfies $\tau$-WQC on each of the four regions in the definition of $\nabla f_{i}$. 
		\begin{itemize}
			\item Let $\bx \in B_{d_0}(\boldsymbol{0}, 8\Delta)$. To ease notation define $r:= \norm{\bx-\bz_i}$.
			Then, the expression of $f_i$ is given by
			\[
			f_{i}(\bx) \;=\;
			\begin{cases}
				\displaystyle
				\frac{169\,\mu}{2}\,r^2,
				& \text{if } r < \Delta,
				\\[1.2em]
				\displaystyle
				-\,\frac{169\,\mu}{2}\,r^2
				\;+\;
				338\,\mu\,\Delta\,r
				\;-\;
				169\,\mu\,\Delta^2,
				& \text{if } \Delta \;\le\; r < (1+a)\,\Delta,
				\\[1.2em]
				\displaystyle
				169\,\mu\,(1-a)\,\Delta\,r
				\;+\;
				\frac{169\,\mu}{2}\,\Delta^2\,\bigl(a^2 + 2\,a - 1\bigr),
				& \text{if } \bx \in B_{d_0}(\boldsymbol{0}, 8\Delta)\setminus B_{d_0}(\bz_i, (1+a)\Delta)~.
			\end{cases}
			\]
			Therefore, we have
			\begin{align*}
				\langle &\nabla f_{i}(\bx), \bx - \bz_i \rangle - \tau f_{i}(\bx) \\
				&=\begin{cases}
					\displaystyle
					169\mu r^2 \left(1 - \frac{\tau}{2}\right), & \text{if } r < \Delta, \\[1em]
					\displaystyle
					169\mu \left[\tau \Delta^2 + 2\Delta r (1 - \tau) + r^2 \left(\frac{\tau}{2} - 1\right)\right], & \text{if } \Delta \leq r < (1+a)\Delta,\\
					169 \mu (1-a)\Delta r (1-\tau) - \frac{169 \tau \mu \Delta^2}{2}(a^2+2a-1), & \text{if }  \bx \in B_{d_0}(\boldsymbol{0}, 8\Delta)\setminus B_{d_0}(\bz_i, (1+a)\Delta)~.
				\end{cases}
			\end{align*}
			
			Since $\tau \le 1$, we have that $\langle \nabla f_{i}(\bx), \bx-\bz_i \rangle -\tau f_{i}(\bx) \ge 0$ in the first region. For the second region, observe that $Q:r \to \tau \Delta^2 + 2\Delta r (1 - \tau) + r^2 \left(\frac{\tau}{2} - 1\right)$ is increasing $(-\infty, \frac{1-\tau}{1-\frac{\tau}{2}}\Delta]$ and decreasing on $[\frac{1-\tau}{1-\frac{\tau}{2}}\Delta, +\infty)$, therefore for $r \in [\Delta, (1+a)\Delta]$, we have:
			\[
			Q(r) \ge \min\{Q(\Delta), Q((1+a)\Delta)\}. 
			\]
			We have 
			\begin{align*}
				Q(\Delta) &= \tau \Delta^2 + 2\Delta^2 (1 - \tau) + \Delta^2 \left(\frac{\tau}{2} - 1\right)\\
				&= \left( 1-\frac{\tau}{2}\right)\Delta^2 \ge 0,
			\end{align*}
			and 
			\begin{align*}
				Q((1+a)\Delta) &= \tau \Delta^2 + 2\Delta^2 (1+a) (1 - \tau) + (1+a)^2 \Delta^2\left(\frac{\tau}{2} - 1\right)\\
				&= \Delta^2 \left(1 - \frac{\tau}{2} - \tau a - \Bigl(1 - \frac{\tau}{2}\Bigr)a^2\right)\\
				&=0~.
			\end{align*}
			Therefore $Q(r) \ge 0$ for $r \in [\Delta, (1+a)\Delta]$, and
			for any $\bx$ in the second region we have
			\begin{align*}
				\langle \nabla f_{i}(\bx), \bx - \bz_i \rangle - \tau f_{i}(\bx) = 169\mu Q(r)
				\ge0~.
			\end{align*}
			For the third region, observe that since $\nabla f_{i}$ is continuous, so is $f_{i}$ is $\langle\nabla f_{i}(\bx), \bx - z \rangle - \tau f_{i}(\bx)$. Moreover, $\langle\nabla f_{i}(\bx), \bx - z \rangle - \tau f_{i}(\bx)$ is increasing with respect to $r$ in the third region, therefore $\langle\nabla f_{i}(\bx), \bx - z \rangle - \tau f_{i}(\bx) \ge 0$, which shows that $f_{i}$ is $\tau$-WQC in the third region.
			
			\item Let $\bx \notin B_{d_0}(\boldsymbol{0}, 8\Delta)$. We have
			\begin{align*}
				\bigl\langle &\nabla f_{i}(\bx),\,\bx - \bz_i \bigr\rangle 
				\;-\;
				\tau\,f_{i}(\bx)\\
				&= \left\langle 169\mu\left(\bx-8\Delta \frac{\bx}{\norm{\bx}}\right) + 169\mu(1-a)\Delta  \frac{\bx - \bz_i}{\|\bx - \bz_i\|} , \bx-\bz_i\right\rangle\\
				&\qquad  - \tau \left( 169\mu \left[\frac{1}{2}\norm{\bx}^2-8\Delta \norm{\bx}+32\Delta^2\right]\right)\\
				&\qquad -\tau\left(  169\mu (1-a)\Delta \norm{\bx-\bz_i}+\frac{169\mu}{2}\Delta^2(a^2+2a-1)\right)\\
				&= \underbrace{\left\langle 169\mu\left(\bx-8\Delta \frac{\bx}{\norm{\bx}}\right), \bx-\bz_i \right\rangle -169\tau  \mu \left[\frac{1}{2}\norm{\bx}^2-8\Delta \norm{\bx}+32\Delta^2\right]}_{\text{Term 1}}\\
				& + \underbrace{\left\langle 169\mu(1-a)\Delta  \frac{\bx - \bz_i}{\|\bx - \bz_i\|}, \bx-\bz_i \right\rangle -169\mu\tau \left(    (1-a)\Delta \|\bx-\bz_i\|+\frac{\Delta^2}{2}(a^2+2a-1)\right)}_{\text{Term 2}}~.
			\end{align*}
			We have
			\begin{align*}
				\text{Term 1} &=169\mu
				\left[
				(\|\bx\|^2 - 8\Delta\|\bx\|)
				- \left\langle \bx-8\Delta \frac{\bx}{\norm{\bx}}, \bz_i \right\rangle
				- \tau\bigl(\tfrac{1}{2}\|\bx\|^2 - 8\Delta\|\bx\| + 32\Delta^2)
				\right]\\
				&\ge 169\mu
				\Biggl[
				\bigl(\|\bx\|^2 - 8\Delta\|\bx\|\bigr)
				- \norm{\bz_i} \norm{\bx-8\Delta \frac{\bx}{\norm{\bx}}}
				- \tau\bigl(\tfrac{1}{2}\|\bx\|^2 - 8\Delta\|\bx\| + 32\Delta^2\bigr)
				\Biggr]\\
				&\ge 169\mu
				\Biggl[
				\bigl(\|\bx\|^2 - 8\Delta\|\bx\|\bigr)
				- 5\Delta \left(\norm{\bx}-8\Delta \right)
				- \tau\bigl(\tfrac{1}{2}\|\bx\|^2 - 8\Delta\|\bx\| + 32\Delta^2\bigr)
				\Biggr]\\
				&= 169\mu
				\Biggl[\bigl(1 - \tfrac{\tau}{2}\bigr)\|\bx\|^2+
				\Delta\bigl(-13 + 8\tau\bigr)\|\bx\|+
				\bigl(40 - 32\tau\bigr)\Delta^2\Biggr],
			\end{align*}
			where in the second inequality we used the fact that $\|\bz_i\|\leq 5 \Delta$.
			The last expression is positive for $\norm{\bx} \ge 8\Delta$ (the two roots of the quadratic above are $ 8\Delta$ and $ \frac{10-8\tau}{2-\tau}\Delta\le 5\Delta$).
			
			For the second term, we have
			\begin{align*}
				\text{Term 2} &= 169 \left[ 
				\mu (1-a) \, \Delta \, \|\bx - \bz_i\| \, (1 - \tau) 
				- \frac{\tau}{2} \, \mu\Delta^2 \, (a^2 + 2a - 1) 
				\right]\\
				&\ge  169\mu \Delta^2\left[ 
				3 (1-a) \,  \,  (1 - \tau) 
				- \frac{\tau}{2} \,  \, (a^2 + 2a - 1) 
				\right],
			\end{align*}
			where we used $\norm{\bx-\bz_i} \ge \norm{\bx}-\norm{\bz_i}\ge 3\Delta$.
			Using the expression of $a=\frac{-\tau + \sqrt{2\tau^2-4\tau+4}}{2-\tau}$, we have that: $ 3 (1-a) \,  \,  (1 - \tau) 
			- \frac{\tau}{2} \,  \, (a^2 + 2a - 1)  \ge 0$ (by plotting the function) for $\tau \in [0,1]$. As a conclusion, if $\bx \notin B_{d_0}(\boldsymbol{0}, 8\Delta)$, we have $\langle \nabla f_{i}(\bx), \bx-\bz_i \rangle - \tau f_{i}(\bx) \ge 0$. Therefore $f_{i}$ is $\tau$-WQC.
		\end{itemize}
		
		\noindent \textbf{Verifying $\mu$-QG:}
		For any $\bx \in \R^{d_0}$, we want to show that $f_{i}(\bx) \ge \frac{\mu}{2}\norm{\bx-\bz_i}^2$. Let $\bu := \frac{\bx-\bz_i}{\norm{\bx-\bz_i}}$,
		let $h:\R_{\geq 0}\to \R$ defined as $h(t) := f_{i}(\bz_i+t\cdot \bu)$. Therefore, for any $\bx \in \R^{d_0}: f_{i}(\bx)-\frac{\mu}{2}\norm{\bx-\bz_i}^2 = h(\norm{\bx-\bz_i})-\frac{\mu}{2}\norm{\bx-\bz_i}^2$. The expression of the derivative of $h$ is $h'(t) = \langle \bu, \nabla f_{i}(\bz_i+t\cdot \bu) \rangle$. Let $t_0$ be the (unique) number such that $\norm{\bz_i+t_0 \cdot \bu}=8\Delta$. Then, we have
		\[
		h'(t) =
		\begin{cases}
			\displaystyle 
			169\,\mu\,t,  
			& \text{if } 0 \;\le\; t < \Delta, \\[8pt]
			\displaystyle
			169\,\mu\,\bigl(2\,\Delta - t\bigr),
			& \text{if } \Delta \;\le\; t < (1+a)\,\Delta, \\[8pt]
			\displaystyle
			169\,\mu\,(1-a)\,\Delta,
			& \text{if } (1+a)\,\Delta \;\le\; t \;\le\; t_0, \\ 
			\displaystyle
			169\mu\,\left(1 - \frac{8\Delta}{\|\bz_i + t\,\bu\|}\right)\,\bigl\langle \bu,\;\bz_i + t\,\bu\bigr\rangle
			\;+\;
			169\,\mu\,(1-a)\,\Delta,
			& \text{if } t > t_0~.
		\end{cases}
		\]
		It is easy to see that the QG is verified in $B_{d_0}(\bz_i, \Delta)$, hence we only have to consider the other regions.
		
		\begin{itemize}
			\item Suppose that $\bx \in B_{d_0}(\bz_i, (1+a)\Delta)\setminus B_{d_0}(\bz_i, \Delta)$. Therefore, $\Delta \le \norm{\bx-\bz_i}\le (1+a)\Delta$. From the continuity of $h'$, we have
			\begin{align*}
				f_{i}(\bx) &= h(\norm{\bx-\bz_i})-h(0)\\
				&= \int_{0}^{\norm{\bx-\bz_i}} h'(s) \diff s\\
				&= \int_{0}^{\Delta} 169\mu s \diff s + \int_{\Delta}^{\norm{\bx-\bz_i}} 169\mu (2\Delta-s)\diff s\\
				&\ge \frac{169}{2}\mu \Delta^2 \ge \frac{\mu}{2}\norm{\bx-\bz_i}^2, 
			\end{align*}
			where we used $a\le 1$, therefore $2\Delta-s \ge 0$ for $s \in [\Delta, \norm{\bx-\bz_i}]$.
			\item Suppose that $\bx \in B_{d_0}(\boldsymbol{0}, 8\Delta)\setminus B_{d_0}(\bz_i, (1+a)\Delta)$. Therefore, $(1+a)\Delta \le \norm{\bx-\bz_i} $ and $\norm{\bx}\le 8\Delta$. Recall that since $\norm{\bx} = \norm{\bz_i+\norm{\bx-\bz_i}\bu} \le 8\Delta$, by the definition of $t_0$ we have $\norm{\bx-\bz_i}\le t_0$. We have
			\begin{align*}
				f_{i}(\bx) &= h(\norm{\bx-\bz_i})-h(0)\\
				&= \int_{0}^{\norm{\bx-\bz_i}} h'(s)\diff s\\
				&= \int_{0}^{\Delta} 169\mu s \diff s + \int_{\Delta}^{(1+a)\Delta} 169\mu (2\Delta-s)\diff s+ \int_{(1+a)\Delta}^{\norm{\bx-\bz_i}}h'(s) \diff s~.
			\end{align*} 	
			From the expression of $h'$ we have  $h'(s) \ge 0$ for $s \in [(1+a)\Delta, \norm{\bx-\bz_i}]$. Therefore, we obtain
			\begin{align}
				f_{i}(\bx) 
				&\geq \int_{0}^{\Delta} 169\mu s \diff s + \int_{\Delta}^{(1+a)\Delta} 169\mu (2\Delta-s)\diff s \nonumber \\
				&= \frac{169}{2}\mu \Delta^2 + 169\mu \Delta^2 \left(a-\frac{a^2}{2}\right)\nonumber\\
				&\ge \frac{169}{2}\mu \Delta^2 + 169(2\sqrt{2}-5)\mu \Delta^2 \nonumber\\
				&\ge \frac{\mu}{2} (13\Delta)^2 + 55\mu \Delta^2 \nonumber \\
				&\ge \frac{\mu}{2} \norm{\bx-\bz_i}^2+55\mu \Delta^2,\label{eq:t0a}
			\end{align}
			where we used $a-\frac{a^2}{2}\ge 2\sqrt{2}-\frac{5}{2}$ for $a\in [\sqrt{2}-1, 1]$, and the fact that $\norm{\bx-\bz_i} \le \norm{\bx}+\norm{\bz_i} \le 13\Delta$.
			\item 	Suppose now that $\norm{\bx}> 8\Delta$. Recall that $t_0$ is the positive number such that $\norm{\bz_i+t_0 \bu} = 8\Delta$. Observe that $\norm{\bz_i+t_0 \bu} \le \norm{\bz_i}+t_0 \le 5\Delta +t_0$, therefore $t_0 \ge 3\Delta$. Moreover, $t_0 - \norm{\bz_i} \le \norm{\bz_i+t_0\bu}$, therefore $t_0 \le 13\Delta$. We conclude that $t_0 \in [3\Delta, 13\Delta]$. For any $t \ge t_0$, we have
			\begin{align}
				h'(t)-\mu t 
				&= 169\mu\left(1-\frac{8\Delta}{\norm{\bz_i+t\bu}}\right)\langle \bu, \bz_i+t\bu \rangle+169\mu (1-a)\Delta -\mu t\nonumber\\
				&= 169\mu\left(1-\frac{8\Delta}{\norm{\bz_i+t\bu}}\right)\left(\langle \bu, \bz_i \rangle +t\right)+169\mu (1-a)\Delta -\mu t\nonumber~.		
			\end{align}
			Recall that $169\mu (1-a)\Delta \ge 0$,  $\norm{\bx} = \norm{\bz_i+t\bu} \ge 8\Delta$ and $\langle \bu, \bz_i\rangle  \ge -\norm{\bz_i} \ge -5\Delta$. Therefore, we obtain
			\begin{align*}
				h'(t)-\mu t &\ge 169\mu\left(1-\frac{8\Delta}{\norm{\bz_i+t\bu}}\right)\left(t-5\Delta\right) -\mu t\\
				&= 169\mu\left(1-\frac{8\Delta}{\norm{\bz_i+t_0\bu + (t-t_0)\bu}}\right)\left(t-5\Delta\right) -\mu t\\
				&= 169\mu\left(1-\frac{8\Delta}{\sqrt{\norm{\bz_i+t_0\bu}^2+(t-t_0)^2+2(t-t_0)\langle \bu, \bz_i+t_0\bu \rangle }}\right)\left(t-5\Delta\right) -\mu t~.
			\end{align*}

			From the definition of $t_0$, we have $\norm{\bz_i+t_0\bu} = 8\Delta$, therefore $\norm{\bz_i}^2+t_0^2+2t_0\langle \bz_i, \bu \rangle = 64\Delta^2$. Using $\norm{\bz_i} \le 5\Delta$ and $t_0 \ge 0$, we get: $\frac{t_0}{2}+\langle \bu, \bz_i \rangle \ge 0$. Hence, 
			we have
			\[
			\langle \bu, \bz_i + t_0 \bu\rangle 
			= \langle \bu, \bz_i\rangle + t_0
			\geq \langle \bu, \bz_i\rangle + t_0/2 \geq 0~.
			\]
			Therefore, for any $t\geq t_0$, we have
			\begin{align}
				h'(t)-\mu t 
				&\ge 169\mu\left(1-\frac{8\Delta}{\sqrt{\norm{\bz_i+t_0\bu}^2+(t-t_0)^2+2(t-t_0)\langle \bu, \bz_i+t_0\bu \rangle }}\right)\left(t-5\Delta\right) -\mu t \nonumber \\
				&\ge 169\mu\left(1-\frac{8\Delta}{\sqrt{\norm{\bz_i+t_0\bu}^2+(t-t_0)^2}}\right)\left(t-5\Delta\right) -\mu t \nonumber\\
				&= 169\mu\left(1-\frac{8\Delta}{\sqrt{64\Delta^2+(t-t_0)^2}}\right)\left(t-5\Delta\right) -\mu t~.\label{eq:int1a}
			\end{align}
			We distinguish between the following cases:
			\begin{itemize}
				\item If $t \in [t_0, t_0+3\Delta]$, recall that we have $h'(t) -\mu t\ge -\mu t$. So, if we take the integral between $t_0$ and $t$, we get
				\[
				h(t)- \frac{\mu}{2}t^2 \ge h(t_0)-\frac{\mu}{2}t_0^2 +\int_{t_0}^t (-\mu s)\diff s~.                
				\]
				Recall that \eqref{eq:t0a} gives $h(t_0)-\frac{\mu}{2}t_0^2 = f_{\bz_i}(\bz_i+t_0 \bu)-\frac{\mu}{2}t_0^2 \ge 55\mu \Delta^2$. Therefore, the inequality above gives
				\begin{align}
					h(t)- \frac{\mu}{2}t^2 &\ge h(t_0)-\frac{\mu}{2}t_0^2 +\int_{t_0}^t (-\mu s)\diff s \nonumber\\
					&\ge 55\mu \Delta^2 -\frac{\mu}{2}(t^2-t_0^2)\nonumber\\
					&\ge  55\mu \Delta^2 -\frac{\mu}{2}\left(\left(t_0+3\Delta\right)^2-t_0^2\right)\nonumber\\
					&=55\mu \Delta^2 -\frac{\mu}{2}(9\Delta^2+6 t_0 \Delta)\nonumber\\
					&\ge  55\mu \Delta^2 -\frac{\mu 87}{2}\Delta^2
					\ge 0,\label{eq:int2a}
				\end{align} 
				where we used $t_0 \leq 13\Delta$ in the second to last inequality.
				\item If $t \ge t_0+3\Delta$, given that $t_0 \ge 3\Delta$, we have $t \ge 6\Delta$. So, using \eqref{eq:int1a}, we have for any $t \ge t_0+3\Delta$
				\begin{align*}
					h'(t)-\mu t &\ge 169\mu\left(1-\frac{8}{\sqrt{73}}\right)\left(t-5\Delta\right) -\mu t\\
					&\ge 10 \mu (t-5\Delta)-\mu t\\
					&= 9\mu t -50 \mu \Delta \ge 6\mu \Delta > 0~.
				\end{align*}
				Therefore, we have
				\begin{align*}
					h(t)-\frac{\mu}{2}t^2 
					&=  h(t_0+3\Delta)-\frac{\mu}{2}(t_0+3\Delta)^2 + \int_{t_0+3\Delta}^{t} \left( h'(s) -\mu s \right)\diff s\\
					&\ge h(t_0+3\Delta)-\frac{\mu}{2}(t_0+3\Delta)^2\\
					&\ge 0,
				\end{align*}
				where we used \eqref{eq:int2a} with $t =t_0+3\Delta$ in the last inequality.
			\end{itemize}
			
			We conclude that for any $t \ge t_0$ we have $h(t)-\frac{\mu}{2}t^2  \ge 0$ which proves that $f_{i}$ is $\mu$-QG.
		\end{itemize}
	\end{proof}
	We conclude using the lemma above that each \(f_i\) lies in \(\sF_{d_0}\). Consequently, by Lemma~\ref{lem:0}, the functions \(F_i: \mathbb{R}^d \to \mathbb{R}\) defined by \(F_i(\bx) = f_i(\bx_A)\) for \(i \in [m]\) belong to \(\sF_d\). Observe that each \(F_i\) achieves its minimum value of $0$, and its set of global minimizers is \(\{(\bz_i, \by): \by \in \R^{d - d_0}\}\), which is convex.
	
	\subsection{Oracle Construction and Information Theoretic tools.}
	
	\textbf{Oracle Construction.} We now specify the stochastic oracle \(\phi: \R^d \times \sF_d \to \R^d\) used in the proof. Let \(\bxi\) be a sample from a \(d_0\)-dimensional normal distribution with zero mean and covariance matrix \(\frac{\sigma^2}{d_0}\bI_{d_0}\), i.e., \(\bxi \sim \sN_{d_0}\bigl(\bzero_A, \frac{\sigma^2}{d_0}\bI_{d_0}\bigr)\). Given an input \(\bx \in \R^d\) and a function \(H \in \sF_d\), we define \(\phi(\bx, H) = \nabla H(\bx) + (\bxi, \mathbf{0}_{\bar{A}})\). Thus, \(\phi\) is a stochastic oracle belonging to $\mathbb{O}_{\sigma}$ as specified in Definition~\ref{def:oracle}. Moreover, for each \(i \in [m]\), we have \(\phi(\bx, F_i) = \bigl(\nabla f_i(\bx_A) + \bxi, \bzero_{\bar{A}}\bigr)\).

	\noindent\textbf{Information theoretic tools.} We reduce the optimization problem to one of function identification. To that end, consider a ``reference function" \(G: \R^d \to \R\) that also depends only on its first \(d_0\) coordinates. In other words, there exists a function \(g: \R^{d_0} \to \R\) such that for every \(\bx \in \R^d\), \(G(\bx) = g(\bx_{A})\). We choose \(g\) so that its gradient satisfies, for any \(\bx \in \R^{d_0}\),
	\[
	\nabla g(\bx) =
	\begin{cases}
		\displaystyle \bzero, & \text{if }\|\bx\| < 8\Delta, \\[1em]
		\displaystyle 169\mu \left( \bx-8\Delta \frac{\bx}{\norm{\bx}}\right), & \text{if }\|\bx \| \ge 8\Delta~.
	\end{cases}
	\]
	We introduce the following technical lemma, which will be instrumental in computing the relative entropy between feedback distributions.
	\begin{lemma}\label{lem:gd_bounds}
		Let $i \in [m]$. For any $\bx\in \R^d$, we have
		\[
		\norm{\nabla F_{i}(\bx) -\nabla G(\bx)} \le
		\begin{cases}
			\displaystyle 169\mu \Delta, & \text{if }\bx_A \in B_{d_0}(\bz_i, 2\Delta), \\[1em]
			\displaystyle 169 \mu \tau \Delta, & \text{if }\bx_A \notin B_d(\bz_i, 2\Delta).
		\end{cases}
		\]
	\end{lemma}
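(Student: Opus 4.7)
Since both $F_i$ and $G$ depend only on the first $d_0$ coordinates, their gradients vanish on the $\bar A$-coordinates and agree with $\nabla f_i(\bx_A)$ and $\nabla g(\bx_A)$ on the $A$-coordinates. Hence $\|\nabla F_i(\bx) - \nabla G(\bx)\| = \|\nabla f_i(\bx_A) - \nabla g(\bx_A)\|$, and the task reduces to a case analysis in $\R^{d_0}$ driven by the piecewise structure of $\nabla f_i$ and $\nabla g$. The one quantitative input that makes the $\tau$-factor appear in the far case is the inequality $1-a\le\tau$, already verified in the construction of $a$.

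For the close case $\bx_A\in B_{d_0}(\bz_i,2\Delta)$, the plan is first to use the triangle inequality together with $\|\bz_i\|\le 5\Delta$ to conclude $\|\bx_A\|\le 7\Delta<8\Delta$, so that $\nabla g(\bx_A)=\bzero$. The same inequality rules out the outer piece $\|\bx_A\|>8\Delta$ of $\nabla f_i$, leaving only the three inner pieces. In each of them a one-line norm computation gives $169\mu\|\bx_A-\bz_i\|$, $169\mu\,|2\Delta-\|\bx_A-\bz_i\||$, and $169\mu(1-a)\Delta$ respectively, each of which is at most $169\mu\Delta$ using $a\in[\sqrt{2}-1,1]$. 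This yields the first bound.

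For the far case $\bx_A\notin B_{d_0}(\bz_i,2\Delta)$, the key observation is that the two innermost regions of $\nabla f_i$ are automatically excluded because $(1+a)\Delta\le 2\Delta$. Thus $\bx_A$ lies either in the plateau region $B_{d_0}(\bzero,8\Delta)\setminus B_{d_0}(\bz_i,(1+a)\Delta)$, where $\nabla g(\bx_A)=\bzero$ and $\nabla f_i(\bx_A)=169\mu(1-a)\Delta\,(\bx_A-\bz_i)/\|\bx_A-\bz_i\|$, or in the outer region $\|\bx_A\|>8\Delta$, where the ``radial'' term $169\mu(\bx_A-8\Delta\,\bx_A/\|\bx_A\|)$ cancels exactly between $\nabla f_i(\bx_A)$ and $\nabla g(\bx_A)$. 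In both situations the difference reduces to a unit vector scaled by $169\mu(1-a)\Delta$, which is bounded by $169\mu\tau\Delta$ via $1-a\le\tau$.

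The only real difficulty is bookkeeping: one must check carefully, for each of the two assumptions on $\bx_A$, which of the four pieces of $\nabla f_i$ and which of the two pieces of $\nabla g$ are compatible with that assumption, and then pair them up correctly. Once this is done, each bound is a short computation; no subtle estimate is required. The design choice that the outer piece of $\nabla f_i$ contains the very same radial term as $\nabla g$ is precisely what enables the tighter far-field bound and is the conceptual heart of the argument.
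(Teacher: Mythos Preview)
Your proposal is correct and essentially the same as the paper's proof: both reduce to $\R^{d_0}$ via $\|\nabla F_i(\bx)-\nabla G(\bx)\|=\|\nabla f_i(\bx_A)-\nabla g(\bx_A)\|$, then do a case analysis on the pieces of $\nabla f_i$, using the exact cancellation of the radial term $169\mu(\bx_A-8\Delta\,\bx_A/\|\bx_A\|)$ in the outer region and the inequality $1-a\le\tau$ for the far-field bound. The only cosmetic difference is that the paper organizes the cases by the four regions of $\nabla f_i$ (and then reads off which side of the lemma each falls into), whereas you organize them by the two cases of the lemma statement and argue which pieces of $\nabla f_i$ are compatible with each; the computations and key observations are identical.
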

	\begin{proof}
		Let $i \in [m]$. This result is a consequence of the expressions of $\nabla f_{i}$ and $\nabla g$. We have
		\begin{itemize}
			\item If $\bx_A \in B_{d_0}(\bz_i, (1+a)\Delta)$, given that $a\le 1$ we have $B_{d_0}(\bz_i, (1+a)\Delta) \subseteq B_{d_0}(\bz_i, 2\Delta)$. Therefore, 
			\begin{align*}
				\norm{\nabla F_i(\bx) - \nabla G(\bx)} &= \norm{\nabla f_{i}(\bx_A) -\nabla g(\bx_A)}\\
				&= \norm{\nabla f_{i}(\bx_A)}\\
				&\le 169\mu \Delta~. 
			\end{align*}
			\item If $\bx_A\in B_{d_0}(\boldsymbol{0}, 8\Delta)\setminus B_{d_0}(\bz_i, (1+a)\Delta)$, then
			\begin{align*}
				\norm{\nabla F_i(\bx) - \nabla G(\bx)} &= \norm{\nabla f_{i}(\bx_A)-\nabla g(\bx_A)}\\
				&= \norm{\nabla f_{i}(\bx_A)}\\
				&= 169\mu (1-a)\Delta \le 169 \mu \tau \Delta~.
			\end{align*} 
			\item If $\bx_A \in \R^{d_0} \setminus B_{d_0}(\boldsymbol{0}, 8\Delta)$, then
			\begin{align*}
				&\norm{\nabla F_{i}(\bx) - \nabla G(\bx)}\\
				&= \norm{\nabla f_{i}(\bx_A) -\nabla g(\bx_A)}\\ 
				&= \norm{ 169\mu \left(\bx_A-8\Delta \frac{\bx_A}{\norm{\bx_A}}\right)+169\mu (1-a)\Delta \frac{\bx_A-\bz_i}{\norm{\bx_A-\bz_i}}- 169\mu \left(\bx_A-8\Delta \frac{\bx_A}{\norm{\bx_A}}\right)}\\
				&=169\mu (1-a)\Delta 
				\le 169 \mu \tau \Delta~.
			\end{align*}
		\end{itemize}
	\end{proof}
	Let \(\hat{\bx} \in \R^d\) denote the output of the optimization algorithm. For each \(i \in [m]\), let \(\fP_i\) and \(\Q\) denote the probability distributions of the \(T\) oracle feedbacks\footnote{For a rigorous definition of these quantities, see Section~\ref{sec:can_model} where we define the canonical model.} when the objective function is \(F_i\) and \(G\), respectively (we omit the dependence on \(T\) in our notation). Also, let \(\E_i[\cdot]\) and \(\E[\cdot]\) denote the expectations with respect to \(\fP_i\) and \(\Q\), respectively. For each \(i \in [m]\), we define the “good identification event” as \(\sE_i := \{\hat{\bx}_A \in B_{d_0}(\bz_i, 2\Delta)\}\). In the remainder of the proof, we derive a lower bound on the misidentification event \(\neg \sE_i\) and then use this bound to relate misidentification to the optimization error using the bound below. For any $i \in [m]$, we have
	\begin{align}
		\E_i[F_i(\hat{\bx})] &= \E_i[f_i(\hat{\bx})]\nonumber\\
		&\ge \fP_i(\sE_i) \inf_{\bx \in B_{d_0}(\bz_i, 2\Delta)} \ \{f_{i}(\bx) \}+(1-\fP_i(\sE_i))\inf_{\bx \notin B_{d_0}(\bz_i, 2\Delta)} \ \{ f_{i}(\bx)\}\nonumber\\ 
		&\ge (1-\fP_i(\sE_i))\inf_{\bx \notin B_{d_0} \ (\bz_i, 2\Delta)} \{ f_{i}(\bx)\}\nonumber\\
		&\ge (1-\fP_i(\sE_i))\frac{169}{2} \mu \Delta^2~.\label{eq:opta}
	\end{align}
	Next, we specify the choice of \(m\) and \(\bz_i\). We select \(m\) to ensure that the balls \(B_{d_0}(\bz_i, 2\Delta)\) for \(i \in [m]\) remain disjoint. Recall that \(\bz_i\) is defined as a sequence of elements within \(B_{d_0}(\bzero_A, 5\Delta)\). The largest possible \(m\) ensuring disjointness of these balls corresponds to the packing number of \(B_{d_0}(\bzero_A, 5\Delta)\) with radius \(2\Delta\). A lower bound for this number is provided in Lemma~\ref{lem:cover}, that implies that it suffices to take \(m = \left\lceil \frac{1}{2} \left(\frac{5}{4}\right)^{d_0} \right\rceil\). Also, we select \(\bz_i\) as a sequence of elements such that \(B_{d_0}(\bz_i, 2\Delta) \cap B_{d_0}(\bz_j, 2\Delta) = \emptyset\) for \(i \neq j\).
	
	\noindent To derive an upper bound on \(\fP_i(\sE_i)\) (or equivalently, a lower bound on the probability of \(\neg \sE_i\)), we apply Pinsker’s inequality. This gives  
	\[
	\frac{1}{m} \sum_{i=1}^{m} \fP_i(\sE_i) \le \frac{1}{m} \sum_{i=1}^{m} \Q(\sE_i) + \sqrt{\frac{1}{2m} \sum_{i=1}^{m} \KL\left(\Q, \fP_i\right)}~.
	\]
	Since the events \(\sE_i\) for \(i \in [m]\) are disjoint, it follows that  $\sum_{i=1}^{m} \Q(\sE_i) \le 1$, therefore the bound above gives
	\begin{equation}\label{eq:pinsa}
		\frac{1}{m} \sum_{i=1}^{m} \bigl(1 - \fP_i(\sE_i)\bigr) \ge 1 - \frac{1}{m} - \sqrt{\frac{1}{m} \sum_{i=1}^{m} \KL\left(\Q, \fP_i\right)}~.
	\end{equation}
	
	\noindent At this step, we develop an upper bound on the average \(\frac{1}{m} \sum_{i=1}^{m} \KL\left(\Q, \fP_i\right)\). Let \(N_i\) denote the number of times the algorithm queries a point \(\bx\) such that \(\bx_A \in B_{d_0}(\bz_i, 2\Delta)\). Using Lemma~\ref{lem:chain_rule} in the Appendix, we obtain  
	\begin{align*}
		\text{KL}\left(\Q, \fP_i\right)  &\le \frac{d_0}{2\sigma^2}\mathbb{E}[N_{i}] \sup_{\bx_A \in B_{d_0}(\bz_i, 2\Delta)} \|\nabla G(\bx)- \nabla F_i(\bx)\|^2\\
		&\qquad + \frac{d_0}{2\sigma^2} \mathbb{E}[T-N_{i}] \sup_{\bx_A \notin B_{d_0}(\bz_i, 2\Delta)} \|\nabla G(\bx)- \nabla F_i(\bx)\|^2~.	
	\end{align*}
	Using the bounds in Lemma~\ref{lem:gd_bounds} , we obtain  
	\[
	\KL\left(\Q, \fP_i\right) \le \frac{169^2 d_0 \mu^2 \Delta^2}{2\sigma^2} \E[N_{i}] + \frac{169^2 d_0\tau^2 \mu^2 \Delta^2}{2\sigma^2}T.
	\]  
	Averaging over \(m\) and using the fact that \(\sum_{i=1}^{m} N_i \le T\), which holds due to our choice of the sequence \((\bz_i)\), we derive the bound  
	\begin{equation}\label{eq:kl_avra}
		\frac{1}{m} \sum_{i=1}^{m} \KL\left(\Q, \fP_i\right) \le \frac{169^2 d_0 \mu^2 \Delta^2}{2\sigma^2} \left(\tau^2+\frac{1}{m}\right)T.
	\end{equation}    
	
	Combining \eqref{eq:pinsa} with \eqref{eq:kl_avra} yields a lower bound on the average misidentification error \(\frac{1}{m} \sum_{i=1}^{m} (1-\fP_i(\sE_i))\). This, in turn, leads to the following lower bound when applying \eqref{eq:opta}:  
	\[
	\frac{1}{m} \sum_{i=1}^{m} \E_i[F_i(\hat{\bx})] \ge \frac{169}{2}\mu \Delta^2 \left(1-\frac{1}{m} - \sqrt{\frac{169^2 d_0 \mu^2\Delta^2}{4\sigma^2}\left(\tau^2+\frac{1}{m}\right)T } \right).
	\]  
	To conclude, we use the fact that $m = \ceil{\frac{1}{2}(5/4)^{d_0}}$, and $d_0 = \ceil{\log_{5/4}(4/\tau^2)}$, which gives $m \ge \frac{2}{\tau^2}$.  
	\begin{align*}
		\frac{1}{m}\sum_{i=1}^{m}\mathbb{E}_i[F_{i}(\hat{\bx})] 
		&\ge \frac{169}{2}\mu \Delta^2 \left(1- \frac{\tau^2}{2}-\sqrt{10711 \frac{\log_{5/4}(5/\tau^2)}{\sigma^2} \mu^2\tau^2\Delta^2T }\right)\\
		&\ge \frac{169}{2}\mu \Delta^2 \left(\frac{1}{2}-\sqrt{10711 \frac{\log_{5/4}(5/\tau^2)}{\sigma^2} \mu^2\tau^2\Delta^2T }\right)~. 
	\end{align*}
	We choose $\Delta$ that maximizes the expression above
	\[
	\Delta^*
	= 
	\frac{1}{4\sqrt{10711}\, \mu \tau} 
	\cdot 
	\frac{1}{\sqrt{
			\frac{\log_{5/4}\left(\frac{5}{\tau^2}\right)}{\sigma^2} \cdot T
	}}~.
	\] 
	This leads to
	\[
	\frac{1}{m}\sum_{i=1}^{m}\mathbb{E}_i[F_{i}(\hat{\bx})] \ge c\cdot \frac{\sigma^2}{\mu \tau^2\log(2/\tau) T}~,
	\] 
	where $c$ is a numerical constant.

	\section{Complete proof of Theorem~\ref{thm:2}}\label{sec:proof2}
	Let \(L, \mu > 0\) and \(\sigma > 0\). Denote \(\kappa := \tfrac{L}{\mu}\), $d_0 := 2 \log_{5/4}\!\bigl(2\kappa\bigr)$, and suppose that \(d \ge 3 \log_{5/4}\!\bigl(2\kappa\bigr)\).
	
	\paragraph{Additional notation.} For any integer \(n \ge 1\), let \(\mathcal{G}_n\) denote the class of real-valued functions on \(\mathbb{R}^n\) that are \(L\)-smooth and $\mu-\RSI$. Let \(A := \{1, \dots, d_0\}\) and \(\bar{A} := [d] \setminus A\). For any \(\bx \in \mathbb{R}^d\), let \(\bx_A \in \mathbb{R}^{d_0}\) be the vector of its first \(d_0\) coordinates, and \(\bx_{\bar{A}} \in \mathbb{R}^{d - d_0}\) be the vector of its remaining coordinates. For two 
	vectors \(\mathbf{a} \in \mathbb{R}^{d_0}\) and \(\mathbf{b} \in \mathbb{R}^{d - d_0}\), we write \((\mathbf{a}, \mathbf{b})\) to denote the vector in \(\mathbb{R}^d\) whose first \(d_0\) components are \(\mathbf{a}\) and last \(d - d_0\) components are \(\mathbf{b}\).
	
	\subsection{Functions construction.} For \(d \ge 3\log_{5/4}(2\kappa)\), we construct a function \(F: \R^d \to \R\) that depends only on the first \(d_0\) coordinates. Concretely, there is a function \(f: \R^{d_0} \to \R\) such that \(F(\bx) = f(\bx_A)\). Lemma~\ref{lem:dim} shows that if $f$ is in $\sG_{d_0}$, then $F$ is in $\sG_d$.
	
	\noindent Let \(\Delta > 0\) be a parameter to be specified later. Let $a$ be defined by
	\[
	a := \frac{\kappa-1}{\kappa+1}~.
	\]
	Since $\kappa \ge 1$, we have $a \in (0, 1)$ and $1-a\le \tau$.
	
	Let \(m \ge 2\) denote the size of the function subclass we are constructing. We pick \(m\) elements \(\bz_1, \dots, \bz_m\) in \(B_{d_0}(\bzero_A, 5\Delta)\), and define a set of functions \(f_{1}, \dots, f_{m}\) so that each \(f_{i}\) belongs to \(\sG_{d_0}\). For each \(i \in [m]\), let \(f_{i}\) be a function whose value at \(\bz_i\) is zero and whose gradient for all \(\bx \in \R^{d_0}\) is given by
	\[
	\nabla f_{i}(\bx) =
	\begin{cases}
		\displaystyle L(\bx - \bz_i), & \text{if }\|\bx - \bz_i\| < \Delta, \\[1em]
		\displaystyle -L\Bigl(\bx - \bz_i - \Delta \frac{\bx - \bz_i}{\|\bx - \bz_i\|}\Bigr) + L\Delta \frac{\bx - \bz_i}{\|\bx - \bz_i\|}, & \text{if }\Delta \leq \|\bx - \bz_i\| \le (1+a)\Delta, \\[1em]
		\displaystyle \mu\Bigl(\bx - \bz_i\Bigr) , & \text{if }\bx \notin  B_{d_0}(\bz_i, (1+a)\Delta)~.
	\end{cases}
	\]
	The expression of $f_{i}$ is given by
	\[
	f_{i}(\bx) =
	\begin{cases}
		\displaystyle
		\frac{L}{2}\,\|\bx - \bz_i\|^2, 
		& \text{if }\|\bx - \bz_i\| < \Delta, \\[1.5em]
		\displaystyle
		-\frac{L}{2}\,\|\bx - \bz_i\|^2 
		+ 2\,L\,\Delta\,\|\bx - \bz_i\|
		- L\,\Delta^2,
		& \text{if }\Delta \leq \|\bx - \bz_i\| < (1+a)\,\Delta, \\[1.5em]
		\displaystyle
		L\,\Delta^2\,\frac{1 + 2a - a^2}{2}
		+ \frac{\mu}{2}\,\bigl(\|\bx - \bz_i\|^2 - (1+a)^2\,\Delta^2\bigr),
		& \text{if }\|\bx - \bz_i\| \geq (1+a)\,\Delta~.
	\end{cases}
	\]
	Lemma below shows that the constructed functions $f_{i}$ are in $\sG_{d_0}$.
	\begin{lemma}\label{lem:g}
		The functions $(f_{i})_{i\in [m]}$ are $\mu-\RSI$ and $L$-smooth.
	\end{lemma}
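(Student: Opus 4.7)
\textbf{Proof Proposal for Lemma~\ref{lem:g}.}
The plan is to proceed region-by-region, exactly mirroring the structure of the proof of Lemma~\ref{lem:f} but working with the simpler three-piece construction used for the RSI class. First I would check that $\nabla f_i$ is continuous across the two interfaces $\|\bx-\bz_i\|=\Delta$ and $\|\bx-\bz_i\|=(1+a)\Delta$. At $r:=\|\bx-\bz_i\|=\Delta$ the middle formula evaluates to $-L(\bx-\bz_i)+2L(\bx-\bz_i)=L(\bx-\bz_i)$, matching the inner piece. At $r=(1+a)\Delta$ the middle formula equals $L(\bx-\bz_i)\frac{1-a}{1+a}$, and substituting $a=\frac{\kappa-1}{\kappa+1}$ gives $\frac{1-a}{1+a}=\frac{1}{\kappa}$, so this reduces to $\mu(\bx-\bz_i)$, matching the outer piece. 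This continuity check is what makes the choice $a=(\kappa-1)/(\kappa+1)$ natural and what will let me stitch local Lipschitz bounds into a global one.

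For \emph{$L$-smoothness}, I would verify that $\nabla f_i$ is $L$-Lipschitz on each of the three regions separately, then invoke Lemma~\ref{lem:lip} (the same stitching lemma used for Lemma~\ref{lem:f}) to conclude global $L$-Lipschitzness from the continuity established above. On the inner ball the gradient is the linear map $L(\bx-\bz_i)$, which is exactly $L$-Lipschitz. On the outer region it is $\mu(\bx-\bz_i)$, which is $\mu$-Lipschitz and hence $L$-Lipschitz. The only nontrivial case is the middle annulus, where the gradient has the form $-L(\bx-\bz_i)+2L\Delta\,\frac{\bx-\bz_i}{\|\bx-\bz_i\|}$. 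Because $1+a\le 2$, this is a direct application of Lemma~\ref{lem:lip2} (the same projection-style lemma that controlled the middle annulus in Lemma~\ref{lem:f}), which yields the $L$-Lipschitz bound.

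For \emph{$\mu$-RSI}, I first observe that $\bz_i$ is the unique global minimizer: $f_i\ge 0$ with $f_i(\bz_i)=0$, and $\nabla f_i(\bx)=\bzero$ only at $\bx=\bz_i$ in each of the three formulas. The projection onto the minimizer set is therefore $\bz_i$ itself, and it suffices to show $\langle\nabla f_i(\bx),\bx-\bz_i\rangle\ge\mu\|\bx-\bz_i\|^2$ in each region. In the inner region this gives $L\|\bx-\bz_i\|^2\ge \mu\|\bx-\bz_i\|^2$, and in the outer region it gives $\mu\|\bx-\bz_i\|^2$ with equality. The only calculation is in the middle annulus: writing $r=\|\bx-\bz_i\|$, the inner product equals $Lr(2\Delta-r)$, so the RSI condition reduces to $L(2\Delta-r)\ge\mu r$, i.e.\ $r\le\frac{2L}{L+\mu}\Delta=\frac{2\kappa}{\kappa+1}\Delta=(1+a)\Delta$, which is precisely the upper endpoint of the middle region. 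So RSI holds with equality at $r=(1+a)\Delta$ and strictly inside; this tightness is exactly why $a$ was chosen this way.

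The main obstacle I anticipate is simply bookkeeping: making sure that the piecewise expression for $f_i$ really integrates the stated gradient (the additive constants are chosen to guarantee continuity of $f_i$ across the two interfaces), and that Lemma~\ref{lem:lip2} applies to the middle annulus with the particular radii and constants used here. There is no sharp inequality to optimize, and no need for a divergence-composition-style argument; the work is entirely local and the choice of $a=(\kappa-1)/(\kappa+1)$ is exactly what makes all the boundary computations collapse.
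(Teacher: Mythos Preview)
Your proposal is correct and follows essentially the same approach as the paper: region-by-region verification of $\mu$-RSI (with the middle-annulus computation reducing to $r\le\tfrac{2L}{L+\mu}\Delta=(1+a)\Delta$, equivalent to the paper's parabola-monotonicity argument) and of $L$-smoothness via Lemma~\ref{lem:lip2} on the annulus plus Lemma~\ref{lem:lip} to stitch. Your explicit continuity check at the interfaces is a welcome addition that the paper leaves implicit.
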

	\begin{proof}
		Let $i \in [m]$, we will first verify that $f_i$ is $\mu-\RSI$ then we will check for smoothness.
		
		\noindent \textbf{Verifying $\mu$-RSI.}
		Recall that the minimizer of $f_{i}$ is $\bz_i$, and the minimum is $0$.  We have for any $\bx \in \R^{d_0}$
		\[
		\bigl\langle \nabla f_{i}(\bx),\,\bx-\bz_i \bigr\rangle - \mu\,\|\bx-\bz_i\|^2
		\;=\;
		\begin{cases}
			\displaystyle
			(L - \mu)\,\|\bx-\bz_i\|^2,
			& \text{if }0 \le \|\bx-\bz_i\| < \Delta,\\[1em]
			\displaystyle
			-\,(L + \mu)\,\|\bx - \bz_i\|^2 
			+ 2\,L\,\Delta\,\|\bx - \bz_i\|,
			& \text{if }\Delta \le \|\bx-\bz_i\| < (1+a)\,\Delta,\\[1em]
			0,
			& \text{if }\|\bx-\bz_i\| \ge (1+a)\,\Delta~.
		\end{cases}
		\]
		This expression is clearly non-negative if $\bx \in B_{d_0}(\bz_i, \Delta)$ or $\bx \notin B_{d_0}(\bz_i, (1+a)\Delta)$. If $\bx \in B_{d_0}(\bz_i, (1+a)\Delta)\setminus B_{d_0}(\bz_i, \Delta)$, we have that $r \to -(L+\mu)r^2+2L\Delta r$ is decreasing on $[ \frac{L}{L+\mu}\Delta, +\infty)$. The last interval contains $[\Delta, (1+a)\Delta]$, therefore for each $\bx \in  B_{d_0}(\bz_i, (1+a)\Delta)\setminus B_{d_0}(\bz_i, \Delta)$ we have
		\begin{align*}
			\bigl\langle \nabla f_{i}(\bx),\,\bx-\bz_i \bigr\rangle - \mu\,\|\bx-\bz_i\|^2 
			\ge -(L+\mu)(1+a)^2\Delta^2+2L(1+a)\Delta^2
			= 0~.
		\end{align*} 
		
		\noindent \textbf{Verifying $L$-smooth:}
		It is straightforward that $\nabla f_{i}$ is $L$-Lipschitz on the regions $B_{d_0}(\bz_i, \Delta)$ and $\mathbb{R}^{d_0} \setminus B_{d_0}(\bz_i, (1+a)\Delta)$. Using Lemma~\ref{lem:lip2}, $\nabla f_{i}$ is $L$-Lipschitz on $B_{d_0}(\bz_i, (1+a)\Delta)\setminus B_{d_0}(\bz_i, \Delta)$. The conclusion follows using Lemma~\ref{lem:lip}.
	\end{proof}
	We conclude using the lemma above that each \(f_i\) lies in \(\sG_{d_0}\). Consequently, by Lemma~\ref{lem:0}, the functions \(F_i: \mathbb{R}^d \to \mathbb{R}\) defined by \(F_i(\bx) = f_i(\bx_A)\) for \(i \in [m]\) belong to \(\sG_d\). Observe that each \(F_i\) achieves its minimum value of $0$, and its set of global minimizers is \(\{(\bz_i, \by): \by \in \R^{d - d_0}\}\), which is convex.
	\subsection{Oracle construction and information-theoretic tools.}
	
	The oracle we use is the same one introduced in the proof of Theorem~\ref{thm:1}.
	
	\noindent \textbf{Information theoretic tools.} We reduce the optimization problem to one of function identification. To that end, consider a ``reference function" \(G: \R^d \to \R\) that also depends only on its first \(d_0\) coordinates. In other words, there exists a function \(g: \R^{d_0} \to \R\) such that for every \(\bx \in \R^d\), \(G(\bx) = g(\bx_{A})\). We choose \(g\) so that its gradient satisfies, for any \(\bx \in \R^{d_0}\),
	\[
	\nabla g(\bx) =
	\begin{cases}
		\displaystyle \boldsymbol{0}, & \text{if }\|\bx\| < 2\Delta, \\[1em]
		\displaystyle \mu \bx, & \text{if }\|\bx \| \ge 2\Delta~.
	\end{cases}
	\]
	We introduce the following technical lemma, which will be instrumental in computing the relative entropy between feedback distributions.
	\begin{lemma}\label{lem:gd_bounds2}
		Let $i \in [m]$. For any $\bx\in \R^d$, we have
		\[
		\norm{\nabla F_{i}(\bx) -\nabla G(\bx)} \le
		\begin{cases}
			\displaystyle L \Delta, & \text{if }\bx_A \in B_{d_0}(\bz_i, 2\Delta), \\[1em]
			\displaystyle 5 \mu \Delta, & \text{if }\bx_A \notin B_{d_0}(\bz_i, 2\Delta).
		\end{cases}
		\]
	\end{lemma}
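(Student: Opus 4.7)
Since both $F_i$ and $G$ depend only on the first $d_0$ coordinates, we have $\nabla F_i(\bx) - \nabla G(\bx) = (\nabla f_i(\bx_A) - \nabla g(\bx_A), \bzero_{\bar A})$, so the task reduces to bounding $\|\nabla f_i(\bx_A) - \nabla g(\bx_A)\|$. My plan is to mirror the case analysis of Lemma~\ref{lem:gd_bounds}, splitting according to the piecewise regions defining $\nabla f_i$ and $\nabla g$ for Theorem~\ref{thm:2}, and using the placement $\bz_i \in B_{d_0}(\bzero_A, 5\Delta)$ from the function-construction step.

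For the case $\bx_A \in B_{d_0}(\bz_i, 2\Delta)$, I would further split according to which of the three regimes defining $\nabla f_i$ applies. On $\|\bx_A - \bz_i\| < \Delta$, the linear formula gives $\|\nabla f_i(\bx_A)\| \le L\Delta$; on the transition annulus $\Delta \le \|\bx_A - \bz_i\| \le (1+a)\Delta$, a direct computation on the piecewise formula yields $\|\nabla f_i(\bx_A)\| = L\bigl|2\Delta - \|\bx_A - \bz_i\|\bigr| \le L\Delta$ since $a = (\kappa-1)/(\kappa+1) < 1$; and on $(1+a)\Delta < \|\bx_A - \bz_i\| \le 2\Delta$, $\nabla f_i(\bx_A) = \mu(\bx_A - \bz_i)$ has norm at most $2\mu\Delta$. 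The complementary bound $\|\nabla g(\bx_A)\| \le \mu\|\bx_A\| \le \mu(\|\bz_i\| + 2\Delta) \le 7\mu\Delta$ then follows from the placement of the $\bz_i$'s, and the triangle inequality combined with the regime $L \gg \mu$ assumed in Theorem~\ref{thm:2} closes the case.

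For the case $\bx_A \notin B_{d_0}(\bz_i, 2\Delta)$, the crucial observation is that $\|\bx_A - \bz_i\| > 2\Delta > (1+a)\Delta$, which automatically places $\bx_A$ in the outer regime of $f_i$, so $\nabla f_i(\bx_A) = \mu(\bx_A - \bz_i)$. The decisive sub-case is $\|\bx_A\| \ge 2\Delta$, where $\nabla g(\bx_A) = \mu\bx_A$ and we obtain the clean cancellation
\[
\nabla f_i(\bx_A) - \nabla g(\bx_A) = -\mu\,\bz_i,
\]
of norm at most $5\mu\Delta$ by the packing constraint. The complementary sub-case $\|\bx_A\| < 2\Delta$ gives $\nabla g(\bx_A) = \bzero$, reducing the difference to $\mu(\bx_A - \bz_i)$, which is handled by a direct triangle inequality exploiting the geometry of $\bz_i$.

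The main obstacle is the bookkeeping of constants in Case~1: the naïve triangle inequality between $\|\nabla f_i(\bx_A)\| \le L\Delta$ and $\|\nabla g(\bx_A)\| \le 7\mu\Delta$ delivers a bound of order $(L + \mu)\Delta$ rather than $L\Delta$ on the nose. Matching the stated bound therefore requires either reading it up to an absolute constant (which is harmless since the KL computation that consumes this lemma only needs an $O(L\Delta)$ bound to yield the $\kappa$ factor in Theorem~\ref{thm:2}) or refining the sub-case analysis on $\|\bx_A\|$ so that $\|\nabla f_i\|$ and $\|\nabla g\|$ are not simultaneously near their maxima. No conceptually new ideas beyond those already deployed in Lemma~\ref{lem:gd_bounds} are needed.
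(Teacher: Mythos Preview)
Your approach is the same as the paper's: reduce to $\R^{d_0}$ and do a case split on the piecewise regions of $\nabla f_i$ and $\nabla g$. In fact you are being \emph{more} careful than the paper here. The paper's proof of Case~1 simply writes $\|\nabla f_i(\bx_A)-\nabla g(\bx_A)\|=\|\nabla f_i(\bx_A)\|$, implicitly assuming $\nabla g(\bx_A)=\bzero$ on all of $B_{d_0}(\bz_i,2\Delta)$; likewise in Case~2 it writes the difference as $\mu\bz_i$, implicitly assuming $\nabla g(\bx_A)=\mu\bx_A$ everywhere outside $B_{d_0}(\bz_i,2\Delta)$. Neither assumption is literally justified given that $\|\bz_i\|$ can be as large as $5\Delta$, so the sub-cases you flag (and the resulting bounds $(L+7\mu)\Delta$ and $7\mu\Delta$) are the honest ones.

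Your diagnosis of the obstacle is also correct: these constants feed into the KL bound in \eqref{eq:kl_avr_rsi} only through $L^2\Delta^2$ and $\mu^2\Delta^2$ with absolute multiplicative constants, so replacing $L\Delta$ by $(L+7\mu)\Delta\le 8L\Delta$ and $5\mu\Delta$ by $7\mu\Delta$ changes nothing in Theorem~\ref{thm:2} beyond the unspecified universal constant $c$. So there is no missing idea; the lemma should be read as giving $O(L\Delta)$ and $O(\mu\Delta)$ bounds, and your proposal establishes exactly that.
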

	\begin{proof}
		Let $i \in [m]$. This result is a consequence of the expressions of $\nabla f_{i}$ and $\nabla g$. We have 
		\begin{itemize}
			\item If $\bx_A \in B_{d_0}(\bz_i, 2\Delta)$, given that $a\in (0,1)$, then we have $B_{d_0}(\bz_i, (1+a)\Delta) \subset B_{d_0}(\bz_i, 2\Delta)$. Therefore,
			\begin{align*}
				\norm{\nabla F_i(\bx) - \nabla G(\bx)} &= \norm{\nabla f_i(\bx_A)-\nabla g(\bx_A)}\\
				&= \norm{\nabla f_{i}(\bx_A)}\\
				&\le L\Delta~. 
			\end{align*}
			\item If $\bx \notin B_{d_0}(\bz_i, 2\Delta)$, then
			\begin{align*}
				\norm{\nabla F_i(\bx) - \nabla G(\bx)} &= \norm{\nabla f_{i}(\bx_A)-\nabla g(\bx_A)}\\
				&= \norm{\nabla f_{i}(\bx_A)-\mu \bx_A}\\
				&= \mu \norm{\bz_i}
				\le 5\mu\Delta~. 
			\end{align*} 
		\end{itemize}
	\end{proof}
	We follow similar steps as in the proof of Theorem~\ref{thm:2}. Let \(\hat{\bx} \in \R^d\) denote the output of the optimization algorithm. For each \(i \in [m]\), let \(\fP_i\) and \(\Q\) denote the probability distributions of the \(T\) oracle feedbacks\footnote{For a rigorous definition of these quantities, see Section~\ref{sec:can_model} where we define the canonical model.} when the objective function is \(F_i\) and \(G\), respectively (we omit the dependence on \(T\) in our notation). Also, let \(\E_i[\cdot]\) and \(\E[\cdot]\) denote the expectations with respect to \(\fP_i\) and \(\Q\), respectively. For each \(i \in [m]\), we define the “good identification event” as \(\sE_i := \{\hat{\bx}_A \in B_{d_0}(\bz_i, 2\Delta)\}\). For any $i \in [m]$, we have
	\begin{align}
		\E_i[F_i(\hat{\bx})] &= \E_i[f_i(\hat{\bx}_A)]\nonumber\\
		&\ge \fP_i(\sE_i) \inf_{\bx \in B_{d_0}(\bz_i, 2\Delta)} \ \{f_{i}(\bx) \}+(1-\fP_i(\sE_i))\inf_{\bx \notin B_{d_0}(\bz_i, 2\Delta)} \ \{ f_{i}(\bx)\}\nonumber\\ 
		&\ge (1-\fP_i(\sE_i))\inf_{\bx \notin B_{d_0} \ (\bz_i, 2\Delta)} \{ f_{i}(\bx)\}\nonumber\\
		&\ge (1-\fP_i(\sE_i)) L\Delta^2 \frac{1+2a-a^2}{2} \nonumber\\
		&\ge (1-\fP_i(\sE_i)) \frac{L}{2} \Delta^2~.\label{eq:opt_rsi}
	\end{align}
	Next, we specify the choice of \(m\) and \(\bz_i\). We select \(m\) to ensure that the balls \(B_{d_0}(\bz_i, 2\Delta)\) for \(i \in [m]\) remain disjoint. Using Lemma~\ref{lem:cover}, we choose $m = \ceil{\frac{1}{2} \left( \frac{5}{4} \right)^{d_0}}$, and \(\bz_i\) as a sequence of elements such that \(B_{d_0}(\bz_i, 2\Delta) \cap B_{d_0}(\bz_j, 2\Delta) = \emptyset\) for \(i \neq j\).
	
	\noindent We use Pinsker's inequality to derive an upper bound on \(\fP_i(\sE_i)\). This gives  
	\[
	\frac{1}{m} \sum_{i=1}^{m} \fP_i(\sE_i) \le \frac{1}{m} \sum_{i=1}^{m} \Q(\sE_i) + \sqrt{\frac{1}{2m} \sum_{i=1}^{m} \KL\left(\Q, \fP_i\right)}~.
	\]
	Since the events \(\sE_i\) for \(i \in [m]\) are disjoint, it follows that  $\sum_{i=1}^{m} \Q(\sE_i) \le 1$, therefore the bound above gives
	\begin{equation}\label{eq:pins_rsi}
		\frac{1}{m} \sum_{i=1}^{m} \bigl(1 - \fP_i(\sE_i)\bigr) \ge 1 - \frac{1}{m} - \sqrt{\frac{1}{m} \sum_{i=1}^{m} \KL\left(\Q, \fP_i\right)}~.
	\end{equation}
	
	\noindent Let us develop an upper bound on the average \(\frac{1}{m} \sum_{i=1}^{m} \KL\left(\Q, \fP_i\right)\). Let \(N_i\) denote the number of times the algorithm queries a point \(\bx\) such that \(\bx_A \in B_{d_0}(\bz_i, 2\Delta)\). Using Lemma~\ref{lem:chain_rule} in the Appendix, we obtain  
	\begin{align*}
		\text{KL}\left(\Q, \fP_i\right)  &\le \frac{d_0}{2\sigma^2}\mathbb{E}[N_{i}] \sup_{\bx_A \in B_{d_0}(\bz_i, 2\Delta)} \|\nabla G(\bx)- \nabla F_i(\bx)\|^2\\
		&\qquad + \frac{d_0}{2\sigma^2} \mathbb{E}[T-N_{i}] \sup_{\bx_A \notin B_{d_0}(\bz_i, 2\Delta)} \|\nabla G(\bx)- \nabla F_i(\bx)\|^2~.	
	\end{align*}
	Using the bounds in Lemma~\ref{lem:gd_bounds} , we obtain  
	\[
	\KL\left(\Q, \fP_i\right) \le \frac{ d_0 L^2 \Delta^2}{2\sigma^2} \E[N_{i}] + \frac{25d_0 \mu^2 \Delta^2}{2\sigma^2}T~.
	\]  
	Averaging over \(m\) and using the fact that \(\sum_{i=1}^{m} N_i \le T\), which holds due to our choice of the sequence \((\bz_i)\), we derive the bound  
	\begin{equation}\label{eq:kl_avr_rsi}
		\frac{1}{m} \sum_{i=1}^{m} \KL\left(\Q, \fP_i\right) \le \frac{ d_0 \mu^2 \Delta^2}{2\sigma^2} \left(\frac{\kappa^2}{m}+25\right)T~.
	\end{equation}    
	
	\noindent Combining \eqref{eq:pins_rsi} with \eqref{eq:kl_avr_rsi} yields a lower bound on the average misidentification error \(\frac{1}{m} \sum_{i=1}^{m} (1-\fP_i(\sE_i))\). This, in turn, leads to the following lower bound when applying \eqref{eq:opt_rsi}  
	\[
	\frac{1}{m} \sum_{i=1}^{m} \E_i[F_i(\hat{\bx})] \ge \frac{L}{2}\Delta^2 \left(1-\frac{1}{m} - \sqrt{\frac{d_0 \mu^2 \Delta^2}{2\sigma^2} \left(\frac{\kappa^2}{m}+25\right)T } \right).
	\]  
	To conclude, we use the fact that $m = \ceil{\frac{1}{2}(5/4)^{d_0}}$, and $d_0 = \ceil{\log_{5/4}(4\kappa^2)}$, which gives $m \ge 2\kappa^2$.  
	\begin{align*}
		\frac{1}{m}\sum_{i=1}^{m}\mathbb{E}_i[F_{i}(\hat{\bx})] 
		&\ge \frac{L}{2} \Delta^2 \left(1- \frac{1}{2\kappa^2}-\sqrt{26 \frac{\log_{5/4}(5\kappa^2)}{\sigma^2} \mu^2\Delta^2T }\right)\\
		&\ge \frac{L}{2} \Delta^2 \left(\frac{1}{2}-\sqrt{26\frac{\log_{5/4}(5\kappa^2)}{\sigma^2} \mu^2\Delta^2T }\right)~. 
	\end{align*}
	We choose $\Delta$ that maximizes the expression above
	\[
	\Delta^*= \frac{1}{4\mu} \sqrt{\frac{\sigma^2}{26 \log_{5/4}(5\kappa^2)T}}~.
	\]
	This leads to
	\[
	\frac{1}{m}\sum_{i=1}^{m}\mathbb{E}_i[F_{i}(\hat{\bx})] \ge c\cdot \frac{L\sigma^2}{\log(2\kappa)\mu^2 T}~,
	\] 
	where $c$ is a numerical constant.

	\section{Complete Proof of Theorem~\ref{thm:3}}\label{sec:proof3}
	
	Let $D,L >0$ and $\tau \in (0,1]$. Denote $d_0 := \ceil{2\log_{16/15}(2/\tau)}$ and suppose that $d \ge 3\log_{16/15}(2/\tau)$ and $T \ge \frac{35\sigma^2}{L^2D^2\log_{16/15}(2/\tau)}$.
	
	\paragraph{Additional Notation.} For any integer \(n \ge 1\), let \(\mathcal{H}_n\) denote the class of real-valued functions on \(\mathbb{R}^n\) that are \(L\)-smooth and $\tau-\QC$. Let \(A := \{1, \dots, d_0\}\) and \(\bar{A} := [d] \setminus A\). For any \(\bx \in \mathbb{R}^d\), let \(\bx_A \in \mathbb{R}^{d_0}\) be the vector of its first \(d_0\) coordinates, and \(\bx_{\bar{A}} \in \mathbb{R}^{d - d_0}\) be the vector of its remaining coordinates. For two 
	vectors \(\mathbf{a} \in \mathbb{R}^{d_0}\) and \(\mathbf{b} \in \mathbb{R}^{d - d_0}\), we write \((\mathbf{a}, \mathbf{b})\) to denote the vector in \(\mathbb{R}^d\) whose first \(d_0\) components are \(\mathbf{a}\) and last \(d - d_0\) components are \(\mathbf{b}\).
	
	\subsection{Functions Construction.} For \(d \ge 3\log_{16/15}(2/\tau)\), we construct a function \(F: \R^d \to \R\) that depends only on the first \(d_0\) coordinates. Lemma~\ref{lem:dim} shows that if $f$ is in $\sH_{d_0}$, then $F$ is in $\sH_d$.
	
	\noindent Let \(\Delta \in (0, D/16)\) be a parameter to be specified later. Let $Q:\R \to \R$ be defined by $Q(r) = \left(\frac{\tau}{2}-1\right)r^2 + \left((1-\tau) - \frac{D}{4\Delta}\right)r+ \frac{\tau}{2}$. Observe that $Q$ is a concave parabola and $Q(0)>0$. Let $a$ be the positive root for $Q$. We prove in Lemma~\ref{lem:tech2} that
	\[
	a \in [1-\tau,1]~.
	\]
	
	Let \(m \ge 2\) denote the size of the function subclass we are constructing. We pick \(m\) elements \(\bz_1, \dots, \bz_m\) in \(B_{d_0}(\bzero_A, 2D/3)\), and define a set of functions \(f_{1}, \dots, f_{m}\) so that each \(f_{i}\) belongs to \(\sH_{d_0}\). For each \(i \in [m]\), let \(f_{i}\) be a function whose value at \(\bz_i\) is zero and whose gradient for all \(\bx \in \R^{d_0}\) is given by
	\[
	\nabla f_{i}(\bx) =
	\begin{cases}
		\displaystyle L(\bx - \bz_i), & \text{if } \|\bx - \bz_i\| < \Delta, \\[1em]
		\displaystyle L\Delta \frac{\bx - \bz_i}{\|\bx - \bz_i\|}, & \text{if } \Delta \leq \|\bx - \bz_i\| < \frac{D}{4}, \\[1em]
		\displaystyle -L\left(\bx-\bz_i - \frac{D}{4} \frac{\bx-\bz_i}{\norm{\bx-\bz_i}}\right)+L\Delta  \frac{\bx - \bz_i}{\|\bx - \bz_i\|} , & \text{if } \frac{D}{4} \leq \|\bx - \bz_i\| < \frac{D}{4}+a\Delta, \\[1em]
		\displaystyle  L(1-a)\Delta  \frac{\bx - \bz_i}{\|\bx - \bz_i\|}, & \text{if } \bx \notin B_{d_0}(\bz_i, D/4+a\Delta)~.
	\end{cases}
	\]
	
	\noindent Let $r := \norm{\bx-\bz_i}$ and define
	\[
	f_{i}(\bx)
	\;=\;
	\begin{cases}
		\dfrac{L}{2}\,r^2,
		& 
		\text{ if }0 \,\le\, r < \Delta,
		\\[1.1em]
		L\,\Delta\,r \;-\;\dfrac{L}{2}\,\Delta^2,
		& 
		\text{ if }\Delta \,\le\, r < \dfrac{D}{4},
		\\[1.1em]
		L \Bigl(\dfrac{D}{4}\,r \;+\;\Delta\,r \;-\;\tfrac{r^2}{2}\Bigr)
		\;-\;
		L\Bigl(\tfrac{\Delta^2}{2} +\tfrac{D^2}{32}\Bigr),
		& 
		\text{ if }\dfrac{D}{4} \,\le\, r < \dfrac{D}{4} + a\,\Delta,
		\\[1.1em]
		L\,(1-a)\,\Delta\,r \;+\; \frac{a}{4} LD\Delta - \frac{1-a^2}{2} L\Delta^2 ,
		& 
		\text{ if }r \,\ge\, \dfrac{D}{4} + a\,\Delta,
	\end{cases}
	\]
	\begin{lemma}\label{lem:tech2}
		Consider the parabola $Q(r) = \left(\frac{\tau}{2}-1\right)r^2+ \left((1-\tau)-\frac{D}{4\Delta}\right)r+(1-\tau)\frac{D}{4\Delta}+\frac{\tau}{2}$. Let $a$ be the positive root of $Q$. Then, we have
		\[
		1-\tau \le a \le 1~.
		\]
	\end{lemma}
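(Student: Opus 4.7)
\textbf{Proof plan for Lemma~\ref{lem:tech2}.} The idea is to exploit the geometric structure of $Q$ on the nonnegative real line. First I would observe that $Q$ is a concave parabola: its leading coefficient is $\tfrac{\tau}{2}-1 < 0$ because $\tau \in (0,1]$. Next, I would evaluate at zero to get $Q(0)=(1-\tau)\tfrac{D}{4\Delta}+\tfrac{\tau}{2}>0$. By concavity this forces $Q$ to have two real roots with one of them strictly negative and one strictly positive, and on the interval $[0,\infty)$ the sign of $Q$ is exactly $\ge 0$ on $[0,a]$ and $\le 0$ on $[a,\infty)$, where $a$ is the positive root. Hence the two desired inequalities $1-\tau \le a$ and $a \le 1$ are equivalent to $Q(1-\tau)\ge 0$ and $Q(1)\le 0$ respectively, which reduces the lemma to two direct evaluations.

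For the lower bound, the key step is a computation: expand
\[
Q(1-\tau) = \bigl(\tfrac{\tau}{2}-1\bigr)(1-\tau)^2 + \bigl((1-\tau)-\tfrac{D}{4\Delta}\bigr)(1-\tau) + (1-\tau)\tfrac{D}{4\Delta}+\tfrac{\tau}{2}.
\]
The terms in $\tfrac{D}{4\Delta}$ cancel exactly, and the remaining algebra collapses to $\tfrac{\tau}{2}\bigl((1-\tau)^2+1\bigr)$, which is manifestly nonnegative. Therefore $Q(1-\tau)\ge 0$, which by the sign analysis above yields $a \ge 1-\tau$.

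For the upper bound, I would similarly evaluate $Q(1)$. After simplification, the $\tau$-independent terms cancel and the $D/\Delta$-dependent terms combine into a single term $-\tau\tfrac{D}{4\Delta}$, giving $Q(1) = -\tau\tfrac{D}{4\Delta} \le 0$ since $\tau,D,\Delta>0$. This yields $a \le 1$, completing the proof.

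I do not anticipate a genuine obstacle here: the lemma is a one-variable sign analysis of a concave quadratic, and both boundary evaluations simplify cleanly because the $\tfrac{D}{4\Delta}$-terms cancel by the way the constant $(1-\tau)\tfrac{D}{4\Delta}$ is built into $Q$. The only thing that requires minimal care is verifying that the constant term of $Q$ matches the version appearing in the lemma rather than the (different) expression appearing earlier in the construction of $f_i$; the argument above uses the form stated in the lemma.
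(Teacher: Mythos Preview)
Your proposal is correct and follows essentially the same approach as the paper: both evaluate $Q(1-\tau)$ and $Q(1)$, obtaining $\tfrac{\tau}{2}\bigl((1-\tau)^2+1\bigr)\ge 0$ and $-\tau\tfrac{D}{4\Delta}\le 0$ respectively, and conclude via the sign structure of the concave quadratic. You make the concavity/sign argument slightly more explicit than the paper does, but the substance is identical.
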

	\begin{proof}
		We have:
		\begin{itemize}
			\item Proof of $1-\tau \le a$: 
			\begin{align*}
				Q(1-\tau) &= \left(\frac{\tau}{2}-1\right)\left(1-\tau\right)^2+ \left((1-\tau)-\frac{D}{4\Delta}\right)\left(1-\tau\right)+(1-\tau)\frac{D}{4\Delta}+\frac{\tau}{2}\\
				&= \frac{\tau}{2} (1-\tau)^2+ \frac{\tau}{2} \ge 0~.
			\end{align*}
			Therefore $a \ge 1-\tau$.	
			\item Proof of $a \le 1$:
			\begin{align*}
				Q(1) = \frac{\tau}{2}-1+1-\tau - \frac{D}{4\Delta}+(1-\tau)\frac{D}{4\Delta}+\frac{\tau}{2}
				= -\tau \frac{D}{4\Delta} \le 0~.
			\end{align*}
			Therefore $a \le 1$.
		\end{itemize}
	\end{proof}
	Lemma below shows that the constructed functions $f_{i}$ are in $\sH_{d_0}$.
	\begin{lemma}\label{lem:f3}
		For each $i \in [m]$, the function $f_{i}$ is $\tau$-WQC and $L$-smooth.
	\end{lemma}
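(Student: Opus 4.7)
The plan is to verify the two properties by decomposing $\R^{d_0}$ into the four regions in the definition of $\nabla f_i$ (namely $r \in [0,\Delta)$, $[\Delta, D/4)$, $[D/4, D/4 + a\Delta)$, and $[D/4 + a\Delta, \infty)$, where $r := \norm{\bx-\bz_i}$) and checking, piece by piece, both that $\nabla f_i$ is $L$-Lipschitz and that $\Phi(\bx) := \langle \nabla f_i(\bx), \bx - \bz_i\rangle - \tau f_i(\bx) \ge 0$. This follows the same skeleton as the proof of Lemma~\ref{lem:f}; only the intermediate ``plateau'' region $[\Delta, D/4)$ is genuinely new.

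For $L$-smoothness, on the innermost ball $r < \Delta$ the gradient equals $L(\bx-\bz_i)$, so it is trivially $L$-Lipschitz. On $[\Delta, D/4)$ and on $\{r \ge D/4 + a\Delta\}$, $\nabla f_i$ is a scalar multiple of the unit vector $(\bx-\bz_i)/\norm{\bx-\bz_i}$, i.e.\ a rescaled projection onto a sphere centered at $\bz_i$, so Lemma~\ref{lem:lip2} yields the $L$-Lipschitz estimate. In the transition annulus $[D/4, D/4 + a\Delta)$, I will rewrite $\nabla f_i$ as an affine function of $\bx$ plus a projection-like term onto $B_{d_0}(\bz_i, D/4)$; the $L$-Lipschitz bound again follows from Lemma~\ref{lem:lip2} together with $a \le 1$. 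Gluing these local bounds via the continuity of $\nabla f_i$ and Lemma~\ref{lem:lip} gives global $L$-smoothness.

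For $\tau$-WQC, I will compute $\Phi$ explicitly in each region. In region~1 it reduces to $Lr^2(1-\tau/2) \ge 0$. In region~2 a direct substitution gives $\Phi = L\Delta r(1-\tau) + \tau L\Delta^2/2 \ge 0$. In region~3, $\Phi$ is a concave quadratic in $r$, so it suffices to check its two endpoints: at $r = D/4$ a short expansion yields $\Phi = L[D\Delta(1-\tau)/4 + \tau\Delta^2/2] \ge 0$, and at $r = D/4 + a\Delta$ a careful expansion shows that $\Phi/(L\Delta)$ is exactly $Q(a)$, which vanishes by the definition of $a$. In region~4, $\Phi$ is affine in $r$ with slope $L(1-a)\Delta(1-\tau) \ge 0$, so its minimum is attained at $r = D/4 + a\Delta$, where continuity of $\Phi$ forces the value already obtained from region~3, namely $0$. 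Hence $\Phi \ge 0$ everywhere.

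The main obstacle is the boundary region $[D/4, D/4 + a\Delta)$: the interplay between the concave curvature (needed to preserve $L$-smoothness while reducing the gradient norm from $L\Delta$ to $L(1-a)\Delta$) and the $\tau$-WQC inequality forces the algebraic identity $Q(a) = 0$, which is precisely how $a$ was chosen in Lemma~\ref{lem:tech2}. Once this identity is verified by a short symbolic computation, the rest of the WQC argument reduces to endpoint checks and the monotonicity observation in region~4, and the $L$-smoothness verification reduces to standard projection/Lipschitz arguments analogous to those in Lemma~\ref{lem:f}.
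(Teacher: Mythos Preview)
Your proposal is correct and follows essentially the same approach as the paper: both verify $L$-smoothness region by region via projection/Lemma~\ref{lem:lip2} arguments glued with Lemma~\ref{lem:lip}, and both verify $\tau$-WQC by computing $\Phi$ piecewise, reducing region~3 to a concave quadratic checked at its two endpoints (where the identity $Q(a)=0$ is the key input) and region~4 to monotonicity plus continuity at the left endpoint. Two minor remarks: in regions~2 and~4 the paper invokes the non-expansiveness of the projection onto a ball rather than Lemma~\ref{lem:lip2} (which is tailored to the annular form appearing in region~3), and at $r = D/4 + a\Delta$ the value of $\Phi$ is $L\Delta^2\, Q(a)$ rather than $L\Delta\, Q(a)$; neither affects the argument.
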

	\begin{proof}
		Let $i\in [m]$, in the following, we will use that $\tau \in (0,1]$ and $\Delta \in (0, D/16)$, and from the definition of $a$ we have
		\begin{equation}\label{eq:condc}
			\left(\frac{\tau}{2}-1\right)a^2 -\frac{D}{4\Delta}a+(1-\tau)\frac{D}{4\Delta}+(1-\tau)a+\frac{\tau}{2}=0 \qquad \text{ and } \quad a \in [1-\tau,1]~.
		\end{equation}
		
		\noindent \textbf{Verifying $L$-smoothness:}
		First, we will show that $\nabla f_{i}$ is $L$-Lipschitz on each region in its expression, then we will conclude using Lemma~\ref{lem:lip}. We have
		\begin{itemize}
			\item Case $\bx \in B_{d_0}(\bz_i, \Delta)$: The expression of $\nabla f_{i}$ shows that it is $L$-Lipschitz
			\[
			\norm{L(\bx-\bz_i)-L(\by-\bz_i)}
			= L\norm{\bx-\by} ~.
			\]
			\item Case $\bx \in B_{d_0}(\bz_i, D/4)\setminus B_{d_0}(\bz_i, \Delta)$: recall that for $\bx \notin B_{d_0}(\bz_i, \Delta)$ we have that $\bx \to \Delta \frac{\bx-\bz_i}{\norm{\bx-\bz_i}}$ is the projection of $\bx$ onto $B_{d_0}(\bz_i, \Delta)$. Therefore, $\nabla f_{i}$ is $L$-Lipschitz.
			\item Case $\bx \in B_{d_0}(\bz_i, D/4+a\Delta)\setminus B_{d_0}(\bz_i, D/4)$: we have $2(\frac{D}{4}+\delta)\geq \frac{D}{2}$, therefore using Lemma 4.2 shows $\nabla f_{i}$ is $L$-Lipschitz. 
			\item Case $\bx \notin B_{d_0}(\bz_i, D/4+a\Delta)$:  since for $\bx \notin B_{d_0}(\bz_i, 
			\frac{D}{4}+a\Delta)$ we have that the mapping $\bx \to \Delta(1-a) \frac{\bx-\bz_i}{\norm{\bx-\bz_i}}$ is the projection of $\bx$ onto $B_{d_0}(\bz_i, (1-a)\Delta)$. Therefore, $\nabla f_{i}$ is $L$-Lipschitz.
		\end{itemize}
		The conclusion follows from the fact that $\nabla f_{i}$ is continuous and Lemma~\ref{lem:lip}.
		
		\textbf{Verifying $\tau$-WQC:}
		Observe that the minimizer of $f_{i}$ is $\bz_i$ and the minimum is $0$. We will show that $f_{i}$ satisfies $\tau$-WQC on each of the four regions in the definition of $\nabla f_{i}$. Define $r:= \norm{\bx-\bz_i}$.
		We have
		\begin{align*}
			\langle &\nabla f_{i}(\bx), \bx - \bz_i \rangle - \tau f_{i}(\bx) \\
			&=\begin{cases}
				\displaystyle
				L\,r^2\bigl(1 - \tfrac{\tau}{2}\bigr),
				& 
				\text{ if } 0 \,\le\, r < \Delta,
				\\[1.25em]
				\displaystyle
				L\,\Delta\,r\,(1-\tau)+\tfrac{\tau L}{2}\,\Delta^2,
				&
				\text{ if } \Delta \,\le\, r < \tfrac{D}{4}, 
				\\[1.25em]
				\displaystyle
				L\left(\frac{\tau}{2}-1\right) r^2+ L(1-\tau) \left(\Delta+\frac{D}{4}\right)r + L\tau \left(\frac{D^2}{32}+\frac{\Delta^2}{2}\right),
				&
				\text{ if } \tfrac{D}{4} \,\le\, r < \tfrac{D}{4}+a\,\Delta,
				\\[1.25em]
				\displaystyle
				L\,(1-a)\,(1-\tau)\,\Delta\,r + L\,\tau\,\Bigl[\tfrac{1-a^2}{2}\,\Delta^2 - \tfrac{a}{4}\,D\,\Delta
				\Bigr],
				&
				\text{ if } r \,\ge\, \tfrac{D}{4} + a\,\Delta~.
			\end{cases}
		\end{align*}
		Since $\tau \le 1$, we have that $\langle \nabla f_{i}(\bx), \bx-\bz_i \rangle -\tau f_{i}(\bx) \ge 0$ in the first region. For the second region, we use the fact that $\langle \nabla f_{i}(\bx), \bx - \bz_i \rangle - \tau f_{i}(\bx)$ is non-decreasing with respect to $r$ and that by continuity of the last expression, it is non-negative for $r=\Delta$. For the third region, observe that the function $R:r \to L\left(\frac{\tau}{2}-1\right) r^2+ L(1-\tau) \left(\Delta+\frac{D}{4}\right)r + L\tau \left(\frac{D^2}{32}+\frac{\Delta^2}{2}\right)$ is concave, therefore for $r \in [D/4, D/4+a\Delta]$, we have
		\[
		R(r) \ge \min\{Q(D/4), Q(D/4+a\Delta)\}~.
		\]
		For the first term, we have
		\begin{align*}
			R(D/4) &= L\left(\frac{\tau}{2}-1\right) \frac{D^2}{16}+ L(1-\tau) \left(\Delta+\frac{D}{4}\right) \frac{D}{4} + L\tau \left(\frac{D^2}{32}+\frac{\Delta^2}{2}\right)\\
			&= L(1-\tau) \frac{D\Delta}{4}+ L\tau \frac{\Delta^2}{2}\\
			&\ge 0~.
		\end{align*}
		For the second one, we have
		\begin{align*}
			R(D/4+a\Delta) &= L\left(\frac{\tau}{2}-1\right) \left(\frac{D}{4}+a\Delta\right)^2+ L(1-\tau) \left(\Delta+\frac{D}{4}\right)\left(\frac{D}{4}+a\Delta\right) + L\tau \left(\frac{D^2}{32}+\frac{\Delta^2}{2}\right)\\
			&= L\left(\frac{\tau}{2}-1\right)\Delta^2 a^2+ L \left((1-\tau) \Delta^2- \frac{1}{4}D\Delta\right)a + L(1-\tau)\frac{D\Delta}{4} + L\tau \frac{\Delta^2}{2}\\
			&= L\Delta^2 \left[\left(\frac{\tau}{2}-1\right)a^2 + \left((1-\tau)-\frac{D}{4\Delta}\right)a+(1-\tau)\frac{D}{4\Delta}+ \frac{\tau}{2} \right]\\
			&=0,
		\end{align*}
		where we used in the last line the fact that $a$ satisfies \eqref{eq:condc}. We therefore conclude that $R(r) \ge 0$ for any $r \in [D/4, D/4+a\Delta]$. For the last region, we use the fact that $\langle\nabla f_{i}(\bx), \bx - \bz_i \rangle - \tau f_{i}(\bx)$ is continuous in $\mathbb{R}^{d_0}$ and increasing in $r$ for $r \ge D/4+a\Delta$  (since we have $a \le 1$).
	\end{proof}
	We conclude using the lemma above that each \(f_i\) lies in \(\sH_{d_0}\). Consequently, by Lemma~\ref{lem:0}, the functions \(F_i: \mathbb{R}^d \to \mathbb{R}\) defined by \(F_i(\bx) = f_i(\bx_A)\) for \(i \in [m]\) belong to \(\sH_d\). Observe that each \(F_i\) achieves its minimum value of $0$, and its set of global minimizers is \(\{(\bz_i, \by): \by \in \R^{d - d_0}\}\), which is convex.
	\subsection{Oracle Construction and Information-Theoretic Tools.}
	
	The oracle we use is the same introduced in the proof of Theorem~\ref{thm:1}.
	
	\noindent \textbf{Information Theoretic Tools.} As for the previous sections, we reduce the optimization problem to one of function identification. We consider the zero function \(G: \R^d \to \R\) such that for all $\bx \in \R^d$, $G(\bx))=0$ as a ``reference distribution". We introduce the following technical lemma, which will be instrumental in computing the relative entropy between feedback distributions.
	\begin{lemma}\label{lem:gd_bounds3}
		Let $i \in [m]$. Then, for any $\bx\in \R^d$, we have
		\[
		\norm{\nabla F_{i}(\bx) -\nabla G(\bx)} \le
		\begin{cases}
			\displaystyle L \Delta, & \text{if } \bx_A \in B_{d_0}(\bz_i, 5D/16), \\[1em]
			\displaystyle L \tau \Delta, & \text{if } \bx_A \notin B_{d_0}(\bz_i, 5D/16)~.
		\end{cases}
		\]
	\end{lemma}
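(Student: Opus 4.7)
The plan is to observe first that the reference function $G$ is identically zero, so $\nabla G(\bx)=\bzero$ and it suffices to bound $\norm{\nabla F_i(\bx)}=\norm{\nabla f_i(\bx_A)}$. Thus the lemma reduces to a case analysis of the gradient $\nabla f_i$, following the four regions in its piecewise definition, parameterized by $r := \norm{\bx_A - \bz_i}$.

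For the first claim (bound $L\Delta$ when $\bx_A \in B_{d_0}(\bz_i, 5D/16)$), I would go region by region. In region one ($r<\Delta$), $\norm{\nabla f_i}=Lr < L\Delta$. In region two ($\Delta \le r < D/4$), the gradient is $L\Delta\,(\bx_A-\bz_i)/r$, of norm exactly $L\Delta$. In region three ($D/4 \le r < D/4+a\Delta$), a short simplification gives $\nabla f_i(\bx_A) = L\bigl(\tfrac{D}{4}+\Delta - r\bigr)\tfrac{\bx_A-\bz_i}{r}$, whose norm is $L|\tfrac{D}{4}+\Delta-r| \le L\Delta$ for $r$ in this interval. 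In region four, the norm is $L(1-a)\Delta \le L\Delta$. So $\norm{\nabla f_i(\bx_A)} \le L\Delta$ everywhere in $\mathbb{R}^{d_0}$, which is in fact stronger than needed for the first case.

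For the second claim ($L\tau\Delta$ when $\bx_A \notin B_{d_0}(\bz_i, 5D/16)$), the key geometric fact is that $B_{d_0}(\bz_i,5D/16)$ contains $B_{d_0}(\bz_i, D/4+a\Delta)$ under our hypotheses. Indeed, since $\Delta \in (0, D/16)$ and $a \le 1$ (by Lemma~\ref{lem:tech2}), we have $D/4 + a\Delta \le D/4 + \Delta < D/4 + D/16 = 5D/16$. Therefore any $\bx_A \notin B_{d_0}(\bz_i, 5D/16)$ lies strictly outside $B_{d_0}(\bz_i, D/4+a\Delta)$, placing us in region four of the definition of $\nabla f_i$, where $\norm{\nabla f_i(\bx_A)} = L(1-a)\Delta$. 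Applying the other half of Lemma~\ref{lem:tech2}, namely $a \ge 1-\tau$, yields $L(1-a)\Delta \le L\tau\Delta$, as desired.

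There is no real obstacle here; the only subtlety is the bookkeeping that makes sure $5D/16$ is large enough to guarantee we are in the outermost region, which is precisely what the constraint $\Delta < D/16$ delivers, together with the bounds on $a$ from Lemma~\ref{lem:tech2}. The choice of the radius $5D/16$ in the statement is tailored exactly for this: it is the smallest convenient radius strictly greater than $D/4 + \Delta$ so that the $L\tau\Delta$ estimate in the far region is actually achieved.
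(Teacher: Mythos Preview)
Your proof is correct and follows essentially the same approach as the paper: reduce to bounding $\norm{\nabla f_i(\bx_A)}$ since $G\equiv 0$, then use the containment $B_{d_0}(\bz_i, D/4+a\Delta)\subset B_{d_0}(\bz_i,5D/16)$ (from $a\le 1$ and $\Delta<D/16$) together with $1-a\le\tau$ from Lemma~\ref{lem:tech2}. Your region-by-region verification of the global bound $\norm{\nabla f_i}\le L\Delta$ is in fact slightly more explicit than the paper's treatment.
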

	\begin{proof}
		Let $i \in [m]$. This result is a consequence of the expression of $\nabla f_{i}$. We have for any $\bx \in \R^d$
		\begin{itemize}
			\item If $\bx_A \in B_{d_0}(\bz_i, D/4+ a\Delta)$, given that $a\le 1$ and $\Delta \le D/16$, we have $B_{d_0}(\bz_i, D/4+a\Delta) \subset B_{d_0}(\bz_i, 5D/16)$. Therefore, 
			\begin{align*}
				\norm{\nabla F_i(\bx) - \nabla G(\bx)} &= \norm{\nabla f_{i}(\bx_A)}\\
				&= \norm{\nabla f_{i}(\bx_A)}\\
				&\le L \Delta~. 
			\end{align*}
			\item If $\bx_A \in \R^{d_0}\setminus B_{d_0}(\bz_i, 5D/16)$, then
			\begin{align*}
				\norm{\nabla F_{i}(\bx)-\nabla G(\bx)} &= \norm{\nabla f_{i}(\bx_A)}
				= L(1-a) \Delta \le  L \tau \Delta~.
			\end{align*} 
		\end{itemize}
	\end{proof}
	We follow similar steps as in the proof of Theorem~\ref{thm:2}. Let \(\hat{\bx} \in \R^d\) denote the output of the optimization algorithm. For each \(i \in [m]\), let \(\fP_i\) and \(\Q\) denote the probability distributions of the \(T\) oracle feedbacks\footnote{For a rigorous definition of these quantities, see Section~\ref{sec:can_model} where we define the canonical model.} when the objective function is \(F_i\) and \(G\), respectively (we omit the dependence on \(T\) in our notation). Also, let \(\E_i[\cdot]\) and \(\E[\cdot]\) denote the expectations with respect to \(\fP_i\) and \(\Q\), respectively. For each \(i \in [m]\), we define the “good identification event” as \(\sE_i := \{\hat{\bx}_A \in B_{d_0}(\bz_i, 5D/16)\}\). For any $i \in [m]$, we have
	\begin{align}
		\E_i[F_i(\hat{\bx})] &= \E_i[f_i(\hat{\bx}_A)]\nonumber\\
		&\ge \fP_i(\sE_i) \inf_{\bx \in B_{d_0}(\bz_i, 5D/16)} \ \{f_{i}(\bx) \}+(1-\fP_i(\sE_i))\inf_{\bx \notin B_{d_0}(\bz_i, 5D/16)} \ \{ f_{i}(\bx)\}\nonumber\\ 
		&\ge (1-\fP_i(\sE_i))\inf_{\bx \notin B_{d_0} \ (\bz_i, 5D/16)} \{ f_{i}(\bx)\}\nonumber\\
		&\ge (1-\fP_i(\sE_i)) \left(\frac{5-a}{16}LD\Delta-\frac{1-a^2}{2}L\Delta^2\right) \nonumber\\
		&\ge (1-\fP_i(\sE_i))\cdot \frac{7}{32} LD\Delta~.\label{eq:opt_qc}
	\end{align}
	where in the last inequality we used the fact that $a\in [0,1]$ and $\Delta \le \frac{D}{16}$.
	Next, we specify the choice of \(m\) and \(\bz_i\). We select \(m\) to ensure that the balls \(B_{d_0}(\bz_i, 5D/16)\) for \(i \in [m]\) remain disjoint. Using Lemma~\ref{lem:cover}, we choose $m = \ceil{\frac{1}{2} \left( \frac{16}{15} \right)^{d_0}}$, and \(\bz_i\) as a sequence of elements in $B_{d_0}(\bzero, 2D/3)$ such that \(B_{d_0}(\bz_i, 5D/16) \cap B_{d_0}(\bz_j, 5D/16) = \emptyset\) for \(i \neq j\).
	
	\noindent We use Pinsker's inequality to derive an upper bound on \(\fP_i(\sE_i)\). This gives  
	\[
	\frac{1}{m} \sum_{i=1}^{m} \fP_i(\sE_i) \le \frac{1}{m} \sum_{i=1}^{m} \Q(\sE_i) + \sqrt{\frac{1}{2m} \sum_{i=1}^{m} \KL\left(\Q, \fP_i\right)}~.
	\]
	Since the events \(\sE_i\) for \(i \in [m]\) are disjoint, it follows that  $\sum_{i=1}^{m} \Q(\sE_i) \le 1$, therefore the bound above gives
	\begin{equation}\label{eq:pins_qc}
		\frac{1}{m} \sum_{i=1}^{m} \bigl(1 - \fP_i(\sE_i)\bigr) \ge 1 - \frac{1}{m} - \sqrt{\frac{1}{m} \sum_{i=1}^{m} \KL\left(\Q, \fP_i\right)}~.
	\end{equation}
	
	\noindent Next, we develop an upper bound on the average \(\frac{1}{m} \sum_{i=1}^{m} \KL\left(\Q, \fP_i\right)\). Let \(N_i\) denote the number of times the algorithm queries a point \(\bx\) such that \(\bx_A \in B_{d_0}(\bz_i, 5D/16)\). Using Lemma~\ref{lem:chain_rule} in the Appendix, we obtain  
	\begin{align*}
		\text{KL}\left(\Q, \fP_i\right)  &\le \frac{d_0}{2\sigma^2}\mathbb{E}[N_{i}] \sup_{\bx_A \in B_{d_0}(\bz_i, 5D/16)} \|\nabla G(\bx)- \nabla F_i(\bx)\|^2\\
		&\qquad + \frac{d_0}{2\sigma^2} \mathbb{E}[T-N_{i}] \sup_{\bx_A \notin B_{d_0}(\bz_i, 5D/16)} \|\nabla G(\bx)- \nabla F_i(\bx)\|^2~.	
	\end{align*}
	Using the bounds in Lemma~\ref{lem:gd_bounds3} , we obtain  
	\[
	\KL\left(\Q, \fP_i\right) \le \frac{ d_0 L^2 \Delta^2}{2\sigma^2} \E[N_{i}] + \frac{d_0 L^2\tau^2 \Delta^2}{2\sigma^2}T~.
	\]  
	Averaging over \(m\) and using the fact that \(\sum_{i=1}^{m} N_i \le T\), which holds due to our choice of the sequence \((\bz_i)\), we derive the bound  
	\begin{equation}\label{eq:kl_avr_qc}
		\frac{1}{m} \sum_{i=1}^{m} \KL\left(\Q, \fP_i\right) \le \frac{ d_0 L^2 \Delta^2}{2\sigma^2} \left(\frac{1}{m}+\tau^2\right)T~.
	\end{equation}    
	
	\noindent Combining \eqref{eq:pins_qc} with \eqref{eq:kl_avr_qc} yields a lower bound on the average misidentification error \(\frac{1}{m} \sum_{i=1}^{m} (1-\fP_i(\sE_i))\). This, in turn, leads to the following lower bound when applying \eqref{eq:opt_qc}  
	\[
	\frac{1}{m} \sum_{i=1}^{m} \E_i[F_i(\hat{\bx})] \ge \frac{L}{2}\Delta^2 \left(1-\frac{1}{m} - \sqrt{\frac{d_0 \mu^2 \Delta^2}{2\sigma^2} \left(\frac{\kappa^2}{m}+25\right)T } \right).
	\]  
	To conclude, we use the fact that $m = \ceil{\frac{1}{2}(16/15)^{d_0}}$, and $d_0 = \ceil{\log_{16/15}(4/\tau^2)}$, which gives $m \ge \frac{2}{\tau^2}$.  
	\begin{align*}
		\frac{1}{m}\sum_{i=1}^{m}\mathbb{E}_i[F_{i}(\hat{\bx})] 
		&\ge \frac{7}{32}LD\Delta \left(1- \frac{\tau^2}{2}-\sqrt{3 \frac{\log_{16/15}(5/\tau^2)}{4\sigma^2} L^2\tau^2\Delta^2T }\right)\\
		&\ge \frac{7}{32}LD\Delta \left(\frac{1}{2}-\sqrt{3\frac{\log_{16/15}(5/\tau^2)}{4\sigma^2} L^2\tau^2\Delta^2T }\right)~. 
	\end{align*}
	We choose $\Delta$ that maximizes the expression above
	\[
	\Delta^*= \frac{2\sigma}{4\sqrt{3}L\tau \sqrt{8\log_{16/15}(5/\tau^2) T}}~.
	\]
	Since we have $T \ge \frac{35 \sigma^2}{L^2D^2\log_{16/15}(2/\tau)}$, we necessarily have $\Delta^* \le D/16$. We conclude that 
	\[
	\frac{1}{m}\sum_{i=1}^{m}\mathbb{E}_i[f_{\bz_i}(\hat{\bx})] \ge c\cdot \frac{D \,\sigma}{\tau \sqrt{\log(2/\tau)T}}~,
	\] 
	where $c$ is a numerical constant.

	\section{Technical Results}\label{sec:tech}
	\begin{lemma}\label{lem:dim}
		Let $f: \R^{d_0} \to \R$ be a differentiable function having a unique global minimizer. For any \(\bx \in \mathbb{R}^{d_0}\), let \(\bx_A \in \mathbb{R}^{d}\) be the vector of its first \(d\) coordinates and $d \le d_0$, define $F:\R^d \to \R$ by $F(\bx) = f(\bx_A)$. We have:
		\begin{itemize}
			\item If $f$ is $\tau-\QC$ then $F$ is $\tau-\QC$.
			\item If $f$ is $\mu-\QG$ then $F$ is $\mu-\QG$.
			\item If $f$ is $\mu-\RSI$ then $F$ is $\mu-\RSI$.
			\item If $f$ is $L$-smooth then $F$ is $L$-smooth.
		\end{itemize} 
	\end{lemma}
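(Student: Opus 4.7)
\textbf{Proof plan for Lemma~\ref{lem:dim}.} The strategy is to reduce every property of $F$ to the corresponding property of $f$ by exploiting the product structure of the embedding. The first step is to note that since $F(\bx) = f(\bx_A)$ does not depend on the last $d - d_0$ coordinates, we have $\nabla F(\bx) = (\nabla f(\bx_A),\, \bzero_{\bar A})$, and in particular $\|\nabla F(\bx)\| = \|\nabla f(\bx_A)\|$.

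The second step is to pin down the set of global minimizers of $F$ and the associated projection, which is the only place uniqueness of the minimizer of $f$ is used. Let $\bz^\star \in \R^{d_0}$ denote the unique minimizer of $f$. Then $F^\star = f^\star$, and the set of global minimizers of $F$ is exactly the affine subspace $\sS := \{(\bz^\star, \by) : \by \in \R^{d-d_0}\}$. For any $\bx \in \R^d$, the Euclidean projection of $\bx$ onto $\sS$ is $\bx_p = (\bz^\star, \bx_{\bar A})$, so that
\begin{equation*}
\bx - \bx_p = (\bx_A - \bz^\star,\, \bzero_{\bar A}),
\qquad
\|\bx - \bx_p\| = \|\bx_A - \bz^\star\|.
\end{equation*}

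The third step is to translate each property in turn using these two identities. For $\tau$-QC and $\mu$-RSI the key computation is
\begin{equation*}
\langle \nabla F(\bx),\, \bx - \bx_p \rangle
= \langle (\nabla f(\bx_A), \bzero_{\bar A}),\, (\bx_A - \bz^\star, \bzero_{\bar A}) \rangle
= \langle \nabla f(\bx_A),\, \bx_A - \bz^\star \rangle,
\end{equation*}
so the QC inequality applied to $f$ at $\bx_A$ immediately yields the QC inequality for $F$ at $\bx$ (using $F(\bx) - F^\star = f(\bx_A) - f^\star$), and similarly for RSI combined with $\|\bx - \bx_p\|^2 = \|\bx_A - \bz^\star\|^2$. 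For $\mu$-QG, apply the QG inequality to $f$ at $\bx_A$ and use the same norm identity. For $L$-smoothness, use $\|\nabla F(\bx) - \nabla F(\by)\| = \|\nabla f(\bx_A) - \nabla f(\by_A)\| \le L \|\bx_A - \by_A\| \le L \|\bx - \by\|$.

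I do not anticipate any real obstacle; the only delicate point is the projection identification, which could fail if $f$ had multiple minimizers (then $\sS$ would be $\sX^\star_f \times \R^{d-d_0}$ with $\sX^\star_f$ possibly non-trivial, and the projection $\bx_p$ would involve projecting $\bx_A$ onto $\sX^\star_f$). The uniqueness hypothesis sidesteps this entirely, making the remaining verification a direct substitution in each of the four definitions.
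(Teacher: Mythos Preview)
Your proposal is correct and follows essentially the same approach as the paper: identify the minimizer set of $F$ as $\{(\bz^\star,\by):\by\in\R^{d-d_0}\}$, compute the projection $\bx_p=(\bz^\star,\bx_{\bar A})$, and reduce each of the four inequalities for $F$ at $\bx$ to the corresponding inequality for $f$ at $\bx_A$ via the identities $\langle\nabla F(\bx),\bx-\bx_p\rangle=\langle\nabla f(\bx_A),\bx_A-\bz^\star\rangle$ and $\|\bx-\bx_p\|=\|\bx_A-\bz^\star\|$. Your remark on where uniqueness of the minimizer is actually used is a nice addition not made explicit in the paper.
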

	\begin{proof}
		Let $\bz \in \R^{d_0}$ be the global minimizer of $f$, given the expression of $F$, we have that the set of global minimizers of $F$ is given by $\sX^* = \{(\bz, \by) \mathrm{s.t.} \by \in \R^{d-d_0}\}$. Observe that $\sX^*$ is convex, and the minimum of $F$ denoted $F^*$ is the same as the minimum of $f$ denoted $f^*$. It is straightforward that if $f$ is $L$-smooth, then so is $F$. Let us verify $\mu-\QG$ property, for any $\bx \in \R^d$, the projection of $\bx$ onto $\sX^*$, denoted $\bx_p$ is given by $\bx_p = (\bz, \bx_{\bar{A}})$.
		
		\noindent \emph{Verifying $\tau-\QC$} we have for any $\bx \in \R^d$
		\begin{align*}
			F(\bx)-F^* &= f(\bx_A)-f^*\\
			&\ge \frac{\mu}{2} \norm{\bx_A - \bz}^2\\
			&= \frac{\mu}{2} \norm{\bx - (\bz, \bx_{\bar{A}})}^2\\
			&= \frac{\mu}{2} \norm{\bx - \bx_p}^2~,
		\end{align*} 
		where we used the $\mu-\QG$ property of $f$ in the second line. This proves that $F$ is also $\mu-\QG$.
		
		\noindent \emph{Verifying $\mu-\QG$} we have for any $\bx \in \R^d$
		\begin{align*}
			F(\bx)-F^* &= f(\bx_{A})-f^*\\
			&\le \frac{1}{\tau} \langle \nabla f(\bx_{A}), \bx_{A}-\bz \rangle\\
			&= \frac{1}{\tau} \langle \left(\nabla f(\bx_{A}), \bzero_{\bar{A}}\right), (\bx_{A}, \bx_{\bar{A}})-(\bz, \bx_{\bar{A}}) \rangle\\
			&= \frac{1}{\tau} \langle \nabla F(\bx), \bx -\bx_p \rangle~.
		\end{align*}
		\noindent \emph{Verifying $\mu-\RSI$} we have for any $\bx \in \R^d$
		\begin{align*}
			\langle \nabla F(\bx), \bx-\bx_p \rangle  &= \langle \nabla f(\bx_A), \bx_A-\bz \rangle\\
			&\ge \mu \norm{\bx_A - \bz}^2\\
			&= \mu \norm{\bx - \bx_p}^2~. 
		\end{align*}
		
		\noindent \emph{Verifying $L$-smoothness} we have for any $\bx, \by \in \R^d$
		\[
		\norm{\nabla F(\bx)-\nabla F(\by)} = \norm{\nabla f(\bx_A) - \nabla f(\by_A)}
		\le L \norm{\bx_A - \by_A}
		\le L \norm{\bx-\by}~.
		\]

	\end{proof}
	\begin{lemma}\label{lem:lip}
		Let $R>0$ and $f: B_d(\boldsymbol{0}, R) \to \R^d$ such that for some $\bz \in B_d(\boldsymbol{0},R)$ and $r < R$, we have $B_d(\bz,r) \subseteq B_d(\boldsymbol{0},R)$, $f$ is $L$-Lipschitz on $B_d(\bz,r)$ and on $B_d(\boldsymbol{0},R)\setminus B_d(\bz,r)$, and $f$ is continuous on $B_d(\boldsymbol{0},R)$. Then, we have $f$ is $L$-Lipschitz on $B_d(\boldsymbol{0},R)$. 
	\end{lemma}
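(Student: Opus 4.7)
The plan is to reduce any pair $\bx, \by \in B_d(\bzero, R)$ to a configuration where both points lie in a single region where the Lipschitz hypothesis is directly available. If $\bx, \by$ are both in $B_d(\bz, r)$, or both in $B_d(\bzero, R) \setminus B_d(\bz, r)$, the bound $\|f(\bx) - f(\by)\| \le L\|\bx - \by\|$ follows immediately from the assumption. So the only substantive case is $\bx \in B_d(\bz, r)$ and $\by \in B_d(\bzero, R) \setminus B_d(\bz, r)$ (up to swapping the roles).

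In this case I would parametrize the segment $\gamma(t) := \bx + t(\by - \bx)$ for $t \in [0,1]$ and apply the intermediate value theorem to the continuous scalar map $t \mapsto \|\gamma(t) - \bz\|$, which is strictly less than $r$ at $t=0$ and at least $r$ at $t=1$. Let $t^\star := \inf\{t \in [0,1] : \|\gamma(t) - \bz\| \ge r\}$ and $\bw := \gamma(t^\star)$, so that $\|\bw - \bz\| = r$. Convexity of the ambient ball ensures $\bw \in B_d(\bzero, R)$, and the sphere $\{\bu : \|\bu - \bz\| = r\}$ is contained in $B_d(\bzero, R) \setminus B_d(\bz, r)$, so the pair $(\bw, \by)$ sits in the outer region and the hypothesis gives $\|f(\bw) - f(\by)\| \le L\|\bw - \by\|$ directly.

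For the pair $(\bx, \bw)$, the delicate point is that $\bw$ lies on the boundary of the open ball $B_d(\bz, r)$ rather than in its interior, so I would extend the Lipschitz bound to $\bw$ by a continuity/limit argument: pick any sequence $t_n \uparrow t^\star$ with $\gamma(t_n) \in B_d(\bz, r)$, apply the assumed Lipschitz bound to $(\bx, \gamma(t_n))$, and use continuity of $f$ at $\bw$ to pass to the limit, obtaining $\|f(\bx) - f(\bw)\| \le L\|\bx - \bw\|$. Combining the two bounds via the triangle inequality and using $\|\bx - \bw\| + \|\bw - \by\| = \|\bx - \by\|$ (since $\bw$ lies on the segment $[\bx,\by]$) yields $\|f(\bx) - f(\by)\| \le L\|\bx - \by\|$, as required.

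The main obstacle is precisely this boundary extension step: the two Lipschitz hypotheses leave the sphere $\{\bu : \|\bu - \bz\| = r\}$ covered only by the ``outside'' region, so the inside estimate must be transported to this interface by continuity. Everything else---the intermediate value argument, the choice of the crossing point, and the triangle inequality---is routine.
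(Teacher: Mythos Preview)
Your proposal is correct and follows essentially the same route as the paper: reduce to the mixed case, find the crossing point $\bw$ on the segment $[\bx,\by]$ with $\|\bw-\bz\|=r$, apply the Lipschitz bound on each sub-segment, and add. You are in fact more careful than the paper about the boundary issue---the paper simply invokes ``the $L$-Lipschitz and continuity property of $f$ in both domains'' at the crossing point, whereas you spell out the limiting argument explicitly.
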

	\begin{proof}
		Let $\bx, \by \in B_d(\boldsymbol{0},R)$. If $\bx, \by$ are both in $B_d(\bz,\br)$ or in $B_d(\boldsymbol{0},R)\setminus B_d(\bz,r)$ the result is straightforward. Suppose that $\bx \in B_d(\bz,r)$ and $\by \in B_d(\boldsymbol{0},R)\setminus B_d(\bz,r)$, let $\bx'$ be the point on the segment $[\bx, \by]$ such that $\norm{\bx'-\bz} = r$, and let $\lambda \in [0,1]$ such that $\bx' = \lambda \bx + (1-\lambda) \by$. We have
		\begin{align*}
			\norm{f(\bx)-f(\by)} 
			&\le \norm{f(\bx)-f(\bx')}+\norm{f(\bx')-f(\by)}\\
			&\le L \norm{\bx - \bx'}+L\norm{\bx'-\by}\\
			&= L \norm{\bx - \by},
		\end{align*}
		where in the second inequality we use the $L$-Lipschitz and continuity property of $f$ in both domains $B_d(\bz,r)$ and $B_d(\boldsymbol{0},R)\setminus B_d(\bz,r)$, and in the equality the fact that $\bx' \in [\bx, \by]$.
	\end{proof}
	
	\begin{lemma}\label{lem:lip2}
		The function $f:B_d(\boldsymbol{0},2)\setminus B_d(\boldsymbol{0},1) \to \R^d$ defined as $f(\bx) = -\bx+2\frac{\bx}{\norm{\bx}}$, is $1$-Lipschitz.
	\end{lemma}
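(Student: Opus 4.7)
\textbf{Proof plan for Lemma~\ref{lem:lip2}.} The plan is to prove the Lipschitz bound by a direct algebraic computation of $\|f(\bx)-f(\by)\|^2 - \|\bx-\by\|^2$, without appealing to an integral/Jacobian argument (which would be slightly delicate since the annulus is not convex).

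First I would write $f(\bx)-f(\by) = -(\bx-\by) + 2\bigl(\bx/\|\bx\| - \by/\|\by\|\bigr)$ and expand
\[
\|f(\bx)-f(\by)\|^2 = \|\bx-\by\|^2 + 4\bigl\|\bx/\|\bx\| - \by/\|\by\|\bigr\|^2 - 4\bigl\langle \bx-\by,\ \bx/\|\bx\| - \by/\|\by\|\bigr\rangle.
\]
So it suffices to prove that the last two terms sum to a nonpositive quantity, i.e.
\[
\bigl\|\bx/\|\bx\| - \by/\|\by\|\bigr\|^2 \;\le\; \bigl\langle \bx-\by,\ \bx/\|\bx\| - \by/\|\by\|\bigr\rangle.
\]

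The key step is to rewrite both sides using $r:=\|\bx\|$, $s:=\|\by\|$, and $c:=\langle\bx,\by\rangle/(rs)$. A short calculation gives
\[
\bigl\|\bx/r - \by/s\bigr\|^2 = 2(1-c),\qquad
\bigl\langle \bx-\by,\ \bx/r-\by/s\bigr\rangle = (r+s)(1-c),
\]
so the inequality reduces to $2(1-c) \le (r+s)(1-c)$, equivalently $(r+s-2)(1-c)\ge 0$. Both factors are nonnegative: $1-c\ge 0$ by Cauchy--Schwarz, and $r+s\ge 2$ since both $\bx$ and $\by$ lie outside $B_d(\bzero,1)$. Combining the pieces yields
\[
\|f(\bx)-f(\by)\|^2 - \|\bx-\by\|^2 = 4(1-c)(2-r-s) \;\le\; 0,
\]
which gives the desired bound.

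I do not anticipate a real obstacle here: the only nontrivial observation is that the correct way to see cancellation is via the quantities $r+s$ and $c=\cos\theta$, after which the inequality collapses to the trivial factoring $(r+s-2)(1-c)\ge 0$. The condition $\|\bx\|,\|\by\|\ge 1$ (which is exactly what the inner boundary of the annulus gives us) is precisely what makes $r+s\ge 2$; note that the outer bound $\|\bx\|\le 2$ is not needed for the Lipschitz constant $1$, so the statement in fact holds on all of $\R^d\setminus B_d(\bzero,1)$.
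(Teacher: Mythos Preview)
Your proof is correct and follows essentially the same approach as the paper: both compute $\|f(\bx)-f(\by)\|^2-\|\bx-\by\|^2$ directly in terms of $r=\|\bx\|$, $s=\|\by\|$, and $c=\langle \bx,\by\rangle/(rs)$, arriving at the identical factored expression $4(2-r-s)(1-c)\le 0$. Your added remark that the outer radius plays no role is a correct and useful observation not made explicit in the paper.
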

	\begin{proof}
		Let $\bx, \by \in B_d(\boldsymbol{0},2)\setminus B_d(\boldsymbol{0},1)$. We have
		\begin{align*}
			\norm{f(\bx)-f(\by)} 
			&= \norm{-\bx+2\frac{\bx}{\norm{\bx}}+\by-2\frac{\by}{\norm{\by}}}\\
			&= \norm{(2-\norm{\bx})\frac{\bx}{\norm{\bx}} - (2-\norm{\by})\frac{\by}{\norm{\by}}}~.
		\end{align*}
		To ease the notation, let $\bu = \frac{\bx}{\norm{\bx}}$, $\bv = \frac{\by}{\norm{\by}}$ and $c := \langle \bu, \bv \rangle \in [-1,1]$. Therefore, we have
		\begin{align*}
			\norm{f(\bx)-f(\by)}^2-\norm{\bx-\by}^2 &= \norm{(2-\norm{\bx})\bu - (2-\norm{\by})\bv}^2-\norm{\norm{\bx}\bu-\norm{\by}\bv}^2\\
			&= 4(2-(\norm{\bx}+\norm{\by}))(1-c)~.
		\end{align*}
		Then, the conclusion follows from $c \le 1$ and $\norm{\bx}+\norm{\by} \ge 2$.
	\end{proof}
	
	We consider the following Pinsker inequality from \cite{gerchinovitz2020fano}.
	\begin{lemma}\label{lem:pinsker}
		Given an underlying measurable space, for all probability pairs $\mathbb{P}_i$, $\mathbb{Q}_i$ and for all $[0,1]$-valued random variables $Z_i$ defined on this measurable space, with $i\in \{1, \dots, N\}$, where
		\[
		0 < \frac{1}{N}\sum_{i=1}^{N}\mathbb{E}_{\mathbb{P}_i}[Z_i] <1,
		\] 
		we have
		\[
		\frac{1}{N}\sum_{i=1}^{N}\mathbb{E}_{\mathbb{Q}_i}[Z_i] \le \frac{1}{N}\sum_{i=1}^{N}\mathbb{E}_{\mathbb{P}_i}[Z_i] + \sqrt{\frac{\frac{1}{N}\sum_{i=1}^{N}\text{KL}\left(\mathbb{Q}_i, \mathbb{P}_i\right)}{-\log\left( \frac{1}{N}\sum_{i=1}^{N}\mathbb{E}_{\mathbb{P}_i}[Z_i]\right)}},
		\]
		and
		\[
		\frac{1}{N}\sum_{i=1}^{N}\mathbb{E}_{\mathbb{Q}_i}[Z_i] \le \frac{1}{N}\sum_{i=1}^{N}\mathbb{E}_{\mathbb{P}_i}[Z_i] + \sqrt{\frac{\frac{1}{N}\sum_{i=1}^{N}\text{KL}\left(\mathbb{Q}_i, \mathbb{P}_i\right)}{2}}~.
		\]
	\end{lemma}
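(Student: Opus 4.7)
The plan is to reduce both inequalities to pointwise lower bounds on the binary KL divergence $\mathrm{kl}(p, q) := p\log(p/q) + (1-p)\log((1-p)/(1-q))$ applied to the \emph{averaged} expectations, and then to invoke a classical (for the second bound) or a refined (for the first bound) one-dimensional inequality. Let $\bar p := \tfrac{1}{N}\sum_{i=1}^N \mathbb{E}_{\mathbb{Q}_i}[Z_i]$ and $\bar q := \tfrac{1}{N}\sum_{i=1}^N \mathbb{E}_{\mathbb{P}_i}[Z_i]$. Both inequalities are trivial when $\bar p \le \bar q$, so I assume $\bar p > \bar q$ throughout.

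First, I would apply the data-processing inequality for the KL divergence to the randomization $x \mapsto \mathrm{Bernoulli}(Z_i(x))$, which crucially uses $Z_i \in [0,1]$, to obtain for each $i$
\[
\mathrm{KL}(\mathbb{Q}_i, \mathbb{P}_i) \ge \mathrm{kl}\bigl(\mathbb{E}_{\mathbb{Q}_i}[Z_i],\,\mathbb{E}_{\mathbb{P}_i}[Z_i]\bigr).
\]
Averaging in $i$ and then invoking joint convexity of $(p, q) \mapsto \mathrm{kl}(p, q)$ on $(0,1)^2$ produces the key reduction
\[
\frac{1}{N} \sum_{i=1}^N \mathrm{KL}(\mathbb{Q}_i, \mathbb{P}_i) \ge \frac{1}{N}\sum_{i=1}^N \mathrm{kl}\bigl(\mathbb{E}_{\mathbb{Q}_i}[Z_i],\,\mathbb{E}_{\mathbb{P}_i}[Z_i]\bigr) \ge \mathrm{kl}(\bar p, \bar q).
\]
The second target inequality then falls out immediately from the standard Pinsker bound $\mathrm{kl}(\bar p, \bar q) \ge 2(\bar p - \bar q)^2$ and rearrangement. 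The first target inequality reduces to the sharper one-dimensional statement
\[
\mathrm{kl}(p, q) \ge (p - q)^2 \, (-\log q), \qquad 0 < q \le p < 1,
\]
applied at $(p, q) = (\bar p, \bar q)$ and solved for $\bar p - \bar q$.

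The main obstacle will be establishing this refined pointwise binary-KL inequality. My plan is to fix $q \in (0,1)$ and study $\varphi_q(p) := \mathrm{kl}(p, q) - (p-q)^2(-\log q)$, noting $\varphi_q(q) = \varphi_q'(q) = 0$, so it suffices to show $\varphi_q$ is non-decreasing on $[q, 1)$. Computing $\varphi_q'(p) = \log\tfrac{p(1-q)}{q(1-p)} - 2(p-q)(-\log q)$, I would split into two regimes: when $q \ge e^{-2}$, the second derivative $\varphi_q''(p) = 1/p + 1/(1-p) - 2(-\log q) \ge 4 - 2(-\log q) \ge 0$ gives convexity of $\varphi_q$, so monotonicity of $\varphi_q'$ from its zero at $p=q$ closes the argument; when $q < e^{-2}$, I would instead exploit the blow-up of $\log((1-p)/(1-q))$ as $p \to 1$, combined with a direct monotonicity check of $\varphi_q'$ on a subinterval, to rule out a negative trough. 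All other steps---data processing, joint convexity of $\mathrm{kl}$, and the algebra to extract $\bar p - \bar q$---are routine.
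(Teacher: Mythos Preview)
The paper does not prove this lemma at all: it is stated as a quotation from \cite{gerchinovitz2020fano} and left without proof. So there is no ``paper's own proof'' to compare against; you are supplying an argument where the paper simply cites one.

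Your overall architecture---data processing to pass from $\mathrm{KL}(\mathbb{Q}_i,\mathbb{P}_i)$ to $\mathrm{kl}(\mathbb{E}_{\mathbb{Q}_i}[Z_i],\mathbb{E}_{\mathbb{P}_i}[Z_i])$, then joint convexity of $\mathrm{kl}$ to collapse to $\mathrm{kl}(\bar p,\bar q)$, then a one-dimensional lower bound on $\mathrm{kl}(\bar p,\bar q)$---is exactly the route taken in the cited source, and it cleanly dispatches the second (Pinsker) inequality. For the first inequality your reduction to the pointwise claim $\mathrm{kl}(p,q)\ge (p-q)^2(-\log q)$ for $q\le p$ is the right target, and your convexity argument for $q\ge e^{-2}$ is correct.

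The only genuine soft spot is the $q<e^{-2}$ regime. There $\varphi_q$ is not convex on all of $[q,1)$: one checks that $\varphi_q''(p)=\tfrac{1}{p(1-p)}+2\log q$ is negative on a middle interval $(p_1,p_2)$, and numerically $\varphi_q'(p_2)$ can be quite negative (e.g.\ for $q=10^{-6}$), so $\varphi_q$ genuinely dips after an initial rise. The inequality still holds---the local minimum of $\varphi_q$ sits near $p\approx 1-q$ and one can verify $\varphi_q(1-q)=(1-2q)\bigl(\log(1-q)-2q\log q\bigr)>0$ for small $q$---but your sketch (``exploit the blow-up \ldots\ combined with a direct monotonicity check'') does not yet pin down this minimum or bound it. To make the argument complete you would need either a direct evaluation at the critical point or a comparison such as $\mathrm{kl}(p,q)\ge p\log(p/q)+q-p$ together with an elementary bound on $p\log(p/q)$, as is done in the cited reference.
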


	\begin{lemma}\label{lem:cover}
		Let $\Delta >0$, the maximal number $m$ of elements $(\bz_i)_{i \in [m]}$ in $\R^d$ such that $\norm{\bz}_i \le 5\Delta$ and $\norm{\bz_i - \bz_j} >4\Delta$ for each $i\neq j$ satisfies
		\[
		m \ge \left(\frac{5}{4}\right)^d~.
		\]
	\end{lemma}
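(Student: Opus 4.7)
The plan is to prove this by a standard volumetric packing–covering argument. First I would fix $\Delta > 0$ and consider a \emph{maximal} collection $\{\bz_1,\dots,\bz_m\} \subseteq B_d(\bzero, 5\Delta)$ satisfying the separation condition $\norm{\bz_i - \bz_j} > 4\Delta$ for all $i \neq j$. Maximality here means that no further point can be added to this collection while preserving the separation property (existence of such a maximal collection follows by a Zorn/compactness argument, since $B_d(\bzero, 5\Delta)$ is compact).

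Next, I would exploit maximality to establish a covering property: for every $\bz \in B_d(\bzero, 5\Delta)$, there exists some $i \in [m]$ with $\norm{\bz - \bz_i} \le 4\Delta$. Otherwise, $\bz$ could be adjoined to the collection (since $\norm{\bz} \le 5\Delta$ and $\norm{\bz - \bz_i} > 4\Delta$ for all $i$), contradicting maximality. This yields the inclusion
\[
B_d(\bzero, 5\Delta) \;\subseteq\; \bigcup_{i=1}^{m} B_d(\bz_i, 4\Delta)~.
\]

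Finally, I would compare volumes. Writing $V_d$ for the volume of the unit ball in $\R^d$, subadditivity of Lebesgue measure gives
\[
V_d \,(5\Delta)^d \;=\; \mathrm{vol}\bigl(B_d(\bzero, 5\Delta)\bigr) \;\le\; \sum_{i=1}^m \mathrm{vol}\bigl(B_d(\bz_i, 4\Delta)\bigr) \;=\; m \, V_d \,(4\Delta)^d~,
\]
and dividing through by $V_d (4\Delta)^d$ yields $m \ge (5/4)^d$, as required.

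There is no real obstacle here; the argument is entirely standard. The only minor point worth attention is that the separation is strict ($> 4\Delta$), which is precisely what is needed to ensure that if a candidate $\bz$ fails to be $4\Delta$-close to some existing $\bz_i$, then $\norm{\bz - \bz_i} > 4\Delta$ and adjunction is legitimate—so the covering step goes through cleanly.
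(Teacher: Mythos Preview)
Your argument is correct and is precisely the standard volumetric packing--covering bound that the paper invokes by citing Lemma~4.2.5 and Corollary~4.2.11 of \cite{vershynin2018high}; you have simply written out the content of those results explicitly rather than citing them.
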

	\begin{proof}
		The number $m$ described above corresponds to the packing number of the set $B_d(\boldsymbol{0}, 5\Delta)$ with radii $2\Delta$ (see Definition 4.2.3 in \cite{vershynin2018high}). Using Lemma 4.2.5 and Corollary 4.2.11 from \cite{vershynin2018high}, we obtain the bound.  
	\end{proof}
	
	\section{Canonical Model and Divergence Decomposition}\label{sec:can_model}
	
	We introduce a formal framework that enables a rigorous development of information-theoretic tools, adapting the one considered in  \cite{garivier2019explore,lattimore2020bandit, hadiji2019polynomial}. 
	
	Let \(f_0, f_1, \dots, f_m : \mathbb{R}^{d_0} \to \mathbb{R}\) be differentiable objective functions. Suppose there are \(T\) rounds in total. For each \(t\), define the measurable space $
	\Omega_t \;=\; \prod_{s=1}^{t} \bigl(\mathbb{R}^{d_0} \times \mathbb{R}^{d_0}\bigr),
	$
	equipped with the usual Borel \(\sigma\)-algebra. Let
	$
	\mathbf{y}_t \;=\; (\mathbf{x}_1, \mathbf{g}_1, \ldots, \mathbf{x}_t, \mathbf{g}_t) \;\in\; \Omega_t
	$
	represent all information available to the learner up to time \(t\).
	
	We consider an optimization algorithm \(\mathcal{A}\) specified by a sequence \(\bigl(K_t\bigr)_{1 \le t \le T}\) of probability kernels \(K_t : \Omega_{t-1}\times \mathcal{B}(\R^{d_0}) \to [0,1]\), which model the choice of the query point \(\mathbf{x}_t\) at time \(t\). More precisely, in round $t$ given the history $\mathbf{y}_{t-1} \in \Omega_{t-1}$, the distribution of $\mathbf{x}_t$ is given using the probability measure $A \mapsto K_t(\mathbf{y}_{t-1}, A)$.  At the first round, \(\mathbf{x}_1\) is chosen according to a distribution defined by \(K_1\).
	
	For each \(i \in \{0, \dots, m\}\), let 
	$
	H_{i,t} : \Omega_{t-1} \times \mathbb{R}^{d_0} \times \mathcal{B}\left(\mathbb{R}^{d_0} \right) \;\to\; [0,1]
	$
	be the probability kernel that models the feedback observed by the learner. For any measurable set \(B \in \mathcal{B}\left(\mathbb{R}^{d_0}\right) \), we take
	\[
	H_{i,t}(\by_{t-1}, \mathbf{x}_t, B) = \left(\frac{d_0}{2\pi\sigma^2}\right)^{d_0/2} \int_{\mathbf{x} \in B} \exp\left(-\frac{d_0}{2\sigma^2}\norm{\mathbf{x}-\nabla f_i(\mathbf{x})}^2\right) \diff \mathbf{x}~.
	\]
	These kernels define the probability laws \(\mathbb{P}_{i,t} = H_{i,t} \circ K_t \circ \mathbb{P}_{i,t-1} \) over \(\Omega_t\). This construction ensures that, under \(\mathbb{P}_{i,t}\), the coordinate random variables \(\mathbf{X}_t: \Omega_t \to \mathbb{R}^{d_0}\) and \(\mathbf{G}_t: \Omega_t \to \mathbb{R}^{d_0}\), defined by $
	\mathbf{X}_t(\mathbf{x}_1, \mathbf{g}_1, \dots, \mathbf{x}_t, \mathbf{g}_t) = \mathbf{x}_t$
	and 
	$
	\mathbf{G}_t(\mathbf{x}_1, \mathbf{g}_1, \dots, \mathbf{x}_t, \mathbf{g}_t) = \mathbf{g}_t,
	$
	satisfy the following property: given \(\mathbf{X}_t\), the stochastic gradient \(\mathbf{G}_t\) is distributed according to \(\mathcal{N}_{d_0}\bigl(\nabla f_i(\mathbf{X}_t), \frac{\sigma^2}{d_0}\mathbf{I}_{d_0}\bigr)\), where \(\mathbf{I}_{d_0}\) is the \(d_0 \times d_0\) identity matrix. We denote by \(\mathbb{E}_i\) the expectation taken with respect to \(\mathbb{P}_{i,t}\).

	\begin{lemma}\label{lem:chain_rule}
		Consider the notation above, let $S \subset \R^{d_0}$ and $N_{S}$ denote the variable corresponding to the total number of times the learner queries a point from $S$ after $T$ rounds. Then, for any $i \in \{0, \dots, m\}$, we have 
		\begin{align*}
			\text{KL}\left(\mathbb{P}_0^T, \mathbb{P}_i^T\right)  
			&\le \frac{d_0}{2\sigma^2} \mathbb{E}_0[N_{S}] \sup_{\mathbf{x} \in S} \ \norm{\nabla f_0(\mathbf{x})- \nabla f_i(\mathbf{x})}^2\\
			&\qquad + \frac{d_0}{2\sigma^2} \mathbb{E}_0[T-N_{S}] \sup_{\mathbf{x} \notin S} \ \norm{\nabla f_0(\mathbf{x})- \nabla f_i(\mathbf{x})}^2~.
		\end{align*}
	\end{lemma}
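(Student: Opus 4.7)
The plan is to apply the chain rule for Kullback-Leibler divergence to the sequential factorization of $\mathbb{P}_i^T$ given by the canonical model, exploit that the query kernels $K_t$ are environment-independent (so they contribute zero to the KL), and then split the per-round contribution of the feedback kernels according to whether the queried point lies in $S$.

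First, using the definition $\mathbb{P}_{i,t} = H_{i,t} \circ K_t \circ \mathbb{P}_{i,t-1}$, the chain rule for relative entropy between sequential measures on $\Omega_T$ yields
\begin{align*}
\text{KL}\left(\mathbb{P}_0^T, \mathbb{P}_i^T\right)
&= \sum_{t=1}^T \E_0\!\left[ \text{KL}\bigl(K_t(\cdot \mid \bY_{t-1}),\, K_t(\cdot \mid \bY_{t-1})\bigr) \right] \\
&\quad + \sum_{t=1}^T \E_0\!\left[ \text{KL}\bigl(H_{0,t}(\cdot \mid \bY_{t-1},\bX_t),\, H_{i,t}(\cdot \mid \bY_{t-1},\bX_t)\bigr) \right].
\end{align*}
The first sum vanishes since $K_t$, which specifies the algorithm's query distribution, does not depend on the objective function and is therefore identical under $\mathbb{P}_0^T$ and $\mathbb{P}_i^T$. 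Only the feedback terms remain.

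Next, under $\mathbb{P}_i^T$ the conditional law of $\bG_t$ given $\bX_t$ is $\sN_{d_0}\bigl(\nabla f_i(\bX_t), \tfrac{\sigma^2}{d_0} \bI_{d_0}\bigr)$. Since both feedback distributions share the covariance $\tfrac{\sigma^2}{d_0} \bI_{d_0}$, the closed-form expression for KL between Gaussians with equal covariance gives
\[
\text{KL}\bigl(H_{0,t}(\cdot \mid \bY_{t-1},\bX_t),\, H_{i,t}(\cdot \mid \bY_{t-1},\bX_t)\bigr)
= \frac{d_0}{2\sigma^2}\, \norm{\nabla f_0(\bX_t) - \nabla f_i(\bX_t)}^2.
\]
Splitting each summand according to the events $\{\bX_t \in S\}$ and $\{\bX_t \notin S\}$, and upper-bounding the integrand on each event by the corresponding supremum, one obtains
\begin{align*}
\text{KL}\left(\mathbb{P}_0^T, \mathbb{P}_i^T\right)
&\le \frac{d_0}{2\sigma^2} \sup_{\bx \in S} \norm{\nabla f_0(\bx) - \nabla f_i(\bx)}^2 \sum_{t=1}^T \E_0\!\left[ \mathbf{1}\{\bX_t \in S\}\right] \\
&\quad + \frac{d_0}{2\sigma^2} \sup_{\bx \notin S} \norm{\nabla f_0(\bx) - \nabla f_i(\bx)}^2 \sum_{t=1}^T \E_0\!\left[ \mathbf{1}\{\bX_t \notin S\}\right].
\end{align*}
The stated bound then follows from $\sum_{t=1}^T \E_0[\mathbf{1}\{\bX_t \in S\}] = \E_0[N_S]$ and $\sum_{t=1}^T \E_0[\mathbf{1}\{\bX_t \notin S\}] = \E_0[T - N_S]$.

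The only subtle step is the initial chain-rule decomposition: one must justify that the product structure of the canonical model permits the KL on $\Omega_T$ to be written as the sum of expected conditional KLs above. This is a standard consequence of the tower property together with the factorization $d\mathbb{P}_i^T = \prod_{t=1}^T K_t(d\bx_t \mid \bY_{t-1})\, H_{i,t}(d\bg_t \mid \bY_{t-1},\bx_t)$; once this is in place, the remainder of the argument reduces to a direct Gaussian computation together with linearity of expectation.
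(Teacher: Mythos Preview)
Your proof is correct and follows essentially the same approach as the paper: apply the chain rule for KL to the sequential factorization, observe that the query kernels $K_t$ contribute nothing because they are identical under both measures, compute the per-round Gaussian KL, split according to $\{\bX_t\in S\}$ versus $\{\bX_t\notin S\}$, and sum. The paper carries out the chain rule one step at a time and then iterates, while you state the full decomposition at once, but this is only a presentational difference.
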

	\begin{proof}
		\noindent Under the setting introduced, we have the chain rule
		\begin{align*}
			\text{KL}(\mathbb{P}_{0,t}, \mathbb{P}_{i, t}) &= \text{KL}\left(H_{0,t}\circ K_t \circ \mathbb{P}_{0,t-1}, H_{i,t} \circ K_t \circ \mathbb{P}_{i,t-1}\right)\\
			&= \text{KL}(K_t \circ \mathbb{P}_{0,t-1}, K_t \circ \mathbb{P}_{i,t-1})\\
			&\qquad + \int_{\Omega_{t-1}\times \mathbb{R}^{d_0}} \text{KL}\left(H_{0,t}(\mathbf{y}_{t-1}, \mathbf{x}_t,\cdot), H_{i,t}(\mathbf{y}_{t-1}, \mathbf{x}_t, \cdot)\right) \boldsymbol{\mathrm{d}}K_t\circ\mathbb{P}_{0,t-1}(\mathbf{y}_{t-1}, \mathbf{x}_t)\\
			&= \text{KL}(\mathbb{P}_{0,t-1}, \mathbb{P}_{i,t-1})\\
			&\quad + \int_{\Omega_{t-1}\times \mathbb{R}^{d_0}} \text{KL}\left(H_{0,t}(\mathbf{y}_{t-1}, \mathbf{x}_t,\cdot), H_{i,t}(\mathbf{y}_{t-1}, \mathbf{x}_t, \cdot)\right) \boldsymbol{\mathrm{d}}K_t\circ\mathbb{P}_{0,t-1}(\mathbf{y}_{t-1}, \mathbf{x}_t)\\
			&= \text{KL}(\mathbb{P}_{0,t-1}, \mathbb{P}_{i,t-1}) \\
			&\quad + \int_{\Omega_{t-1}\times \mathbb{R}^{d_0}} \text{KL}\left(\mathcal{N}_{d_0}\left(\nabla f_0(\mathbf{x}_t), \frac{\sigma^2}{d_0}\bI_{d_0}\right), \mathcal{N}_{d_0}\left(\nabla f_i(\mathbf{x}_t), \frac{\sigma^2}{d_0}\bI_{d_0}\right)\right) \boldsymbol{\mathrm{d}}K_t\circ\mathbb{P}_{0,t-1}(\mathbf{y}_{t-1}, \mathbf{x}_t)\\
			&= \text{KL}\left(\mathbb{P}_{0,t-1}, \mathbb{P}_{i,t-1}\right)\\
			& \quad +\mathbb{E}_0 \left[ \text{KL}\left( \mathcal{N}_{d_0}\left(\nabla f_0(\bX_t), \frac{\sigma^2}{d_0}\bI_{d_0}\right), \mathcal{N}_{d_0}\left(\nabla f_i(\bX_t), \frac{\sigma^2}{d_0}\bI_{d_0}\right)\right)\right],
		\end{align*}
		where we used in the third line the fact that the algorithm $\mathcal{A}$ is fixed and the learner internal randomization is independent of the objective function, therefore given a fixed history input $\mathbf{y}_{t-1}$ we have $K(\cdot \mid \mathbf{y}_{t-1})$ is identical under the two hypotheses. The penultimate inequality comes from the fact that the density of the kernel $H_{i,t-1}$ depends only on the last coordinate $\bx_t$, and is exactly that of a Gaussian variable.
		
		\noindent Iterating $T$ times, we obtain
		\begin{align}
			\text{KL}\left(\mathbb{P}_0^T, \mathbb{P}_i^T\right) &= \mathbb{E}_0\left[\sum_{t=1}^{T} \text{KL}\left( \mathcal{N}_{d_0}\left(\nabla f_0(\bX_t), \frac{\sigma^2}{d_0}\bI_{d_0}) \right), \mathcal{N}_{d_0}\left(\nabla f_i(\bX_t), \frac{\sigma^2}{d_0}\bI_{d_0}) \right) \right) \right]\nonumber\\
			&= \sum_{t=1}^{T} \mathbb{E}_0 \left[ \frac{d_0}{2\sigma^2}\norm{\nabla f_0(\bX_t)-\nabla f_i(\bX_t)}^2\right],\label{kl:1}
		\end{align}
		where we used the expression of the $\text{KL}$ between two normal distributions with the same covariance matrix and difference means. Let $S \subset \R^{d_0}$, for each $i \in \{0, \dots, m\}$, we have 
		\begin{align}
			\mathbb{E}_0\left[ \norm{\nabla f_0(\bX_t) -\nabla f_i(\bX_t)}^2\right] 			&= \mathbb{P}_0\left(\bX_t \in S\right) \mathbb{E}_0\left[ \norm{\nabla f_0(\bX_t)-\nabla f_i(\bX_t)}^2 \mid \bX_t \in S \right]\nonumber\\
			&\qquad +\mathbb{P}_0\left( \bX_t \notin S\right) \mathbb{E}_0\left[ \norm{\nabla f_0(\bX_t)-\nabla f_i(\bX_t)}^2 \mid \bX_t \notin S \right]\nonumber\\
			&\le \mathbb{P}_0\left(\bX_t \in S\right) \sup_{\mathbf{x} \in S} \norm{\nabla f_0(\mathbf{x})- \nabla f_i(\mathbf{x})}^2\nonumber\\
			&\qquad +\mathbb{P}_0\left( \bX_t \notin S\right) \sup_{\mathbf{x} \notin S} \norm{\nabla f_0(\mathbf{x})- \nabla f_i(\mathbf{x})}^2~.\label{kl:2}
		\end{align}
		We have
		\begin{align}
			\mathbb{E}_0[N_{S}] &= \mathbb{E}_0\left[\sum_{t=1}^{T} \mathds{1}\left(\bX_t \in S\right)\right]
			= \sum_{t=1}^{T} \mathbb{P}_0\left( \bX_t \in S\right)~.\label{kl:3}
		\end{align}
		Combining \eqref{kl:1}, \eqref{kl:2}, and \eqref{kl:3}, we have the stated result.
	\end{proof}

	\section{Proof of Theorem~\ref{thm:ub1}}\label{sec:proof_ub1}
	We show the guarantees for the two considered classes separately.
	
	\begin{theorem}\label{thm:qcqgu}
		Let $f$ be a function satisfying the $\mu$-RSI with respect to all the global minimizers of $f$ and $L$-smooth. Then, running SGD with step-sizes $\eta_t = \frac{2}{\mu (t+\frac{2L^2}{\mu^2}+1)}$ and initial point $\bx_1$ guarantees
		\[
		\E[f(\hat{\bx}_T)-f^\star] 
		\le\frac{\mu^2L^3+L^5}{2\mu^4 T(T+1)}\norm{\bx_1-\bx_1^*}^2+\frac{2L\sigma^2}{\mu^2(T+1)},	
		\]
		where $\bx^*_t$ is the projection of $\bx_t$ on the set of global minimizers and $\hat{\bx}_T$ is chosen from $\bx_1, \dots, \bx_T$ with weights $w_t = \frac{2t}{T(T+1)}$.
	\end{theorem}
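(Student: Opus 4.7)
The strategy is to follow a standard SGD analysis specialized to the $\mu$-RSI structure, built from three ingredients: a one-step recursion on the iterates, a potential-function telescoping, and conversion to function values via $L$-smoothness.

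First, expanding $\bx_{t+1} = \bx_t - \eta_t \bg_t$ and taking conditional expectation, I would use (i) the oracle unbiasedness and variance bound, (ii) the $\mu$-RSI inequality $\langle \nabla f(\bx_t), \bx_t - \bx_t^* \rangle \ge \mu \norm{\bx_t - \bx_t^*}^2$, and (iii) the fact that $\nabla f(\bx_t^*) = \bzero$ combined with $L$-smoothness gives $\norm{\nabla f(\bx_t)}^2 \le L^2 \norm{\bx_t - \bx_t^*}^2$ and hence $\E[\norm{\bg_t}^2 \mid \bx_t] \le L^2 \norm{\bx_t - \bx_t^*}^2 + \sigma^2$. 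Since $\bx_{t+1}^*$ is the closest minimizer to $\bx_{t+1}$, one has $\norm{\bx_{t+1} - \bx_{t+1}^*}^2 \le \norm{\bx_{t+1} - \bx_t^*}^2$, and writing $D_t := \E \norm{\bx_t - \bx_t^*}^2$ yields
\[
D_{t+1} \;\le\; (1 - 2\eta_t \mu + \eta_t^2 L^2)\, D_t + \eta_t^2 \sigma^2.
\]

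Next, with $c := 2L^2/\mu^2 + 1$ and $\eta_t = 2/(\mu(t+c))$, a direct check gives $\eta_t L^2 \le \mu$, so $\eta_t^2 L^2 \le \eta_t \mu$ and the recursion reduces to $D_{t+1} \le (1 - 2/(t+c))\, D_t + 4\sigma^2/(\mu^2(t+c)^2)$. Introducing the potential $V_t := (t+c-1)(t+c-2)\, D_t$ and multiplying the recursion by $(t+c)(t+c-1)$ gives $V_{t+1} \le V_t + 4\sigma^2(t+c-1)/(\mu^2(t+c))$, which telescopes to
\[
V_t \;\le\; c(c-1)\,D_1 + \frac{4(t-1)\sigma^2}{\mu^2}.
\]

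Finally, applying $L$-smoothness at $\bx_t^*$ yields $f(\bx_t) - f^\star \le (L/2)\, D_t$, so $\E[f(\hat{\bx}_T)] - f^\star = \sum_t w_t\,\E[f(\bx_t) - f^\star] \le (L/2) \sum_t w_t D_t$. Substituting the closed form for $D_t$, the noise piece is controlled via $(t-1)t/[(t+c-1)(t+c-2)] \le 1$ (valid since $c \ge 3$), producing the $O(L\sigma^2/(\mu^2 T))$ term. The initial-condition piece is extracted by a summation-by-parts tailored to $w_t = 2t/(T(T+1))$, and substituting $c(c-1) = 2L^2(2L^2+\mu^2)/\mu^4$ recovers the stated $O(L^3(\mu^2+L^2)\,D_1/(\mu^4 T(T+1)))$ factor. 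The main obstacle is this last averaging step: the weights $w_t$ are chosen precisely so that, after telescoping, the initial-condition term scales as $1/(T(T+1))$ rather than the $\log T/T^2$ factor that arises from a naive substitution of the closed-form bound on $D_t$. The cleanest route is to multiply the simplified recursion by $t$ directly and perform the Abel summation there, which lets the boundary terms absorb into the desired constants without loss.
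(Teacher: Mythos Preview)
Your route is close to the paper's but diverges at one point that ends up mattering for the exact constants you are asked to reproduce. The paper does \emph{not} first derive a closed recursion on $D_t:=\E\norm{\bx_t-\bx_t^*}^2$ and then convert to function values at the end. Instead it combines $L$-smoothness and $\mu$-RSI at the very first step to obtain
\[
\tfrac{\mu}{L}\,\E[f(\bx_t)-f^\star]\ \le\ \E\langle\nabla f(\bx_t),\bx_t-\bx_t^*\rangle-\tfrac{\mu}{2}D_t,
\]
so that after the usual SGD expansion and multiplying by $(t+a)$ the \emph{left-hand side already carries} $(t+a)\E[f(\bx_t)-f^\star]$. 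It then bounds the gradient term by the self-bounding inequality $\norm{\nabla f(\bx_t)}^2\le 2L(f(\bx_t)-f^\star)$ rather than by $L^2 D_t$. After summing and telescoping, the gradient contribution is $\tfrac{2L^2}{\mu^2}\sum_t\E[f(\bx_t)-f^\star]$, which is absorbed into the left-hand side by choosing $a=2L^2/\mu^2$; this yields directly $\sum_t t\,\E[f(\bx_t)-f^\star]\le \tfrac{L a(a+1)}{4}D_1+\tfrac{L T\sigma^2}{\mu^2}$ with no logarithm.

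Your plan instead produces the pointwise bound $D_t\le \tfrac{c(c-1)D_1+4(t-1)\sigma^2/\mu^2}{(t+c-1)(t+c-2)}$ and then averages with $w_t=2t/(T(T+1))$. The variance piece is fine, but the bias piece forces you to control $\sum_{t=1}^T \tfrac{t}{(t+c-1)(t+c-2)}$, which is genuinely $\Theta(\log(T/c))$: the contraction $D_{t+1}\le(1-2/(t+c))D_t$ is sharp as a recursion (the product telescopes to $\tfrac{c(c-1)}{(t+c-1)(t+c-2)}$), so no amount of Abel summation on that recursion can remove the logarithm while keeping the prescribed linear weights $w_t\propto t$. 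Your proposed fix (``multiply the simplified recursion by $t$ and Abel'') does not close this gap; working it out, the boundary terms do not absorb the $\sum_t D_t$ residual. If you want to stay with a $D_t$-only recursion, the natural weights that avoid the log are $w_t\propto (t+c-1)(t+c-2)$, not $w_t\propto t$. To match the statement exactly, adopt the paper's two changes: use $\norm{\nabla f}^2\le 2L(f-f^\star)$ for the gradient term, and insert $\E[f(\bx_t)-f^\star]$ into the one-step inequality before telescoping.
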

	\begin{proof}
		From $L$-smooth, we have 
		\[\mathbb{E}\left[f(\bx_t)-f^*\right]\leq \frac{L}{2}\mathbb{E}\left[\norm{\bx_t-x^*}^2\right]~.\]
		Using $\mu$-RSI, we further have
		\begin{align*}
			\mathbb{E}\left[f(\bx_t)-f^*\right]&\leq \frac{L}{\mu}\mathbb{E}\left\langle \nabla f(\bx_t),\bx_t-\bx_t^*\rangle\right]-\frac{L}{\mu}\mathbb{E}\left[\langle \nabla f(\bx_t),\bx_t-\bx_t^*\rangle\right]+\frac{L}{2}\mathbb{E}\left[\norm{\bx_t-\bx_t^*}^2\right]\\
			&\leq \frac{L}{\mu}\mathbb{E}\left[\langle \nabla f(\bx_t),\bx_t-\bx_t^*\rangle\right] -\frac{L}{\mu}\mu \mathbb{E}\left[\norm{\bx_t-\bx^*_t}^2\right]+\frac{L}{2}\mathbb{E}\left[\norm{\bx_t-\bx_t^*}^2\right]\\
			&=  \frac{L}{\mu}\mathbb{E}\left[\langle \nabla f(\bx_t),\bx_t-\bx_t^*\rangle\right] -\frac{L}{2}\mathbb{E}\left[\norm{\bx_t-\bx^*}^2\right]~.
		\end{align*}
		That is,
		\begin{align*}
			\frac{\mu}{L}\mathbb{E}\left[f(\bx_t)-f^*\right]&\leq \mathbb{E}\left[\langle \nabla f(\bx_t), \bx_t-\bx^*_t\rangle\right]-\frac{\mu}{2}\mathbb{E}\left[\norm{\bx_t-\bx_t^*}^2\right]\\
			&\leq \frac{1}{2\eta_t}\mathbb{E}\left[\norm{\bx_t-\bx_t^*}^2\right]-\frac{1}{2\eta_t}\mathbb{E}\left[\norm{\bx_{t+1}-\bx^*_t}^2\right]+\frac{\eta_t}{2}\mathbb{E}\left[\norm{\bg_t}^2\right]-\frac{\mu}{2}\mathbb{E}\left[\norm{\bx_t-\bx_t^*}^2\right]\\
			&\leq \frac{1}{2\eta_t}\mathbb{E}\left[\norm{\bx_t-\bx_t^*}^2\right]-\frac{1}{2\eta_t}\mathbb{E}\left[\norm{\bx_{t+1}-\bx^*_{t+1}}^2\right]+\frac{\eta_t}{2}\mathbb{E}\left[\norm{\bg_t}^2\right]-\frac{\mu}{2}\mathbb{E}\left[\norm{\bx_t-\bx_t^*}^2\right]~.
		\end{align*}
		Multiplying both sides by $t+a$ and choosing $\eta_t=\frac{2}{\mu (t+a+1)}$, yields
		\begin{align*}
			\frac{\mu}{L}(t+a)\mathbb{E}\left[f(\bx_t)-f^*\right]
			&\leq \frac{t+a}{2\eta_t}\mathbb{E}\left[\norm{\bx_t-\bx_t^*}^2\right]-\frac{t+a}{2\eta_t}\mathbb{E}\left[\norm{\bx_{t+1}-x^*_{t+1}}^2\right]\\
			&\quad +\frac{\eta_t (t+a)}{2}\mathbb{E}\left[\norm{\bg_t}^2\right]-\frac{\mu (t+a)}{2}\mathbb{E}\left[\norm{\bx_t-\bx_t^*}^2\right]\\
			&\leq \frac{\mu (t+a)(t+a+1)}{4}\mathbb{E}\left[\norm{\bx_t-\bx_t^*}^2\right]-\frac{\mu (t+a)(t+a+1)}{4}\mathbb{E}\left[\norm{\bx_{t+1}-\bx_{t+1}^*}^2\right]\\
			&\quad +\frac{t+a}{\mu(t+a+1)}(\mathbb{E}\left[ \norm{\nabla f(\bx_t)}^2\right]+\sigma^2)-\frac{\mu(t+a)}{2}\mathbb{E}\left[\norm{\bx_t-\bx_t^*}^2\right]\\
			&\leq \frac{\mu (t+a-1)(t+a)}{4}\mathbb{E}\left[\norm{\bx_t-\bx_t^*}^2\right]-\frac{\mu (t+a)(t+a+1)}{4}\mathbb{E}\left[\norm{\bx_{t+1}-\bx_{t+1}^*}^2\right]\\
			&\quad +\frac{t+a}{\mu(t+a+1)}\left(\mathbb{E}\left[ \norm{\nabla f(\bx_t)}^2\right]+\sigma^2\right)~.
		\end{align*}
		From $L$-smooth, we have $\norm{\nabla f(\bx_t)}^2\leq 2L(f(\bx_t)-f^*)$. Summing from $t=1,\dots,T$ yields
		\begin{align*}
			\sum_{t=1}^T(t+a)\mathbb{E}\left[f(\bx_t)-f^*\right]\leq \frac{(1+\frac{L^2}{\mu^2})\frac{L^2}{\mu}\frac{L}{\mu}}{4}\mathbb{E}\left[\norm{\bx_1-\bx_1^*}^2\right]+\frac{2L^2}{\mu^2}\mathbb{E}\left[f(\bx_t)-f^*\right]+\frac{L T\sigma^2}{\mu^2}
		\end{align*}
		Picking $a=\frac{2L^2}{\mu^2}$, we have
		\[\sum_{t=1}^T t\mathbb{E}\left[f(\bx_t)-f^*\right]\leq \frac{\mu^2L^3+L^5}{4\mu^4}\mathbb{E}\left[\norm{\bx_1-\bx_1^*}^2\right]+\frac{L T\sigma^2}{\mu^2}~.\]
		Consider $\hat{\bx}_t$ as sampling iterates from $\bx_1,\dots,\bx_T$ with weights $1,\dots,T$, we have
		\begin{align*}
			\mathbb{E}\left[f(\hat{\bx}_t)-f^*\right]
			&=\sum_{t=1}^T\frac{t}{\frac{T(T+1)}{2}}\mathbb{E}\left[f(\bx_t)-f^*\right]
			\leq \left(\frac{\mu^2L^3+L^5}{4\mu^4}\mathbb{E}\left[\norm{\bx_1-\bx_1^*}^2\right]+\frac{LT\sigma^2}{\mu^2}\right)\frac{2}{T(T+1)}\\
			&=\frac{\mu^2L^3+L^5}{2\mu^4 T(T+1)}\mathbb{E}\left[\norm{\bx_1-\bx_1^*}^2\right]+\frac{2L\sigma^2}{\mu^2(T+1)}~. 
		\end{align*}
	\end{proof}
	
	\begin{theorem}
		Let $f$ be a function satisfying the $\mu$-QG and $\tau$-WQC condition with respect to all the global minimizers of $f$. Consider running SGD,
		\[
		\bx_{t+1} = \bx_t - \eta_t \bg_t~.
		\]
		If $f$ is $L$-smooth with $\eta_t = \frac{4}{\tau\mu (t+a)}$, we have
		\[
		\E[f(\hat{\bx}_T)-f^\star] \le \frac{\mu(1+a)^2}{2T(T+1)}\norm{\bx_1-\bx_1^*}^2 +  \frac{16\sigma^2}{\tau^2\mu(T+1)},	
		\]
		where $a=\frac{16L}{\tau^2\mu}$, $\bx^*_t$ is the projection of $\bx_t$ on the set of global minimizers, and $\hat{\bx}_T$ is chosen at random from $\bx_1, \dots, \bx_T$ with weights $w_t = \frac{2t}{T(T+1)}$.
	\end{theorem}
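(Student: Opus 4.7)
The plan is to follow the same telescoping scheme as the $\mu$-RSI proof of Theorem~\ref{thm:qcqgu}, with the RSI-plus-smoothness combination replaced by the strong-quasar-convexity inequality that Lemma~\ref{lem:compare} provides for any $f$ that is simultaneously $\tau$-QC and $\mu$-QG. Concretely, that lemma yields
\[
\tfrac{\tau}{2}(f(\bx_t) - f^\star) + \tfrac{\mu\tau}{4}\norm{\bx_t - \bx_t^*}^2 \;\le\; \inn{\nabla f(\bx_t)}{\bx_t - \bx_t^*},
\]
which packages a lower bound on the descent together with a contractive term on the distance, playing exactly the role that the combination $\frac{\mu}{L}(f - f^\star) \le \inn{\nabla f}{\bx_t - \bx_t^*} - \frac{\mu}{2}\norm{\bx_t - \bx_t^*}^2$ played in the RSI proof.

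Starting from the standard SGD identity for $\norm{\bx_{t+1} - \bx_t^*}^2$, taking conditional expectations, using $\norm{\bx_{t+1} - \bx_{t+1}^*} \le \norm{\bx_{t+1} - \bx_t^*}$, and plugging in both the SQC-type inequality above and the bound $\E[\norm{\bg_t}^2] \le 2L(f(\bx_t) - f^\star) + \sigma^2$ coming from $L$-smoothness and the variance assumption, one arrives after rearranging at
\[
\bigl(\tfrac{\tau}{2} - \eta_t L\bigr)\E[f(\bx_t)-f^\star] \;\le\; \bigl(\tfrac{1}{2\eta_t} - \tfrac{\mu\tau}{4}\bigr)\E[\norm{\bx_t - \bx_t^*}^2] - \tfrac{1}{2\eta_t}\E[\norm{\bx_{t+1} - \bx_{t+1}^*}^2] + \tfrac{\eta_t\sigma^2}{2}.
\]
Multiplying by $(t+a)$ with the prescribed $\eta_t = \tfrac{4}{\tau\mu(t+a)}$ and $a = \tfrac{16L}{\tau^2\mu}$, the distance coefficients become quadratic polynomials in $(t+a)$ whose consecutive cross-differences are non-positive (so the accumulated $\norm{\bx_t-\bx_t^*}^2$ residuals for $t\ge2$ can be discarded), the choice of $a$ ensures $\eta_t L(t+a) \le \tau a/4$ so that the effective coefficient of $\E[f(\bx_t)-f^\star]$ becomes a fixed positive multiple of $t$ rather than $t+a$, and the per-step noise contribution $(t+a)\eta_t\sigma^2/2 = 2\sigma^2/(\tau\mu)$ is constant in $t$. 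Summing from $t=1$ to $T$, dropping the non-positive boundary term involving $\norm{\bx_{T+1}-\bx_{T+1}^*}^2$, and normalizing by the weights $w_t = 2t/(T(T+1))$ then delivers the stated bound, with the $(1+a)^2\norm{\bx_1-\bx_1^*}^2$ factor coming from the only surviving distance coefficient, at $t=1$.

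The main obstacle is the simultaneous calibration of $a$ and $\eta_t$ so that three constraints hold at once: $a$ must be large enough ($\gtrsim L/(\tau^2\mu)$) that the smoothness-induced term $\eta_t L$ can be absorbed into the left-hand side without spoiling the $\Theta(t)$ scaling of the effective weight; $\eta_t$ must be scaled so that $\tfrac{1}{2\eta_t}$ is precisely calibrated against the SQC contraction $\mu\tau/4$ to produce a telescoping quadratic in $(t+a)$ rather than a merely linear one; and both choices together must guarantee that the accumulated noise sums to $\mathcal{O}(T)$ rather than $\mathcal{O}(T\log T)$, which is what ultimately yields the $\mathcal{O}(\sigma^2/(\tau^2\mu T))$ rate without the logarithmic penalty that uniform averaging would incur. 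The constants $a = 16L/(\tau^2\mu)$ and $\eta_t = 4/(\tau\mu(t+a))$ are the unique joint solution to these requirements.
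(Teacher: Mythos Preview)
Your approach is correct and genuinely different from the paper's. The paper applies $\tau$-QC alone to bound $f(\bx_t)-f^\star$ by $\tfrac{1}{\tau}\inn{\nabla f(\bx_t)}{\bx_t-\bx_t^*}$, expands via the SGD identity, and only \emph{afterwards} invokes $\mu$-QG to convert the leftover distance residuals $\bigl(\tfrac{t+1+a}{2\tau\eta_{t+1}}-\tfrac{t+a}{2\tau\eta_t}\bigr)\E\norm{\bx_{t+1}-\bx_{t+1}^*}^2$ back into function-value terms $\tfrac12(t+a)\E[f(\bx_{t+1})-f^\star]$, which are then moved to the left and absorbed together with the smoothness term by the choice of $a$. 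You instead fuse QC and QG upfront via Lemma~\ref{lem:compare} into the single SQC-type inequality $\tfrac{\tau}{2}(f-f^\star)+\tfrac{\mu\tau}{4}\norm{\bx-\bx^*}^2\le\inn{\nabla f}{\bx-\bx^*}$, so the contractive distance term is already built in and the telescoping on $\E\norm{\bx_t-\bx_t^*}^2$ closes without any back-conversion. Your route is cleaner (one structural inequality instead of two interleaved ones) and in fact yields a slightly smaller noise constant ($8\sigma^2/(\tau^2\mu(T+1))$ rather than $16$), which of course still implies the stated bound. One minor overstatement: the pair $(a,\eta_t)$ is not the \emph{unique} choice satisfying your three constraints, merely a convenient one that makes all inequalities hold with the right scaling.
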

	\begin{proof}
		Let $t \in [T]$, denote by $\bx^*_t$ the projection of $\bx_t$ onto the set of global minimizers $\mathcal{S}$, using the fact that $f$ is WQC, we have
		\begin{align*}
			\eta_t\left(f(\bx_t)-f^* \right) &\le  \frac{\eta_t}{\tau} \langle \nabla f(\bx_t), \bx_t-\bx_t^* \rangle\\
			&= \frac{1}{\tau} \mathbb{E}_t\left[ \langle\eta_t  \bg_t, \bx_t-\bx_t^* \rangle\right] \\
			&= \frac{1}{2\tau}\norm{\bx_t-\bx_t^*}^2-\frac{1}{2\tau}\mathbb{E}_t\left[\norm{\bx_{t+1}-\bx_t^*}^2 \right]+\frac{\eta_t^2}{2\tau} \mathbb{E}_t\left[\norm{\bg_t}^2 \right]\\
			&\le \frac{1}{2\tau}\norm{\bx_t-\bx_t^*}^2-\frac{1}{2\tau}\mathbb{E}_t\left[\norm{\bx_{t+1}-\bx_{t+1}^*}^2 \right]+\frac{\eta_t^2}{2\tau} \mathbb{E}_t\left[\norm{\bg_t}^2 \right]. 
		\end{align*}
		Dividing by $\frac{\eta_t}{t+a}$ and summing for $t$ from $1$ to $T$, we have
		\begin{align*}
			\sum_{t=1}^{T} (t+a)\mathbb{E}\left[f(\bx_t)-f^* \right] &\le \sum_{t=1}^{T} \left( \frac{t+a}{2\tau \eta_t} \mathbb{E}\left[\norm{\bx_t-\bx_t^*}^2 \right]-\frac{t+a}{2\tau \eta_t} \mathbb{E}\left[\norm{\bx_{t+1}-\bx_{t+1}^*}^2\right]\right)+\sum_{t=1}^T\frac{\eta_t(t+a)}{2\tau} \mathbb{E}\left[\norm{\bg_t}^2\right]\\
			&\leq \sum_{t=1}^{T} \left( \frac{t+a}{2\tau \eta_t} \mathbb{E}\left[\norm{\bx_t-\bx_t^*}^2 \right]-\frac{t+a}{2\tau \eta_t} \mathbb{E}\left[\norm{\bx_{t+1}-\bx_{t+1}^*}^2\right]\right)\\&+\sum_{t=1}^T\frac{\eta_t(t+a)}{\tau} \mathbb{E}\left[\norm{\nabla f(\bx_t)}^2\right]+\sum_{t=1}^T\frac{\eta_t(t+a)\sigma^2}{\tau}\\
			&\le \frac{1+a}{2\tau \eta_1} \norm{\bx_1-\bx_1^*}^2 + \sum_{t=1}^{T-1} \left(\frac{t+1+a}{2\tau\eta_{t+1}}-\frac{t+a}{2\tau\eta_t}\right)\mathbb{E}\left[\norm{\bx_{t+1}-\bx_{t+1}^*}^2\right]\\&+\sum_{t=1}^{T} \frac{\eta_t(t+a)}{\tau} \mathbb{E}\left[\norm{\nabla f(\bx_t)}^2\right]+\sum_{t=1}^{T} \frac{\eta_t(t+a)\sigma^2}{\tau}~. 
		\end{align*}
		Using $\frac{t+1+a}{2\tau\eta_{t+1}}-\frac{t+a}{2\tau\eta_t} = \frac{\mu(2t+2a+1)}{8}$ and the definition of the QG condition stating that
		\[
		\norm{\bx_{t+1}-\bx_{t+1}^*}^2 \le \frac{2}{\mu} (f(\bx_{t+1})-f^*), 
		\]
		we get
		\begin{align*}
			\sum_{t=1}^{T} (t+a)\mathbb{E}\left[f(\bx_t)-f^* \right] &\le \frac{\mu(1+a)^2}{8}\norm{\bx_1-\bx_1^*}^2 + \frac{1}{4}\sum_{t=1}^{T} (2t+2a)\mathbb{E}\left[f(\bx_t)-f^* \right]+ \\
			&\quad +\sum_{t=1}^{T} \frac{\eta_t(t+a)}{\tau} \mathbb{E}\left[\norm{\nabla f(\bx_t)}^2\right]+\sum_{t=1}^{T} \frac{\eta_t(t+a)\sigma^2}{\tau}\\
			&\leq \frac{\mu(1+a)^2}{8}\norm{\bx_1-\bx_1^*}^2 + \frac{1}{2}\sum_{t=1}^{T} (t+a)\mathbb{E}\left[f(\bx_t)-f^* \right]\\
			&\quad +\sum_{t=1}^{T} \frac{4}{\tau^2\mu} \mathbb{E}\left[\norm{\nabla f(\bx_t)}^2\right]+\sum_{t=1}^{T} \frac{4\sigma^2}{\tau^2\mu}~.
		\end{align*}
		We conclude that in that case that $f$ is $L$-Lipschitz, picking $a=0$ we have
		\[
		\sum_{t=1}^{T} t\mathbb{E}\left[f(\bx_t)-f^* \right] \le \frac{\mu}{4}\norm{\bx_1-\bx_1^*}^2 +  \frac{8T}{\tau^2\mu} \left(\mathbb{E}\left[\norm{\nabla f(\bx_t)}^2\right]+\sigma^2\right)~.	
		\]
		Sampling proportional to $t$ yields
		\[
		\E[f(\hat{\bx}_T)-f^\star]\leq \frac{\mu}{2T(T+1)}\norm{\bx_1-\bx_1^*}^2+\frac{16(L^2+\sigma^2)}{\tau^2\mu(T+1)}~.
		\]
		
		For smooth functions, since $\norm{\nabla f(\bx_t)}^2\leq 2L(f(\bx_t)-f^\star)$, we conclude that
		\[
		\sum_{t=1}^{T} \left((t+a)-\frac{t+a}{2}-\frac{8L}{\tau^2\mu}\right)\mathbb{E}\left[f(\bx_t)-f^* \right] \le \frac{\mu(1+a)^2}{8}\norm{\bx_1-\bx_1^*}^2 +  \frac{4T\sigma^2}{\tau^2\mu}~.	
		\]
		Setting $a=\frac{16L}{\tau^2\mu}$, we have 
		\[
		\E[f(\hat{\bx}_T)-f^\star] \le \frac{\mu(1+a)^2}{2T(T+1)}\norm{\bx_1-\bx_1^*}^2 +  \frac{16\sigma^2}{\tau^2\mu(T+1)}~.	
		\]
	\end{proof}

	\section{Proof of Theorem~\ref{thm:dim1}}\label{sec:proof_dim1}
	First, we present the pseudocode for the Algorithm~\ref{algo:n1}.
	\begin{algorithm}
		\caption{ \label{algo:n1} }
		\begin{algorithmic}
			\STATE \textbf{Input}: $ T, D, \mu, L, \sigma, \delta$.
			\STATE Let $n = \ceil*{\log_{4/3}\left(\frac{L\mu D^2T}{192\sigma^2}\right)}$ and $\Delta =  \frac{8}{\mu}\sigma \sqrt{\frac{2n\log(2n(\kappa+1)/\delta)}{T}}$
			\STATE \textbf{Initialize:} Let $a \gets 0$, $b_{\ell} \gets -D/2$, $b_u \gets D/2$.
			\WHILE{$k \le n$}
			\STATE Let $D \gets (a-b_{\ell}) = (b_u-a)$.
			\STATE $I\gets [a-\min\{\Delta/2, D/2\}, a+\min\{\Delta/2, D/2\}]$.
			\STATE Let $\gamma \gets \min\{\Delta, D\}$ be the diameter of $I$.
			\STATE Let $x_1, \dots, x_{\kappa+1}$ be sequence creating a uniform grid for $I$. 
			\STATE Query $\floor*{\frac{T}{n(\kappa+1)}}$ gradients at each point $x_i$ for $i \in \{1,  \dots, \kappa+1\}$.
			\STATE Compute for each $i \in \{1,  \dots, \kappa+1\}$ let $g_{i}$ denote the empirical mean of all the gradients of the points in the set $\mathcal{P}_i$ defined as follows:
			\begin{itemize}
				\item For $i \in \{1, \dots, \kappa/4-1\}$:  $\mathcal{P}_i = \{x_{i}, \dots, x_{i+\kappa/4}\}$.
				\item For $i \in \{\kappa/4, \dots, \kappa/2\}$: $\mathcal{P}_i =\{x_{i}, \dots, x_{3\kappa/4}\}$.
				\item For $i \in \{\kappa/2+1, \dots, 3\kappa/4\}$: $\mathcal{P}_i = \{x_{\kappa/4}, \dots, x_{i}\}$.
				\item For $i \in \{3\kappa/4+1, \dots, \kappa+1\}$: $\mathcal{P}_i =\{x_{i-\kappa/4}, \dots, x_{i}\}$.
			\end{itemize} 
			\STATE Let: $z \in \argmax\limits_{i \in [\kappa/4, 3\kappa/4]}\{\abs{g_i} \}$
			\IF{$ \abs{g_{z}} \le \frac{\mu}{2}  \Delta $}
			\IF{$\max\limits_{i\in [1, \kappa/4-1]}g_{i} < \frac{\mu}{2}\Delta$ and $\min\limits_{i\in [3\kappa/4+1, \kappa+1]} g_{i} \ge -\frac{\mu}{2}\Delta$}
			\STATE Break (out of the while loop)
			\ELSIF {$\max\limits_{i\in [1, \kappa/4-1]}g_{i} \ge \frac{\mu}{2}\Delta$}
			\STATE $b_u \gets a$, $a \gets \frac{1}{2}(b_u+b_{\ell})$, $k \gets k+1$
			\ELSIF{$\min\limits_{i\in [3\kappa/4+1, \kappa+1]} g_{i}<-\frac{\mu}{2}\Delta$}
			\STATE  $b_{\ell} \gets a$, $a \gets \frac{1}{2}(b_u+b_{\ell})$, $k \gets k+1$
			\ENDIF
			\ELSIF{$g_{z} \ge 0$}
			\STATE $b_u \gets a+\gamma/4$, $a \gets \frac{1}{2}(b_u+b_{\ell})$, $k \gets k+1$
			\ELSE
			\STATE $b_{\ell} \gets a-\gamma/4$, $a \gets \frac{1}{2}(b_u+b_{\ell})$, $k \gets k+1$
			\ENDIF
			\ENDWHILE
			\STATE \textbf{return} $a$		
		\end{algorithmic}
	\end{algorithm}
	Below we restate Theorem~\ref{thm:dim1}.
	\begin{theorem}
		Let $f:\R \to \mathbb{R}$ be a $L$-smooth and $\mu$-RSI function.
		Suppose that $T \ge 2(\kappa+1)\log_{4/3}\left( \frac{L\mu^2D^2T}{192\sigma^2}\right)$, and for some minimizer of $f$ we have $\abs{x^*} \le D/2$. Then, given access to $T$ queries from a $\sigma$-subGaussian oracle, the output of Algorithm~\ref{algo:n1} with input $(T, D, \mu, L, \sigma, \delta)$ satisfies with probability at least $1-\delta$
		\[
		f(a)-f(x^*) \le \frac{128 \sigma^2}{\mu T} \log_{4/3}\left(\frac{L\mu D^2T}{192\sigma^2}\right) \log \frac{\kappa \log_{4/3}\left(\frac{L\mu D^2T}{192\sigma^2}\right)}{\delta}~.
		\]
	\end{theorem}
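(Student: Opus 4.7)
The plan is to analyze Algorithm~\ref{algo:n1} as a robust stochastic dichotomic search on $[-D/2,D/2]$, in which at every iteration the aggregates $(g_i)$ certify which half of the current candidate interval contains a minimizer $x^*$ of $f$, and at termination they certify that $f(a)$ is close to $f^*$ via the fundamental theorem of calculus. I will work on a favorable concentration event, maintain a localization invariant by induction, and convert the break condition into the claimed optimization gap.

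Each aggregate $g_i$ averages $|\mathcal{P}_i|N \ge (\kappa/4)N$ sub-Gaussian samples, where $N=\lfloor T/(n(\kappa+1))\rfloor$ and $n=\lceil\log_{4/3}(L\mu D^2T/(192\sigma^2))\rceil$, and is therefore sub-Gaussian with parameter at most $2\sigma/\sqrt{\kappa N}$ around $\bar g_i := \tfrac{1}{|\mathcal{P}_i|}\sum_{x\in\mathcal{P}_i}f'(x)$. A Hoeffding tail bound together with a union bound over the at most $n(\kappa+1)$ aggregates produced in the run gives, with probability at least $1-\delta$, $|g_i-\bar{g}_i|\le \mu\Delta/4$ simultaneously for every aggregate; the algorithm's choice of $\Delta$ is the value that makes this bound tight. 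I then maintain the invariant $x^*\in[b_\ell,b_u]$ by induction, using that in one dimension $\mu$-RSI is equivalent to $f'(x)(x-x^*)\ge \mu(x-x^*)^2$, and so $\sign(f'(x))=\sign(x-x^*)$ whenever $f'(x)\ne 0$. When the left-tail branch fires (some $g_i\ge \mu\Delta/2$ for $i\le\kappa/4-1$), the concentration event gives $\bar g_i>0$, hence some $f'(x)>0$ at a point $x\le \max\mathcal{P}_i<a$, so $x^*<a$ and updating $b_u\gets a$ preserves the invariant; the right-tail case is symmetric, and in the central branch the sign of $g_z$ localizes $x^*$ within the $\gamma/4$-shrunk interval. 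Each non-terminating branch shrinks $b_u-b_\ell$ by a factor of at least $3/4$, so after $n$ iterations the candidate interval has length at most $\Delta$; at that stage $I=[b_\ell,b_u]$ and the break condition must fire.

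When the break fires, I translate it into a bound on $f(a)-f^*$ using the fundamental theorem of calculus. For each aggregate of span $\ell_i\le\Delta/2$ and grid spacing $s\le\Delta/\kappa$, a trapezoidal comparison gives
\[
\bigl|\ell_i\,\bar g_i - \bigl(f(\max\mathcal{P}_i)-f(\min\mathcal{P}_i)\bigr)\bigr| = O(\mu\Delta^2),
\]
where the endpoint correction is controlled by $s\cdot\max_I|f'| = O(s\cdot L\Delta) = O(\mu\Delta^2)$ and the Riemann error by $O(L\ell_i s^2)=O(\mu\Delta^3/\kappa)$. The break conditions cap $|\bar g_i|\le 3\mu\Delta/4$ on a family of aggregates that cover $[b_\ell,b_u]$, and the invariant places $x^*\in[b_\ell,b_u]$, so chaining $O(1)$ such estimates yields $f(a)-f^*=O(\mu\Delta^2)$. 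Substituting $\Delta^2 = O(\sigma^2 n\log(n(\kappa+1)/\delta)/(\mu^2 T))$ from the concentration step produces the claimed bound.

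The hard part will be the inductive maintenance of the localization invariant. The case analysis must be consistent across all three branches, with the threshold $\mu\Delta/2$ simultaneously absorbing the $\mu\Delta/4$ concentration error and guaranteeing that the RSI-based sign rule correctly rules out $x^*$ on the \emph{wrong} side of $a$. The central branch, whose shrinkage is only by $\gamma/4$ rather than $\gamma/2$, is the most delicate; verifying that the specific geometry of the four families $\mathcal{P}_i$ makes each branch respect RSI---and, at termination, that the endpoint and Riemann errors are tight against $\mu\Delta^2$---is the core technical content of the proof.
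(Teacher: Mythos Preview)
Your overall structure---concentration on the aggregates, an inductive localization invariant $\mathcal{S}^*\cap[b_\ell,b_u]\neq\emptyset$, and a Riemann-sum translation of the break condition into a function-value gap---matches the paper's three-part proof, and your forward induction for the invariant is equivalent to the paper's contradiction argument.

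The genuine gap is in your termination analysis. When the break fires at some iteration $k$ with $b_u^{(k)}-b_\ell^{(k)}$ still large, the grid interval $I$ is a \emph{strict} subinterval of $[b_\ell^{(k)},b_u^{(k)}]$ (indeed $|I|=\gamma_k=\min\{\Delta,D_k\}$ while $|[b_\ell,b_u]|=2D_k$, so $I\neq[b_\ell,b_u]$ at every iteration). Your invariant only places a minimizer in $[b_\ell,b_u]$, not in $I$, so the bound $\max_I|f'|=O(L\Delta)$ you invoke is unjustified: if the nearest minimizer sits at distance $D_k\gg\Delta$ from $I$, smoothness only gives $\max_I|f'|=O(LD_k)$, and any attempt to chain from $a$ to $x^*$ picks up a factor of $\kappa$. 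The paper closes this with an additional argument at the start of its Part~3: from the break condition one has $|\bar g_{\kappa/4}|\le\tfrac{3}{4}\mu\Delta$; applying the \emph{quantitative} RSI inequality $|f'(x_i)|\ge\mu\,|x_i-x_{*,i}|$ to every grid point in $\{x_{\kappa/4},\dots,x_{3\kappa/4}\}$, averaging, and using Jensen together with convexity of $\mathcal{S}^*$ forces some minimizer to lie within $\Delta/2$ of $a_k$, hence inside $I$. Your proposal only uses RSI through the sign rule $\sign f'(x)=\sign(x-x^*)$ and never invokes the lower bound $|f'(x)|\ge\mu|x-x^*|$, so this localization-to-$I$ step is missing and the chaining does not go through.

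A secondary omission: you assert that after $n$ iterations the break ``must fire''. The paper does not claim this; it handles the no-break exit separately, bounding $f(a_n)-f^*\le\tfrac{L}{2}D^2(3/4)^n$ directly from smoothness and the shrunk interval, with $n$ chosen so that this matches the $O(\mu\Delta^2)$ bound from the break case.
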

	\begin{proof}
		Suppose that $\kappa$ is a multiple of $4$. Let $\mathcal{X}^*$ denote the set of global minimizers of $f$, recall that following Lemma~\ref{lem:1d-rsi}, $\mathcal{X}^*$ is a convex set, therefore it corresponds to an interval. Since we assumed that for some element $x^* \in \sX^*$ we have $\abs{x^*} \le D/2$, we conclude that the intersection of $\sS^* = \sX^* \cap [-D/2, D/2]$ is an interval.
		First, let us show that the total number of queries made by the algorithm is upper bounded by $T$. In each iteration of the while loop, the algorithm queries at most $\frac{T}{n}$ gradients and the maximum number of iterations is $n$, 
		therefore the total number of queries is at most $T$.
		
		\noindent The proof is organized into 3 parts:
		\begin{itemize}
			\item Part 1: We develop a concentration bound on the empirical means computed.
			\item Part 2: We show that if the procedure performs $k$ iterations for some $k \in \{1, \dots, n\}$, then, with probability at least $1-\delta$, for each $q \in \{1, \dots, k\}$ we have $\mathcal{S}^* \cap [b_{\ell}^{(q)}, b_{u}^{(q)}] \neq \emptyset$.
			\item Part 3: We develop an upper bound on $f(a_k)-f^*$ if the procedure stops after $k$ iterations, for $k \in \{1, \dots, n\}$ and conclude.
		\end{itemize}
		
		\paragraph{Notation:} In iteration  $k$ of the while loop, let $a_k$ denote the value of the variable $a$, $b^{(k)}_{\ell}$ and $b_u^{(k)}$ be the values of $b_{\ell}$ and $b_u$ respectively. Let $\gamma_k = \min\{\Delta, (b_u^{(k)}-b_{\ell}^{(k)})/2\}$ be the value of the variable $\gamma$ at iteration $k$. Let $\mathbb{P}_{k-1}$ denote the conditional probability distribution with respect to the gradients received prior to iteration  $k$, denote by $a_k^*$ the projection of $a_k$ onto the set $\mathcal{S}^*$.
		Let $x_1^{(k)}, \dots, x^{(k)}_{\kappa+1}$ be the uniform grid sequence chosen in Algorithm~\ref{algo:n1} at iteration $k$. For $i \in \{1, \dots, \kappa+1\}$, let $g_i^{(k)}$ denote the value of $g_i$. Let $\lambda_i^{(k)}$ be defined as
		\[
		\lambda^{(k)}_i := \mathbb{E}_{k-1}\left[g_i^{(k)}\right]~.
		\]
		\paragraph{Part 1:}  
		From the assumptions, we have $T \ge 2(\kappa+1)n$. Observe that the number of gradients used to compute $g^{(k)}_{i}$ is at least equal to $\frac{\kappa+4}{4}\floor{\frac{T}{n(\kappa+1)}} \ge \ceil{\frac{T}{4n}}$.
		
		We use the following concentration bound, which is a direct consequence of Lemma~\ref{lem:cher}.
		For each iteration of the while loop $k \in \{1, \dots, n\}$ and for each $i \in \{1, \dots, \kappa+1\}$, we have
		\[
		\mathbb{P}_{k-1}\left( \abs{ g^{(k)}_{i} - \lambda^{(k)}_i} \ge 2\sigma \sqrt{\frac{2n\log(2n(\kappa+1)/\delta)}{T}} \right) \le \frac{\delta}{(\kappa+1) n}~. 
		\]
		Recall that following the expression of $\Delta$ we have $\frac{\mu}{4}\Delta = 2\sigma \sqrt{\frac{2n\log(2n(\kappa+1)/\delta)}{T}}$. Define the event
		\[
		\mathcal{E} := \left\lbrace \exists k \in \{1, \dots, n\}, \exists i \in \{1, \dots, \kappa+1\}: \abs{ g^{(k)}_{i} - \lambda^{(k)}_i} \ge \frac{\mu}{4}\Delta \right\rbrace~.
		\]
		\noindent Therefore, using a union bound, we obtain
		\begin{equation}\label{eq:conc}
			\mathbb{P}\left( \mathcal{E} \right) \le \delta~. 	
		\end{equation}
		\paragraph{Part 2:} Next, let us show the following result: if Algorithm~\ref{algo:n1} makes at least $k$ iterations, if $\neg \mathcal{E}$ holds, then at iteration $k$ we have that $\mathcal{S}^* \cap [b^{(k)}_{\ell}, b_{u}^{(k)}] \neq \emptyset$.
		
		Let $k \in \{1, \dots, n\}$, suppose that the total number of iterations is at least $k$. We use a contradiction argument: suppose that event $\neg \mathcal{E}$ holds, assume that for some $q \in \{2, \dots, k\}$ we have $\mathcal{S}^* \cap [b^{(q)}_{\ell}, b^{(q)}_{u}] = \emptyset$, and let $q$ be the such smallest integer (i.e., $ \mathcal{S}^{*} \cap [b^{(q-1)}_{\ell}, b^{(q-1)}_{u}] \neq \emptyset$). Recall that $\mathcal{S}^*$ is an interval, without loss of generality, assume that $\max \{\mathcal{S}^* \} < b^{(q)}_{\ell}$---the other case, $\min\{\mathcal{S}^*\} > b_u^{(q)}$, can be treated using similar arguments. 
		
		Let $z_{m}$ be the value of $z$ at iteration $q-1$ and $z_{u} \in \argmin_{i \in \{3\kappa/4+1, \dots, \kappa+1\}} \ g_i^{(q-1)}$ and $z_{\ell} \in \argmax_{i \in \{1, \dots, \kappa/4-1\}} \ g_i^{(q-1)}$. Following the procedure, the fact that $\max\{\mathcal{S}^*\}\ge b_{\ell}^{(q-1)}$ and $\max\{\mathcal{S}^*\} < b_{\ell}^{(q)}$ implies that the variable $b_{\ell}$ was updated, which implies that we either have
		\begin{equation}
			E_1 := \left\lbrace g^{(q-1)}_{z_{m}} \le -\frac{\mu}{2}\Delta\right\rbrace\label{eq:cond1}
		\end{equation}
		or
		\begin{equation}
			E_2 := \left\lbrace \abs{g^{(q-1)}_{z_{m}}} \le \frac{\mu}{2}\Delta \text{ and } g^{(q-1)}_{z_{u}} < -\frac{\mu}{2}\Delta\right\rbrace~. \label{eq:cond2}	
		\end{equation}
		\begin{itemize}
			\item Case 1: $E_1$ holds.
			
			Recall that event $\neg \mathcal{E}$ implies in particular that
			\[
			\abs{g^{(q-1)}_{z_{m}} - \lambda^{(q-1)}_{z_{m}}} \le \frac{\mu}{4}\Delta~.
			\]
			Using the bound above with \eqref{eq:cond1}, we conclude that $\lambda_{z_{m}}^{(q-1)} < 0$. Recall that $\lambda_{z_{m}}^{(q-1)}$ is an average of derivatives of a subset of points from $\{x_{\kappa/4}^{(q-1)}, \dots, x_{3\kappa/4}^{(q-1)}\}$, this average being negative implies that there is some point $j \in \{\kappa/4, \dots, 3\kappa/4\}$ such that
			\begin{equation}\label{eq:sign}
				f'(x^{(q-1)}_{j}) < 0~.	
			\end{equation}
			Recall that following the procedure, we have that when $E_1$ happens, then the variable $b_{u}$ is unchanged and $b_{\ell}$ is updated as
			\begin{align}
				b_{\ell}^{(q)} 
				= a_{q-1} - \frac{\gamma_{q-1}}{4}
				= x_{\kappa/2}^{(q-1)} - \frac{x_{\kappa+1}^{(q-1)}-x_1^{(q-1)}}{4}
				\ge x_{\kappa/4}^{(q-1)},\label{eq:bl}
			\end{align}
			where we used in the second line the fact that $\gamma_{q-1}$ is the diameter of the interval $I$ over which the uniform grid is introduced, and in the last line the fact that the last grid is uniform.
			
			\noindent In particular \eqref{eq:bl} implies that we have $x_j^{(q-1)} \ge x_{\kappa/4}^{(q-1)} \ge b_{\ell}^{(q)}$.
			We assumed that $\max \{\mathcal{S}^* \} < b^{(q)}_{\ell}$, since $f'$ is continuous and its roots $\mathcal{S}^*$ are a interval, we conclude that for each $x \ge b_{\ell}^{(q)}$, we have $f'(x) >0$, which contradicts \eqref{eq:sign}.  
			
			\item Case 2: $E_2$ holds.
			
			The event $\neg \mathcal{E}$ gives:
			\[
			\abs{g^{(q-1)}_{z_{u}} - \lambda^{(q-1)}_{z_{u}}} \le \frac{\mu}{4}\Delta.
			\]
			Using the bound above with \eqref{eq:cond2}, we have $\lambda^{(q-1)}_{z_{u}}< 0$.
			Since $\lambda^{(q-1)}_{z_{u}}$ is an average of the derivatives of a subset of $\{x_{\kappa/2}^{(q-1)}, \dots, x_{\kappa+1}^{(q-1)}\}$, we have that necessarily for some point $j \in \{\kappa/2, \dots, \kappa+1\}: f'(x_j^{(q-1)}) < 0$, this contradict the fact that $\max\{\mathcal{S}^*\} < b_{\ell}^{(q)} \le x_{\kappa/2}^{(q-1)} \le x_j^{(q-1)}$ since the last inequalities imply that $f'(x_j^{(q-1)}) > 0$. 
		\end{itemize}
		We conclude that if event $\neg \mathcal{E}$ holds (which happens with probability at least $1-\delta$), we have that for all iterations $k$ reached by the algorithm: $\mathcal{S}^* \cap [b^{(k)}_{\ell}, b^{(k)}_{u}] \neq \emptyset$. 
		
		\paragraph{Part 3:} Assume that the algorithm stops at some iteration $k \in \{1, \dots, n-1\}$. Following the procedure this implies that  $\{\eqref{eq:e0} \text{ and } \eqref{eq:ee0} \text{ and } \eqref{eq:eee0}\}$ hold, where 
		\begin{align}
			&\forall i \in \{\kappa/4, \dots, 3\kappa/4\}: \abs{g_i^{(k)}} \le \frac{\mu}{2}\Delta,	\label{eq:e0}\\
			&\forall i \in \{1, \dots, \kappa/4-1\}: g_i^{(k)} \le \frac{\mu}{2}\Delta,	\label{eq:ee0}\\
			&\forall i \in \{3\kappa/4+1, \dots, \kappa+1\}: g_i^{(k)} \ge -\frac{\mu}{2}\Delta~.	\label{eq:eee0}
		\end{align}
		
		Suppose that $\neg \mathcal{E}$ holds. Recall that we proved in Part $2$ that this implies $\mathcal{S}^* \cap [b_{\ell}^{(k)}, b_{u}^{(k)}] \neq \emptyset$. 
		First let us prove that $\{\eqref{eq:e0} \text{ and } \eqref{eq:ee0} \text{ and } \eqref{eq:eee0}\}$ implies $\mathcal{S}^* \cap [x_1^{(k)}, x_{\kappa+1}^{(k)}]\neq \emptyset$:
		\begin{itemize}
			\item In the case where $b_u^{(k)}-b_{\ell}^{(k)} \le \Delta$, the result is straighforward since we have in this case $[b_{\ell}^{(k)}, b_{u}^{(k)}] = [x_1^{(k)}, x_{\kappa+1}^{(k)}]$ and we showed in Part 2 that $\mathcal{S}^* \cap [b_{\ell}^{(k)}, b_{u}^{(k)}] \neq \emptyset$.
			\item If $b_u^{(k)}-b_{\ell}^{(k)} > \Delta$, we have following \eqref{eq:e0} that $\abs{g^{(k)}_{\kappa/4}} < \frac{\mu}{2}\Delta$, since $\neg \mathcal{E}$ holds, we also have that $\abs{g^{(k)}_{\kappa/4} - \lambda^{(k)}_{\kappa/4}} \le \frac{\mu}{4} \Delta$, therefore
			\begin{align}
				\abs{\lambda_{\kappa/4}^{(k)}}&\le \abs{g^{(k)}_{\kappa/4}}+\frac{\mu}{4}\Delta 
				\le \frac{3}{4}\mu \Delta~.\label{eq:e4}
			\end{align} 
			Using the definition of $\lambda_{\kappa/4}^{(k)}$, we have
			\[
			\lambda_{\kappa/4}^{(k)} = \frac{2}{\kappa+2}\sum_{i=\kappa/4}^{3\kappa/4} f'(x_i^{(k)})~.
			\]
			If the derivates $f'(x_{i}^{(k)})$, for $i \in \{\kappa/4, \dots, 3\kappa/4\}$, don't have the same sign, by continuity of $f'$, it has at least one root in $[x_{\kappa/4}^{(k)}, x_{3\kappa/4}^{(k)}]$, which proves the result. Otherwise, if all the last derivatives have the same sign then
			\begin{equation}\label{eq:e5}
				\abs{\lambda_{\kappa/4}^{(k)}} = \frac{2}{\kappa+2} \sum_{i=\kappa/4}^{3\kappa/4} \abs{f'(x_i^{(k)})}.
			\end{equation}
			Recall that the RSI condition gives for each $i \in \{\kappa/4, \dots, 3\kappa/4\}$: $\mu \abs{x_i^{(k)}-x_{*,i}^{(k)}} \le \abs{f'(x_i^{(k)})}$, where $x_{*,i}^{(k)}$ is the projection of $x_i^{(k)}$ onto $\mathcal{S}^*$, using the last inequality along with \eqref{eq:e4} and \eqref{eq:e5} we obtain
			\[
			\frac{2\mu}{\kappa+2} \sum_{i=\kappa/4}^{3\kappa/4} \abs{x_i^{(k)}-x_{*,i}^{(k)}} \le \mu \Delta~.
			\]
			Using Jensen's inequality, we have
			\[
			\abs{\frac{2}{\kappa+2}\sum_{i=\kappa/4}^{3\kappa/4}x_i^{(k)} - \frac{2}{\kappa+2}\sum_{i=\kappa/4}^{3\kappa/4}x_{*,i}^{(k)}} \le \frac{\Delta}{2}~.
			\]
			Recall that $\frac{2}{\kappa+2}\sum_{i=\kappa/4}^{3\kappa/4}x_i^{(k)} = x^{(k)}_{\kappa/2}$, and $x_*^{(k)}:= \frac{2}{\kappa+2}\sum_{i=\kappa/4}^{3\kappa/4}x_{*,i}^{(k)} \in \mathcal{S}^*$ since $\mathcal{S}^*$ is convex. Therefore: $\abs{x_{\kappa/2}^{(k)}-x_{*}^{(k)}} \le \Delta/2$, using $x_{\kappa/2}^{(k)}-x_{1}^{(k)} = x_{\kappa+1}^{(k)}-x_{\kappa/2}^{(k)}=\Delta/2$, we conclude that $x_*^{(k)} \in [x_1^{(k)}, x_{\kappa+1}^{(k)}]$.
		\end{itemize} 
		We conclude that when $\neg \mathcal{E}$ holds, then: $\mathcal{S}^* \cap [x_1^{(k)}, x_{\kappa+1}^{(k)}] \neq \emptyset$.
		
		Now, we prove bound on the gap $f(a_k)-f^*$. Suppose $\neg \mathcal{E}$ holds, we consider the following cases: 
		\begin{itemize}
			\item If $\mathcal{S}^*\cap [x_{\kappa/4}^{(k)}, x_{3\kappa/4}^{(k)}] \neq \emptyset$. Let $a^*$ be a point from the last intersection. Let $p \in \{\kappa/4, \dots, 3\kappa/4\}$ such that if $a^* < x^{(k)}_{\kappa/2}$ then $p$ is the smallest element of $\{\kappa/4, \dots, 3\kappa/4\}$ such that $x_p^{(k)}\ge a^*$ and if $a^* \ge x_{\kappa/2}^{(k)}$ then $p$ is the largest element of $\{\kappa/4, \dots, 3\kappa/4\}$ such that $a^* \ge x_p^{(k)}$. Observe in both cases we have: $ \abs{a^* - x_p^{(k)}} \le \frac{\gamma_k}{\kappa}$, because of the definition of the sequence $(x_i^{(k)})_{1\le i \le \kappa+1}$ as a uniform grid. 
			
			Suppose that $a^* < x_{\kappa/2}^{(k)}$. We have (recall that $a_k = x_{\kappa/2}^{(k)}$)
			\begin{align}
				f(a_k) - f(a^*) &= f(a_k) - f(x_p^{(k)})+ f(x_p^{(k)})-f(a^*) \nonumber\\
				&\le  f(a_k) - f(x_p^{(k)})+ \frac{L}{2}\abs{a^*-x_p^{(k)}}^2\nonumber\\
				&\le f(a_k) - f(x_p^{(k)})+ \frac{L}{2}\frac{\gamma_k^2}{\kappa^2}\nonumber\\
				&\le f(a_k) - f(x_p^{(k)})+ \frac{\mu}{2\kappa}\Delta^2,\label{eq:f1}
			\end{align}
			where in the second line, we used the smoothness of $f$, and in the last line the definition of $\gamma_k$ giving $\gamma_k \le \Delta$. Let us upper bound the first term in the last expression. Recall that $x_p^{(k)}\ge a^*$, therefore $f'(x)\ge 0$ for $x \ge x_{p}^{(k)}$, thus $f$ is non-decreasing on $[x_p^{(k)}, x_{3\kappa/4}^{(k)}]$. We have (recall that we assumed $a^* < x_{\kappa/2}^{(k)}$): 
			\begin{align}
				f(a_k)-f(x_p^{(k)}) &\le f(x_{3\kappa/4}^{(k)})-f(x_p^{(k)})\nonumber\\
				&= \int_{x_{p}^{(k)}}^{x_{3\kappa/4}^{(k)}}f'(x)dx\nonumber\\
				&\le \frac{x^{(k)}_{3\kappa/4}-x^{(k)}_{p}}{\frac{3\kappa}{4}-p} \sum_{i=p}^{3\kappa/4}f'(x^{(k)}_{i})+\frac{L}{2} \frac{\left(x^{(k)}_{3\kappa/4}-x^{(k)}_{p}\right)^2}{\frac{3\kappa}{4}-p}\nonumber\\
				&= \left(\frac{3\kappa}{4}-p+1\right)\frac{\gamma_k}{\kappa}\cdot \frac{1}{\frac{3\kappa}{4}-p+1}\sum_{i=p}^{3\kappa/4}f'(x^{(k)}_{i})+ \frac{L}{2} \left(\frac{3\kappa}{4}-p\right) \frac{\gamma_k^2}{\kappa^2},\label{eq:ref1}
			\end{align}
			where we used Lemma~\ref{lem:tech} in the third line. Next, observe that $3\kappa/4-p \le \kappa/4$ (following the definition of $p$ and since we assumed $a^* < x_{\kappa/2}^{(k)}$), and that by definition of $(\lambda_i^{(k)})_i$: $\lambda_p^{(k)} = \frac{1}{\frac{3\kappa}{4}-p+1} \sum_{i=p}^{3\kappa/4}f'(x_i^{(k)})$, and that $\gamma_k \le \Delta$. Therefore we have:
			\begin{equation}\label{eq:f0}
				f(a_k)-f(x_p^{(k)}) \le \frac{5}{4} \Delta\cdot \lambda_p^{(k)}+ \frac{\mu}{8}\Delta^2.
			\end{equation}
			We have $p \in \{\kappa/4, \dots, \kappa/2\}$, therefore using \eqref{eq:e0}: $\abs{g_p^{(k)}} \le \frac{\mu}{2}\Delta$, since $\neg \mathcal{E}$ holds we have $\lambda_p^{(k)} \le \abs{g_p^{(k)}}+\frac{\mu}{2}\Delta\le \frac{3}{4}\mu \Delta$. Using the last bound in \eqref{eq:f0}, we have:
			\begin{equation}\label{eq:f2}
				f(a_k)-f(x_p^{(k)}) \le \frac{17}{16}\mu \Delta^2,
			\end{equation}
			combining the last bound with \eqref{eq:f1} we conclude that
			\[
			f(a_k)-f^* \le \frac{25}{16}\mu \Delta^2,
			\]
			The case where $a^* \ge x_{\kappa/2}^{(k)}$ can be considered using the same steps.
			\item If $\mathcal{S}^*\cap [x_{\kappa/4}^{(k)}, x_{3\kappa/4}^{(k)}] = \emptyset$ and $x_1^{(k)} \le a^* \le x_{\kappa/4}^{(k)}$: let $p$ be the smallest integer in $\{1,\dots, \kappa/4\}$ such that $a^* \le x_p^{(k)}$. We follow similar steps as in the last case:
			we have for all $x \ge x_{p}^{(k)}: f'(x) \ge 0$. We use similar development to the one that led to \eqref{eq:ref1}:
			\begin{align*}
				f(x_{3\kappa/4}^{(k)})-f(x_{\kappa/4}^{(k)}) &= \int_{x_{\kappa/4}^{(k)}}^{x_{3\kappa/4}^{(k)}} f'(x)dx\\
				&\le  \left(\frac{3\kappa}{4}-\frac{\kappa}{4}+1\right)\frac{\gamma_k}{\kappa}\cdot \frac{1}{\frac{3\kappa}{4}-\frac{\kappa}{4}+1}\sum_{i=\frac{\kappa}{4}}^{3\kappa/4}f'(x^{(k)}_{i})+ \frac{L}{2} \left(\frac{3\kappa}{4}-\frac{\kappa}{4}\right) \frac{\gamma_k^2}{\kappa^2}\\
				&\le \left(\frac{\kappa}{2}+1\right)\frac{\Delta}{\kappa} \lambda_{\kappa/4}^{(k)}+ \frac{L\kappa}{4}\frac{\Delta^2}{\kappa^2}\\
				&\le \frac{3}{2}\Delta \cdot \frac{3}{4}\mu \Delta+ \frac{\mu}{4}\Delta^2 \le \frac{11}{8}\mu \Delta^2.
			\end{align*}
			Therefore, we have
			\begin{align}
				f(a_k) - f(x_{p}^{(k)}) &= f(x_{\kappa/2}^{(k)})-f(x_{\kappa/4}^{(k)})+f(x_{\kappa/4}^{(k)})-f(x_p^{(k)})\nonumber\\
				&\le f(x_{3\kappa/4}^{(k)})-f(x_{\kappa/4}^{(k)}) + f(x_{p+\kappa/4}^{(k)})-f(x_p^{(k)})\nonumber\\
				&\le \frac{11}{8}\mu \Delta^2+ f(x_{p+\kappa/4}^{(k)})-f(x_p^{(k)}),\label{eq:f5}
			\end{align}
			Next let us upper-bound $f(x_{p+\kappa/4}^{(k)})-f(x_p^{(k)})$. We have the following similar steps:
			\begin{align*}
				f(x_{p+\kappa/4}^{(k)})-f(x_p^{(k)}) &= \int_{x_p^{(k)}}^{x_{p+\kappa/4}^{(k)}} f'(x)dx\\
				&\le \frac{x^{(k)}_{p+\kappa/4}-x^{(k)}_{p}}{\kappa/4} \sum_{i=p}^{p+\kappa/4}f'(x^{(k)}_{i})+\frac{L}{2} \frac{\left(x^{(k)}_{p+\kappa/4}-x^{(k)}_{p}\right)^2}{\kappa/4}\\
				&= \left(\frac{\kappa}{4}+1\right)\frac{\gamma_k}{\kappa}\cdot \frac{1}{\frac{\kappa}{4}+1}\sum_{i=p}^{p+\kappa/4}f'(x^{(k)}_{i})+ \frac{L}{2} \frac{\kappa}{4} \frac{\gamma_k^2}{\kappa^2}\\
				&\le \frac{5}{4}\Delta \cdot \lambda_p^{(k)}+ \frac{L}{8\kappa}\gamma_k^2\\
				&\le \frac{\Delta}{4}\lambda_p^{(k)}+ \frac{\mu}{8}\Delta^2.
			\end{align*}
			Now we use \eqref{eq:ee0} with $\neg \mathcal{E}$ which gives $\lambda_p^{(k)} \le g_p^{(k)}+\frac{\mu}{4}\Delta \le \frac{3}{4}\mu \Delta$, therefore we have
			\begin{equation}\label{eq:f6}
				f(x_{p+\kappa/4}^{(k)})-f(x_p^{(k)}) \le \frac{5}{16}\mu \Delta^2~.
			\end{equation}
			Finally, we use $f(x_p^{(k)})-f(a^*) \le \frac{L}{2}(x_p^{(k)}-a^*)^2 \le \frac{L}{2} \frac{\gamma_k^2}{\kappa^2} \le \frac{\mu}{2\kappa}\Delta^2$ combined with \eqref{eq:f5} and \eqref{eq:f6} to have:
			\[
			f(a_k)-f^* \le 2\mu \Delta^2~.
			\]
			\item If $\mathcal{S}^*\cap [x_{\kappa/4}^{(k)}, x_{3\kappa/4}^{(k)}] = \emptyset$ and $x_{3\kappa/4}^{(k)} \le a^* \le x_{\kappa+1}^{(k)}$, we can follow the same steps as the last case. 
		\end{itemize}
		
		\noindent As a conclusion, when the algorithm halts at an iteration $k \in \{1, \dots, n-1\}$, if $\neg \mathcal{E}$ holds:
		\begin{align}
			f(a_k)-f^* 
			\le 2\mu \Delta^2
			\le \frac{128}{\mu}\sigma^2 \frac{n\log(2n\kappa/\delta)}{T},\label{eq:g0}
		\end{align}
		where we used the definition of $\Delta$.
		Otherwise, suppose that the algorithm does $n$ iteration. Following the procedure the value of $(b_u-b_{\ell})$ shrinks by a factor of at most $3/4$ in each iteration. Recall that with probability at least $1-\delta$ we have $\mathcal{S}^* \cap [b_{\ell}^{(n)}, b_{u}^{(n)}]\neq \emptyset$, let $a^*$ be an element of the last intersection. We have using smothness:
		\begin{align}
			f(a_k)-f^* \le \frac{L}{2} (a^*-a_k)^2
			\le \frac{L}{2} D^2 \left(\frac{3}{4}\right)^n~.\label{eq:g1}
		\end{align}
		Combining \eqref{eq:g0} and \eqref{eq:g1}, with probability at least $1-\delta$, we have
		\[
		f(a_k)-f^* \le \max\left\lbrace \frac{128}{\mu}\sigma^2 \frac{n\log(2n\kappa/\delta)}{T}, \frac{L}{2} D^2 \left(\frac{3}{4}\right)^n \right\rbrace,
		\]
		and using the expression of $n$ we have
		\[
		f(a_k)-f^* \le \frac{128 \sigma^2}{\mu T} \log_{4/3}\left(\frac{L\mu D^2T}{192\sigma^2}\right) \log \frac{\kappa \log_{4/3}\left(\frac{L\mu D^2T}{192\sigma^2}\right)}{\delta}~. 
		\]
	\end{proof}
	
	\begin{lemma}\label{lem:tech}
		Let $g$ be an $L$-Lipschitz function on an interval $I\subset \R \to \R$. Let $a<b$ such that $[a,b] \subset I$. For $n \ge 2$, let $a_1, \dots, a_n$ be a sequence defined by: $a_1 = a$ and $a_i = a+\frac{i-1}{n-1}(b-a)$. We have:
		\[
		\abs{\int_{a}^b g(x)dx - \frac{b-a}{n-1}\sum_{i=1}^{n} g(a_i)} \le \frac{L}{2} \frac{(b-a)^2}{n-1}~.
		\]
	\end{lemma}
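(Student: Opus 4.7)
The plan is to partition $[a,b]$ into the $n-1$ equal subintervals $[a_i, a_{i+1}]$ of length $h := (b-a)/(n-1)$ and to control the deviation on each one using the Lipschitz hypothesis. Writing $g(x) = g(a_i) + (g(x) - g(a_i))$ on $[a_i, a_{i+1}]$, I would arrive at the identity
\[
\int_a^b g(x)\,dx \;-\; h\sum_{i=1}^{n} g(a_i) \;=\; \sum_{i=1}^{n-1}\int_{a_i}^{a_{i+1}} \bigl(g(x) - g(a_i)\bigr)\,dx \;-\; h\,g(a_n).
\]
The first sum on the right is controlled by $|g(x) - g(a_i)| \le L(x-a_i)$ together with $\int_{a_i}^{a_{i+1}}(x-a_i)\,dx = h^2/2$, giving the clean bound $L(n-1)h^2/2 = L(b-a)^2/(2(n-1))$, which exactly matches the right-hand side of the stated inequality. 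This handles everything except the extra boundary contribution.

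The main obstacle is that leftover term $-h\,g(a_n)$. Under the Lipschitz hypothesis alone there is no way to absorb it into $L(b-a)^2/(2(n-1))$: the test case $g \equiv c$ constant (which is $0$-Lipschitz, hence $L$-Lipschitz for every $L\ge 0$) gives left-hand side $|c|(b-a)/(n-1)$, whereas the right-hand side is $L(b-a)^2/(2(n-1))$, and the former exceeds the latter whenever $|c| > L(b-a)/2$. So the stated two-sided inequality cannot be proved as written; any argument that claims to would implicitly be proving a different statement.

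To salvage the lemma I would adopt one of two repairs. Option (a): restrict the sum to $\sum_{i=1}^{n-1}$, in which case the identity above with its last term removed proves the stated bound directly. Option (b): keep the sum running to $n$ but add the hypothesis $g \ge 0$ on $[a,b]$; then $-h\,g(a_n)\le 0$ and the identity gives the \emph{one-sided} bound
\[
\int_a^b g(x)\,dx \;\le\; h\sum_{i=1}^{n} g(a_i) \;+\; \frac{L(b-a)^2}{2(n-1)},
\]
while the matching lower bound follows from discarding the first (non-negative) term $h\,g(a_1)$ instead. Option (b) is exactly what is required where the lemma is invoked in the proof of Theorem~\ref{thm:dim1}: there $g=f'$ on an interval to the right of a minimizer, so $g\ge 0$, and only the upper-bound direction is actually used. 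The two-sided absolute-value form in the statement is therefore stronger than what is needed, and the correct proof supplies whichever of the two repaired versions the caller requires.
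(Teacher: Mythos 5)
Your proposal is correct, and its core estimate is exactly the paper's: partition $[a,b]$ into the $n-1$ subintervals $[a_i,a_{i+1}]$ of length $h=(b-a)/(n-1)$ and bound $\int_{a_i}^{a_{i+1}}\abs{g(x)-g(a_i)}\,dx\le Lh^2/2$, summing to $L(b-a)^2/(2(n-1))$. The difference is that you caught a genuine slip that the paper's proof passes over: its first displayed \emph{equality} rewrites $\frac{b-a}{n-1}\sum_{i=1}^{n}g(a_i)$ as $\sum_{i=1}^{n-1}\int_{a_i}^{a_{i+1}}g(a_i)\,dx$, which silently discards the term $\frac{b-a}{n-1}\,g(a_n)$; so the paper really proves the left-Riemann-sum version (your option (a)), and the two-sided statement with the sum running to $n$ is indeed false in general -- your constant-function counterexample is valid. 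Your audit of the call site is also accurate: in the proof of Theorem~\ref{thm:dim1} the lemma is invoked only in the direction $\int \le \mathrm{sum} + \mathrm{error}$, with $g=f'\ge 0$ on the relevant interval (points to the right of $x_p^{(k)}$, resp.\ $x_{\kappa/4}^{(k)}$, where $f$ is non-decreasing), so either of your repairs -- summing to $n-1$ and adding back the nonnegative endpoint term, or the one-sided bound under $g\ge 0$ -- yields precisely the inequalities used there. In short, there is no gap in your argument; you reproduced the paper's proof of the correct (repaired) statement and additionally identified the endpoint error in the statement as written.
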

	\begin{proof}
		We have
		\begin{align*}
			\abs{\int_{a}^{b} g(x)dx - \frac{b-a}{n-1}\sum_{i=1}^{n} g(a_i)} &= \abs{\sum_{i=1}^{n-1}\int_{a_i}^{a_{i+1}} g(x)dx - \int_{a_i}^{a_{i+1}} g(a_i)dx}\\
			&\le \sum_{i=1}^{n-1} \int_{a_i}^{a_{i+1}} \abs{g(x)-g(a_i)}dx\\
			&\le \sum_{i=1}^{n-1} \int_{a_i}^{a_{i+1}} L(x-a_i)dx\\
			&= \frac{L}{2} \sum_{i=1}^{n-1} (a_{i+1}-a_i)^2\\
			&= \frac{L}{2} \frac{(b-a)^2}{n-1}~. 
		\end{align*}
	\end{proof}
	
	\begin{lemma}\label{lem:cher}
		Let $X_1, \dots, X_T$ be a sequence of independent variables that are $\sigma$-subGaussian with means $\E[X_i] = \mu_i$. Let $\hat{X}_T = \frac{1}{T}\sum_{t=1}^{T}X_t$ and $\mu = \frac{1}{T}\sum_{t=1}^{T}\mu_t$. Then, for any $\delta \in (0,1)$, with probability at least $1-\delta$, we have
		\[
		\abs{\hat{X}_T - \mu } \le \sigma\sqrt{\frac{2\log(2/\delta)}{T}}~.
		\]
	\end{lemma}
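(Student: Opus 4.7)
The plan is to reduce the statement to the standard sub-Gaussian Chernoff bound applied to the centered average $\hat{X}_T - \mu = \tfrac{1}{T}\sum_{t=1}^T (X_t - \mu_t)$. First, I would note that each $X_t - \mu_t$ is a centered $\sigma$-sub-Gaussian random variable, so its moment generating function satisfies $\E[\exp(\lambda(X_t-\mu_t))] \le \exp(\lambda^2 \sigma^2/2)$ for all $\lambda \in \R$. By independence across $t$, the MGF of $S_T := \sum_{t=1}^T (X_t - \mu_t)$ factorizes and is bounded by $\exp(\lambda^2 T \sigma^2/2)$, which shows that $S_T$ is $\sqrt{T}\sigma$-sub-Gaussian. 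Scaling by $1/T$ then yields $\E[\exp(\lambda(\hat{X}_T - \mu))] \le \exp(\lambda^2 \sigma^2/(2T))$, i.e., $\hat{X}_T - \mu$ is $\sigma/\sqrt{T}$-sub-Gaussian.

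With that in hand, I would apply Chernoff's method: for any $t>0$ and any $\lambda>0$,
\[
\Pr(\hat{X}_T - \mu \ge t) \le \exp(-\lambda t)\, \E[\exp(\lambda(\hat{X}_T - \mu))] \le \exp\!\Bigl(-\lambda t + \frac{\lambda^2 \sigma^2}{2T}\Bigr),
\]
and then optimize in $\lambda$ by setting $\lambda = Tt/\sigma^2$, which gives $\Pr(\hat{X}_T - \mu \ge t) \le \exp(-Tt^2/(2\sigma^2))$. The same argument applied to $-(X_t-\mu_t)$ (which is also $\sigma$-sub-Gaussian) yields the matching left-tail bound. Taking a union bound over the two tails and setting $t = \sigma\sqrt{2\log(2/\delta)/T}$ makes each tail probability at most $\delta/2$, giving the stated two-sided deviation bound with failure probability at most $\delta$.

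Since this is a textbook Hoeffding-type concentration inequality for averages of independent sub-Gaussian variables, I do not anticipate any serious technical obstacle. The only sensitivity point is fixing the convention used for ``$\sigma$-sub-Gaussian'' (namely the MGF bound $\E[\exp(\lambda X)] \le \exp(\lambda^2\sigma^2/2)$) so that the constants line up cleanly with the factor of $\sqrt{2}$ that appears in the final rate.
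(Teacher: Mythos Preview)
Your proposal is correct and is exactly the standard Chernoff/Hoeffding argument for averages of independent sub-Gaussian variables. The paper itself states this lemma without proof, treating it as a well-known result, so there is no alternative approach to compare against; your derivation is the canonical one and the constants match the stated bound under the usual MGF convention $\E[\exp(\lambda(X-\E X))]\le \exp(\lambda^2\sigma^2/2)$.
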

	
	\begin{lemma}\label{lem:1d-rsi}
		Let $g$ be a differentiable function in $R$; suppose $g$ is $\mu$-RSI and has a non-empty set of global minimizes $\sS^*$, then $\sS^*$ is a convex set. 
	\end{lemma}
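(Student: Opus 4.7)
Since convex subsets of $\R$ are precisely intervals, the plan is to show that if $x_1, x_2 \in \sS^*$ with $x_1 < x_2$, then the entire segment $[x_1, x_2]$ lies in $\sS^*$. I would argue by contradiction: suppose there exists $y \in (x_1, x_2)$ with $g(y) > g^*$. Since $g$ is differentiable, hence continuous, the restriction of $g$ to the compact interval $[x_1, x_2]$ attains its maximum at some $z \in [x_1,x_2]$. Because $g(x_1) = g(x_2) = g^* < g(y) \le g(z)$, the maximum is attained in the interior, so $z \in (x_1, x_2)$ and Fermat's rule gives $g'(z) = 0$.

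The key step is then to invoke the $\mu$-RSI condition at the critical point $z$. Let $z_p$ denote the projection of $z$ onto $\sS^*$. Then
\[
0 \;=\; g'(z)(z - z_p) \;\ge\; \mu\,(z - z_p)^2 \;\ge\; 0,
\]
which forces $z = z_p \in \sS^*$, and therefore $g(z) = g^*$. This contradicts $g(z) \ge g(y) > g^*$, completing the argument.

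I do not anticipate a serious obstacle: the statement is essentially the observation that in one dimension RSI rules out strictly interior critical points that are not global minimizers, and the ``gap'' scenario $x_1, x_2 \in \sS^*$ with some $y \in (x_1,x_2) \notin \sS^*$ always creates such a critical point via the intermediate value argument above. The only mild subtlety is to make sure we apply RSI at a point where $g'$ vanishes, which is precisely why we locate the interior maximum $z$ rather than working with $y$ directly.
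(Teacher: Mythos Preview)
Your proposal is correct and follows essentially the same approach as the paper: locate an interior critical point in a putative ``gap'' of $\sS^*$ and apply the RSI inequality there to force a contradiction. The only cosmetic difference is that the paper invokes Rolle's theorem (using $g(x_1)=g(x_2)=g^*$) to produce the interior point with vanishing derivative, whereas you take the interior maximum and apply Fermat's rule; the remaining RSI step is identical.
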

	\begin{proof}
		Since $g$ is continuous, we only need to worry about there exists a closed interval $E$ such that only the endpoints are global minimizers. Since $g$ is differentiable on $E^o$, from Rolle's theorem, there exists a point $c$ in $E^o$ such that $g^\prime(c)=0$. However, $g$ is $\mu-RSI$,
		\[0=\langle g^\prime(c),c-c_p\rangle\geq\mu\norm{c-c_p}^2>0,\]
		which is a contradiction.
	\end{proof}
	
	\begin{lemma}\label{lem:rsi-qc}
		Let $f:\R \to \R$ be $\tau-\QC$ and $\mu-\QG$. Then $f$ is $(\mu \tau/2)-\RSI$.
	\end{lemma}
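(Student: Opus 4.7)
The plan is to chain the two defining inequalities directly. The $\tau$-QC hypothesis gives, for every $x \in \R$,
\[
\langle \nabla f(x),\, x - x_p \rangle \;\ge\; \tau\bigl(f(x) - f^*\bigr),
\]
while the $\mu$-QG hypothesis gives
\[
f(x) - f^* \;\ge\; \frac{\mu}{2}\,\|x - x_p\|^2~.
\]
Combining these two bounds yields $\langle \nabla f(x), x - x_p \rangle \ge (\mu\tau/2)\,\|x - x_p\|^2$, which is exactly the $(\mu\tau/2)$-RSI inequality.

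The only point worth checking is that the projection $x_p$ appearing in both premises refers to the same object. Both QC and QG are stated with $x_p$ defined as the projection onto the set of global minimizers of $f$, which is assumed convex throughout the paper, so $x_p$ is well-defined and identical in the two inequalities; no separate argument is needed in dimension one since a convex subset of $\R$ is an interval. Consequently there is no genuine obstacle: the lemma follows from a one-line composition of the two inequalities, with the QG step serving only to convert the suboptimality gap produced by QC into the squared distance required by RSI.
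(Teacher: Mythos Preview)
Your proof is correct and is essentially identical to the paper's own argument: both chain the $\tau$-QC inequality with the $\mu$-QG inequality in one line to obtain the $(\mu\tau/2)$-RSI bound. Your added remark about $x_p$ being the same projection in both hypotheses is a reasonable clarification but not something the paper bothers to spell out.
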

	\begin{proof}
		Let $x \in R$, denote $x_p$ its projection on the set of minimizers of $f$. Using the fact that $f$ is $\tau-\QC$ then $\mu-\QG$, we have
		\begin{align*}
			f'(x) (x-x_p) \ge \tau (f(x)-f^*)
			\ge \frac{\mu\tau}{2} (x-x_p)^2~.
		\end{align*}
	\end{proof}
	
\end{document}